\documentclass[runningheads]{llncs}
\usepackage[T1]{fontenc}
\usepackage{graphicx}
\usepackage{booktabs}
\usepackage[misc]{ifsym}
\newcommand{\corr}{(\Letter)}
\usepackage{amsmath,amssymb,amsfonts}
\usepackage{mathtools}
\usepackage{algorithmic}
\usepackage{algorithm}
\usepackage{textcomp}
\usepackage{xurl}

\usepackage{hyperref}
\usepackage{microtype}
\usepackage{multirow}
\usepackage{subcaption}
\usepackage{xcolor}
\usepackage{enumitem}

\usepackage{array}

\newcommand{\BMmod}[1]{#1}

\graphicspath{{plots/}{../plots/}}

\newcommand{\RR}{\mathbb{R}}
\newcommand{\1}{\mathbf{1}}
\DeclareMathOperator{\diag}{diag}
\DeclareMathOperator{\rank}{rank}
\DeclareMathOperator{\tr}{tr}
\DeclareMathOperator{\KL}{KL}

\DeclareMathOperator{\Retr}{Retr}
\newcommand{\ip}[2]{\langle #1,#2\rangle}

\newcommand{\Mbal}{\mathcal{M}_{\mathrm{bal}}}
\newcommand{\Mub}{\mathcal{M}_{\mathrm{ub}}}

\makeatletter
\renewcommand\paragraph{\@startsection{paragraph}{4}{\z@}%
  {-12\p@ \@plus -4\p@ \@minus -4\p@}%
  {-0.5em \@plus -0.22em \@minus -0.1em}%
  {\normalfont\normalsize\bfseries}}
\makeatother

\begin{document}

% \title{Riemannian Optimization for\\ Low-Rank Optimal Transport}

% \titlerunning{Riemannian Optimization for Low-Rank Optimal Transport}

\title{A Riemannian Approach to\\ Low-Rank Optimal Transport}

\titlerunning{A Riemannian Approach to Low-Rank Optimal Transport}
\toctitle{A Riemannian Approach to Low-Rank Optimal Transport}   % <-- add

\author{Pratik Jawanpuria\inst{1} \corr \and Bamdev Mishra\inst{2}}

\authorrunning{Jawanpuria and Mishra}
\tocauthor{Pratik Jawanpuria, Bamdev Mishra}                     % <-- add

\institute{
Centre for Machine Intelligence and Data Science, IIT Bombay \\
\email{pratik.jawanpuria@iitb.ac.in}
\and
Microsoft India \\
\email{bamdevm@microsoft.com}
}

\maketitle

% ======================================================================
%  ABSTRACT
% ======================================================================
\begin{abstract}
Low-rank optimal transport (OT) mitigates the quadratic scaling of classical solvers, yet existing approaches rely heavily on first-order mirror-descent updates that require careful hyperparameter tuning and ignore the optimization landscape's curvature. To address these limitations, we propose a unified Riemannian geometric framework for low-rank OT, modeling balanced and unbalanced rank-$r$ positive factored couplings as novel smooth embedded submanifolds of the positive orthant. By equipping these manifolds with the Fisher-Rao product metric, we derive tractable formulations for Riemannian projectors, retractions, and Hessian-vector products. Our cost-agnostic framework seamlessly extends to linear OT, Gromov-Wasserstein (GW), fused GW, and their unbalanced counterparts. For balanced OT, our geometric ingredients are computed via efficient conjugate-gradient and iterative Bregman updates. For the unbalanced OT, our operations elegantly reduce to closed-form scalings, completely eliminating inner iterative loops. In both regimes, per-iteration computations scales linearly with dataset size. For convex transport costs, we further derive a rank-sufficiency certificate with a computable duality-gap bound
for certifying global optimality. Across a range of problem sizes, our regularization-free first- and second-order solvers converge faster to competitive or better solutions than state-of-the-art low-rank OT solvers.

\keywords{Gromov--Wasserstein \and Fisher--Rao geometry \and Manifold optimization \and Trust-region method \and Unbalanced transport}
\end{abstract}

% ======================================================================
%  1. INTRODUCTION
% ======================================================================
\section{Introduction}
\label{sec:intro}
Optimal transport (OT) provides a mathematically rigorous framework for comparing and aligning probability distributions~\cite{Villani2009,PeyreBook2019}. At its core, the classical Kantorovich formulation~\cite{kantorovich1942transfer} seeks a joint distribution -- or \textit{coupling} -- that matches a source measure to a target measure while minimizing the total expected cost of transporting mass between them. Since it explicitly incorporates the geometry of the underlying space through a ground cost function, OT defines a metric (the Wasserstein distance) that handles non-overlapping supports better than information-theoretic divergences like the Kullback-Leibler divergence. Consequently, OT has established itself as a cornerstone technique across a wide spectrum of machine learning applications, driving critical advances in supervised and unsupervised learning~\cite{frogner15a,Vayer2019,jawanpuria21a}, generative modeling \cite{arjovsky2017wgan,Genevay2018LearningGM}, domain adaptation \cite{Courty2017OTDA,Courty17domAda,nath20a,manupriya24a,jawanpuria25a}, natural language processing~\cite{alvarez18wordEmb,jawanpuria20a}, shape interpolation~\cite{peyre2016gromov,han22a}, and targeted representative sampling~\cite{gurumoorthy21a,chanda26a}. 
Much of this widespread adoption was catalyzed by entropic regularization~\cite{Cuturi2013}, which smooths the (linear) OT problem and admits the highly parallelizable Sinkhorn algorithm~\cite{knight2008sinkhorn}. This enabled OT to be employed as a practical loss function for large-scale learning.
%Optimal transport (OT) provides a principled way to compare and couple probability distributions~\cite{Villani2009,PeyreBook2019}, with applications spanning control~\cite{ChenGeorgiouPavon2021}, estimation~\cite{PeyreBook2019}, machine learning~\cite{Cuturi2013,Flamary2021}, and robotics~\cite{Solomon2015}. 

However, even with entropic regularization, OT retains an $O(mn)$ bottleneck as it requires  computing a dense coupling matrix between $m$ source and $n$ target points.
% However, classical discrete OT scales quadratically, requiring an $O(mn)$ memory and computation to construct a dense coupling matrix between $m$ source and $n$ target points. 
To circumvent this, a natural remedy is to constrain the transport plan to a low-dimensional structure. In this regard, one line of work seeks a \textit{sparse} transport plan, in which each source routes mass to only a few targets so that the coupling has far fewer than $mn$ non-zero entries. Since the entropic plan is inherently dense, this is achieved by replacing the entropy with a squared-$\ell_2$ or (structured) sparse cardinality penalty~\cite{Blondel2018SmoothAS,manupriya24b}. This is motivated by the classical result that the unregularized OT problem always admits a sparse optimal solution (a vertex of the transportation polytope) with at most $m+n-1$ non-zero entries~\cite{PeyreBook2019}. 
A complementary line of work, which we pursue here, constrains the transport plan globally rather than entry-wise. Instead of limiting how many entries are non-zero, it limits the rank of the coupling matrix, requiring all mass to be routed through a small set of $r\ll \min(m,n)$ shared latent hubs: each source first sends its mass to these hubs, which in turn redistribute it to the targets~\cite{Forrow2019,ScetbonCuturiPeyre2021}. This structural prior reduces the parameter space to $O(r(m+n))$ while naturally extending to the quadratic Gromov-Wasserstein (GW) setting~\cite{ScetbonPeyreCuturi2022} and unbalanced OT regimes where strict marginal constraints are relaxed~\cite{ScetbonSejourne2023}.

Despite their conceptual elegance, optimizing these low-rank balanced and unbalanced OT formulations remains challenging. State-of-the-art solvers such as LOT~\cite{ScetbonCuturiPeyre2021}, LR-GW~\cite{ScetbonPeyreCuturi2022}, and UB-LOT~\cite{ScetbonSejourne2023} rely heavily on alternating mirror descent updates coupled with Dykstra or Bregman projections. 
%Alternatively, Frank--Wolfe schemes like FRLC~\cite{Halmos2024} utilize log-domain Sinkhorn sub-routines (\alertbyPJ{which sacrifices computational efficiency for numerical robustness}). 
Consequently, they are susceptible to ill-conditioning, require careful tuning of step-sizes or entropic regularization, and cannot exploit the local curvature of the optimization landscape to accelerate convergence.
% Overall, the existing low-rank optimal transport  approaches are inherently first-order methods. Hence, they are susceptible to ill-conditioning, often require careful tuning of step-sizes or entropic regularization, and crucially, cannot exploit the curvature of the optimization landscape to accelerate convergence. 

%Existing methods---LOT~\cite{ScetbonCuturiPeyre2021} for linear OT, LR-GW~\cite{ScetbonPeyreCuturi2022} for GW, and UB-LOT~\cite{ScetbonSejourne2023} for unbalanced OT---use mirror descent with Dykstra projections 
% \alertbyPJ{they can also use Bregman projections}; 

% all are fast but enforce marginals only approximately \alertbyPJ{This enforcing marginal constraint should not be discussed} and are limited to first-order updates. FRLC~\cite{Halmos2024} uses Frank--Wolfe with log-domain Sinkhorn sub-problems. None of these methods can exploit curvature (Hessian) information.

In this work, we bridge this gap by introducing a unified \textit{Riemannian geometric framework} for low-rank OT. We establish that the set of rank-$r$ positive factored couplings forms a smooth embedded submanifold $\mathcal{M}$ of the positive orthant. By equipping $\mathcal{M}$ with the Fisher-Rao product metric~\cite{AmariNagaoka2000,Modin2017},  we derive explicit, tractable formulations for Riemannian projectors, retractions, and Hessian-vector products. This enables robust first- and second-order solvers, such as conjugate gradients (CG) and trust-regions (TR)~\cite{AbsilMahonySepulchre2008,Boumal2023}. 
% This specific geometric choice unlocks multiple computational advantages: it yields explicit, tractable formulations for Riemannian projectors, retractions, and Hessian-vector products. Consequently, we enable the deployment of robust first- and second-order solvers, such as conjugate gradients (CG) and trust-regions (TR) \cite{AbsilMahonySepulchre2008,Boumal2023}, to the low-rank OT problem landscape.
% 
% We take a {geometric} approach. We show that the set of \alertbyPJ{it is technically $\leq r$?} rank-$r$ positive factored couplings forms a smooth embedded submanifold $\mathcal{M}$ of a positive orthant, in both the balanced (prescribed marginals) and unbalanced (penalized marginals) settings. Equipping $\mathcal{M}$ with the Fisher--Rao product metric~\cite{AmariNagaoka2000,Modin2017} yields all ingredients needed for Riemannian optimization---projector, retraction, gradient, and Hessian-vector products---enabling steepest descent (SD), conjugate gradient (CG), and trust-region (TR) solvers~\cite{AbsilMahonySepulchre2008,Boumal2023}. All manifold ingredients are {cost-agnostic}: switching between linear OT, GW, fused GW, and unbalanced variants requires changing only the Euclidean gradient and Hessian-vector product.
% 
% \paragraph{Contributions.}
The main contributions of this work are summarized as follows.
\begin{enumerate}[nosep]
    \item \textbf{Riemannian formulation for low-rank OT:} We formalize the feasible set of positive factored couplings as a smooth manifold for both balanced and unbalanced OT settings, deriving all necessary first- and second-order Riemannian ingredients under the Fisher-Rao metric. 
  \item \textbf{Closed-form expressions for unbalanced OT:} We show that our tangent space projectors and retractions for unbalanced OT reduce to closed-form expressions, entirely eliminating the inner Dykstra/Bregman loops required by existing approaches~\cite{ScetbonSejourne2023}.
  \item \textbf{A cost-agnostic framework:} Our geometric ingredients are independent of the ground cost. Transitioning between linear OT, GW, fused GW, or their unbalanced variants simply requires modifying the Euclidean gradient and Hessian-vector product.
  %\item The framework is cost-agnostic: linear OT, GW, fused GW, and their unbalanced variants all share the same manifold machinery (Sect.~\ref{sec:costs}).
  \item \textbf{Empirical efficiency:} 
   Experiments up to $n=50\,000$ demonstrate that our Riemannian solvers converge in an order-of-magnitude fewer outer iterations than mirror-descent approaches, achieving state-of-the-art performance without step-size tuning. 
  % Through extensive experiments up to $n=50\,000$, we demonstrate that our Riemannian solvers converge in an order-of-magnitude fewer outer iterations than existing mirror-descent based approaches, without requiring step-size tuning or entropic regularization. For unbalanced OT, this translates to state-of-the-art performance.
  %\item Experiments at scales up to $n=50\,000$ show competitive cost, with Riemannian solvers requiring far fewer outer iterations than mirror-descent baselines. For balanced OT, wall-clock time is sometimes comparable or even faster for LOT, but in the unbalanced case, the Riemannian approach is consistently much faster in both iteration count and wall-clock time. No entropic regularization or step-size tuning is needed (Sect.~\ref{sec:experiments}). \alertbyPJ{Discussion needs to improve. Currently, the last point looks every verbose and not at the level of NeurIPS.}
\end{enumerate}
% Proofs of the theoretical results, experimental details, and additional experiments are provided in the extended version~\cite{jawanpuria26a}.

\section{Preliminaries and Related works}\label{sec:related}
\paragraph{Optimal transport and entropic regularization.} Let $\Delta_k = \{p \in \mathbb{R}^k_{++} \mid p^\top \mathbf{1}_k = 1\}$ denote the probability simplex. Consider a discrete source distribution supported on $m$ points $\{x_i\}_{i=1}^m$ with associated probability weights $a \in \Delta_m$, and a discrete target distribution supported on $n$ points $\{y_j\}_{j=1}^n$ with weights $b \in \Delta_n$, with $x_i,y_j\in\RR^d$. 
Given a cost matrix $C \in \mathbb{R}^{m \times n}$, where $C_{ij}$ represents the ground cost of moving a unit of mass from $x_i$ to $y_j$, the classical discrete Kantorovich optimal transport (OT) problem seeks a coupling $\Gamma$ that minimizes the total transport cost:
\begin{equation}\label{eqn:linearOT}
    \begin{array}{c}
         \min_{\Gamma \in \Omega(a,b)} \langle C, \Gamma \rangle,
    \end{array}
\end{equation}
% $$\min_{\Gamma \in \Pi(a,b)} \langle C, \Gamma \rangle,$$
where $\Omega(a,b) = \{\Gamma \in \mathbb{R}^{m \times n}_+ \mid \Gamma \mathbf{1}_n = a, \Gamma^\top \mathbf{1}_m = b\}$ is the set of joint distributions between $a$ and $b$. 
%Solving this linear program scales as $O(n^3 \log n)$ when $m=n$, which is computationally prohibitive for large datasets. 
Solving the above linear program (\ref{eqn:linearOT}) is computationally prohibitive for large datasets. Hence, Cuturi~\cite{Cuturi2013} introduced entropic regularization $H(\Gamma) = -\sum_{i,j} \Gamma_{ij} (\log \Gamma_{ij} - 1)$ to the objective:
\begin{equation}\label{eqn:entropicOT}
    \begin{array}{c}
         \min_{\Gamma \in \Omega(a,b)} \langle C, \Gamma \rangle - \varepsilon H(\Gamma).
    \end{array}
\end{equation}
% $$\min_{\Gamma \in U(a,b)} \langle C, \Gamma \rangle - \varepsilon H(\Gamma).$$
The entropic regularization smooths the OT objective and yields a unique optimal coupling of the form $\Gamma = \mathrm{diag}(u) K \mathrm{diag}(v)$, where $u\in\RR^m,v\in\RR^n,K\in\RR^{m\times n}, $ and $K_{ij} = \exp(-C_{ij}/\varepsilon)$. The scaling vectors $u$ and $v$ can be efficiently computed using the Sinkhorn algorithm \cite{knight2008sinkhorn}. While highly parallelizable, the Sinkhorn algorithm still requires computing and storing the dense matrix $K$, leaving a $O(mn)$ bottleneck in memory and computation. Efficient low-rank approximations of $K$ have been proposed in this regard~\cite{altschuler2019massively}.

\paragraph{Low-rank OT.}
Beyond computational scaling, classical OT suffers from a severe curse of dimensionality: in $d$-dimensional spaces, the empirical Wasserstein distance converges to its population counterpart at a slow rate of $O(n^{-1/d})$. To circumvent both the computational and statistical bottlenecks, Forrow et al.~\cite{Forrow2019} introduced factored couplings. Geometrically, this approach posits that mass transport between distributions occurs through a small number of latent ``hubs'' or modes, effectively projecting the transport plan onto a lower-dimensional structure. 
%This structural prior not only improves sample complexity and generalization guarantees but also yields highly interpretable transport plans. 
Building on \cite{Forrow2019}, Scetbon et al.~\cite{ScetbonCuturiPeyre2021} formalized the search space of couplings with marginals $a\in\Delta_m$ and $b\in\Delta_n$ and a bounded non-negative rank $r \ll \min(m,n)$ as $\Omega_r(a,b) = \{ \Gamma \in \mathbb{R}^{m \times n}_+ \mid \Gamma \mathbf{1} = a,\, \Gamma^\top \mathbf{1} = b,\, \mathrm{rank}_+(\Gamma) \leq r \}$. Any such coupling $\Gamma \in \Omega_r(a,b)$ can be explicitly factored~\cite{cohen1993nonnegative} as: 
\begin{equation}\label{eqn:gamma}
    \begin{array}{c}
    \Gamma = U \mathrm{diag}(g)^{-1} V^\top,
    \end{array}
\end{equation}
where $U \in \mathbb{R}^{m \times r}_+$ and $V \in \mathbb{R}^{n \times r}_+$ are sub-couplings that share a strictly positive latent marginal $g = U^\top \mathbf{1} = V^\top \mathbf{1} \in \mathbb{R}^r_{++}$. Here, $\mathrm{diag}(g)$ is the diagonal matrix corresponding to the vector $g$. 
%This factorization neatly isolates the low-rank structure. The exact low-rank OT formulation is then dictated purely by the treatment of the outer marginals. 
In the balanced setting~\cite{ScetbonCuturiPeyre2021,ScetbonPeyreCuturi2022}, the sub-couplings must strictly reconstruct the data marginals, imposing the hard constraints $U \mathbf{1}_r = a$ and $V \mathbf{1}_r = b$. Conversely, in the unbalanced setting~\cite{ScetbonSejourne2023}, these strict equalities are dropped, allowing the row sums of $U$ and $V$ to deviate from $a$ and $b$ at the cost of Kullback--Leibler (KL) divergence penalties added to the objective~\cite{ChizatPSV18}. 
%Motivated by this structural dichotomy, we now formally define the low-rank coupling manifolds for both the balanced and unbalanced regimes.

% Building on \cite{Forrow2019}, Scetbon et al.~\cite{ScetbonCuturiPeyre2021} formulated the low-rank optimal transport problem by explicitly parameterizing the coupling through a set of $r \ll \min(m,n)$ latent modes. Specifically, the coupling is decomposed as $\Gamma = U \mathrm{diag}(g)^{-1} V^\top,$ where $U \in \RR^{m \times r}_+$ and $V \in \RR^{n \times r}_+$ represent the assignments from the source and target points to the latent modes, respectively, and $g = U^\top \mathbf{1} = V^\top \mathbf{1} \in \mathbb{R}^r_+$ represents the marginal mass distribution across these modes. This positive factorization reduces the degrees of freedom from $mn$ to $r(m+n)$. 

Despite the structural advantages of this factorization, the corresponding optimization remains challenging. To address this, Scetbon et al.~\cite{ScetbonCuturiPeyre2021} proposed the Low-Rank Optimal Transport (LOT) algorithm, which employs a mirror descent (MD) optimization scheme in its outer loop and relies on Dykstra's algorithm or iterative Bregman projections to solve the inner updates. This two-factor MD architecture serves as the foundation for several extensions, including LR-GW~\cite{ScetbonPeyreCuturi2022} for the quadratic Gromov--Wasserstein problem, and UB-LOT~\cite{ScetbonSejourne2023} for the unbalanced setting. The MD-based paradigm also extends to the recently proposed FRLC algorithm by Halmos et al.~\cite{Halmos2024}. FRLC decouples the transport plan into three matrices, rather than the two-factor form above. Notably, existing low-rank OT approaches are first-order methods. Hence, they are susceptible to ill-conditioning and are unable to leverage local curvature information, motivating the need for a geometric perspective.

% Forrow et al.~\cite{Forrow2019} introduced factored couplings $\Gamma = UDV^\top$ for Wasserstein barycenters \alertbyPJ{I dont think its technically correct}, showing that rank-$r$ structure is both statistically and computationally beneficial.
% Scetbon, Cuturi \& Peyr\'e~\cite{ScetbonCuturiPeyre2021} proposed LOT (Low-Rank Sinkhorn), which optimizes factored couplings via mirror descent with Dykstra inner projections \alertbyPJ{or Bregman}; LOT is fast and GPU-friendly \alertbyPJ{need not highlight GPU} but is limited to first-order updates.
% Scetbon et al.~\cite{ScetbonPeyreCuturi2022} extended LOT to GW (LR-GW) using the same mirror-descent structure; LR-GW requires dense $n{\times}n$ intra-cost matrices, limiting it to moderate~$n$ \alertbyPJ{What is this intra-cost matrices?}. 
% Halmos et al.~\cite{Halmos2024} introduced FRLC, a Frank--Wolfe method with latent-coupling factorizations and log-domain Sinkhorn sub-problems; FRLC optimizes an entropy-regularized surrogate, yielding higher unregularized cost but exact marginals in the limit of its inner solver.
% Scetbon et al.~\cite{ScetbonSejourne2023} addressed the unbalanced setting with mirror descent.
% All these methods are limited to first-order updates.

\paragraph{Riemannian optimization on matrix manifolds.} Riemannian optimization generalizes unconstrained optimization to smooth manifolds~\cite{AbsilMahonySepulchre2008,Boumal2023}. By equipping a manifold $\mathcal{M}$ with a Riemannian metric (an inner product on the tangent space $T_x \mathcal{M}$), one can define gradients and Hessians intrinsically. Iterative algorithms then proceed by computing a search direction in the tangent space and using a retraction mapping $\Retr_x : T_x \mathcal{M} \to \mathcal{M}$ to pull the update back onto the manifold $\mathcal{M}$. While the geometry of structured matrices is well-explored -- such as Riemannian preconditioners for standard low-rank matrices~\cite{MishraSepulchre2016} or  the geometric study of doubly stochastic matrices  \cite{DouikHassibi2019,shi2021coupling} -- the choice of metric is critical. Euclidean metrics often lead to poor conditioning near the boundaries of positive orthants and probability simplices. Conversely, the Fisher-Rao metric naturally captures the intrinsic geometry of multinomial manifolds~\cite{sun2015heterogeneous}. By equipping the space of low-rank factored couplings with a Fisher-Rao product metric, we derive explicit retractions and Hessian-vector products that bypass the limitations of first-order mirror descent.

% In this work, we synthesize these perspectives: we equip the space of low-rank factored couplings with a Fisher-Rao product metric, allowing us to compute explicit closed-form retractions and Hessian-vector products that bypass the limitations of first-order mirror descent.

% Optimization on matrix manifolds~\cite{AbsilMahonySepulchre2008,Boumal2023} has been applied to structured problems including low-rank factorizations with preconditioning~\cite{MishraSepulchre2016} and doubly stochastic matrices~\cite{DouikHassibi2019}.
% Sun et al.~\cite{sun2015heterogeneous} equipped the multinomial manifold with the Fisher information metric and derived second-order ingredients for a trust-region solver in tensor clustering.
% Douik \& Hassibi~\cite{DouikHassibi2019} extended this geometry to doubly stochastic matrices; our construction generalizes further to the three-block coupling structure $(U,V)$.

% \paragraph{Entropic OT.}
% Sinkhorn-based methods~\cite{Cuturi2013,Altschuler2017,Benamou2015} regularize the full-rank problem; our approach is complementary, controlling rank directly rather than smoothness. \alertbyPJ{Could be merged with low-rank OT para. Requires better discussion.}

% ======================================================================
%  2. LOW-RANK COUPLING MANIFOLD
% ======================================================================
\section{The Geometry of Low-Rank Couplings} %Low-Rank Coupling Manifold 
\label{sec:manifold} 
As discussed in Section~\ref{sec:related}, a low-rank coupling $\Gamma$ may be modeled in terms of sub-couplings ($U,V$) that share strictly positive latent marginal distribution (\ref{eqn:gamma}). This factorization neatly isolates the low-rank structure. The exact search space of low-rank couplings is then dictated purely by the treatment of the outer marginals ($a$ and $b$), which leads to the balanced and unbalanced OT settings. Motivated by this structural dichotomy, we now formally define the low-rank coupling manifolds for both the regimes.

\subsection{Low-rank Coupling Manifolds}
Let $\mathcal{A} := \RR_{++}^{m \times r} \times \RR_{++}^{n \times r}$ denote the ambient positive orthant. Building on the factorization (\ref{eqn:gamma}), we propose the following low-rank coupling manifolds for the balanced and unbalanced OT settings: 
\begin{equation}\label{eq:Mbal} %\label{eq:Mub}
\begin{array}{ll}
  \Mbal &\coloneqq \bigl\{(U,V) \in \mathcal{A} : U\1{=}a,\; V\1{=}b,\; U^\top\!\1{=}V^\top\!\1 \bigr\}, \\
  \Mub  &\coloneqq \bigl\{(U,V) \in \mathcal{A} : U^\top\!\1{=}V^\top\!\1 \bigr\}. 
\end{array}
\end{equation}
The balanced manifold $\mathcal{M}_{\mathrm{bal}}$ strictly enforces both the inner latent consistency and the outer data marginals. In contrast, the unbalanced manifold $\mathcal{M}_{\mathrm{ub}}$ relaxes the outer marginal constraints entirely, requiring only that the sub-couplings agree on the latent mass distribution. This yields the natural inclusion $\mathcal{M}_{\mathrm{bal}} \subset \mathcal{M}_{\mathrm{ub}} \subset \mathcal{A}$.  
Table~\ref{tab:manifold_comparison} summarizes the structural differences between the two geometries.

% \begin{table}[t]
% \centering
% \caption{Balanced vs. unbalanced manifold: constraints, dimension, and tangent space.}
% \label{tab:manifold_comparison}
% \small
% \begin{tabular}{@{}l |l|l@{}}
% \toprule
% & \textbf{Balanced $\Mbal$} & \textbf{Unbalanced $\Mub$} \\
% \midrule
% Equality constraints & $U\1{=}a$, $V\1{=}b$, $U^\top \1{=}V^\top\1$ & $U^\top \1{=}V^\top\1$ \\
% % \# scalar constraints\ \  & $m{+}n{+}r{-}1$ (one redundancy) & $r$ (no redundancy) \\
% Dimension & $(m{+}n{-}1)(r{-}1)$ & $(m{+}n{-}1)\,r$ \\
% % $r{=}1$ & Degenerate (single point $ab^\top$) & $(m{+}n{-}1)$-dimensional \\
% % \midrule
% Tangent space & $\dot U\1{=}0$, $\dot V\1{=}0$, ${\dot U}^\top\1{=}{\dot V}^\top\1$ & ${\dot U}^\top\1{=}{\dot V}^\top\1$ \\
% % Marginals & Fixed: $\Gamma\1{=}a$, $\Gamma^\top\!\1{=}b$ & Free: $\Gamma\1{=}U\1$, $\Gamma^\top\!\1{=}V\1$ \\
% \bottomrule
% \end{tabular}
% \end{table}

\begin{theorem}[Manifold structure]\label{thm:manifold_bal}
Let $a \in \Delta_m$, $b \in\Delta_n$, and $r \geq 2$.
\begin{enumerate}[nosep]
  \item $\Mbal$ is a non-empty, embedded smooth submanifold of $\mathcal{A}$ with $\dim(\Mbal) = (m{+}n{-}1)(r{-}1)$.
  \item $\Mub$ is a non-empty, embedded smooth submanifold of $\mathcal{A}$ with $\dim(\Mub) = (m{+}n{-}1)\,r$.
\end{enumerate}
\end{theorem}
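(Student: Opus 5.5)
Both parts follow from the regular level set theorem. We realize each set as the zero set, inside the open set $\mathcal{A}\subset\RR^{m\times r}\times\RR^{n\times r}$, of a smooth (indeed linear) map $F$. We then show that $F$ has constant rank, or is a submersion after removing redundant constraints, at every point of the set, and exhibit a point to prove non-emptiness. Since the constraints are affine, $\Mbal$ and $\Mub$ are intersections of $\mathcal{A}$ with affine subspaces. Hence it suffices to compute the dimension of each affine subspace and show that its intersection with the open orthant is non-empty; an open subset of an affine subspace is automatically an embedded submanifold of that dimension. The rank computations below are still useful, because they identify the tangent spaces as kernels of the linear maps.

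\textbf{Unbalanced case.} Take $F_{\mathrm{ub}}(U,V)=U^\top\1_m-V^\top\1_n\in\RR^r$. Its differential is itself. It is surjective, since $\dot U=e_1 c^\top$ (first row $c^\top$, zeros elsewhere) with $\dot V=0$ realizes any $c\in\RR^r$. The kernel therefore has dimension $(m+n)r-r=(m+n-1)r$. For non-emptiness, take $U=\tfrac1m\1_m\1_r^\top$ and $V=\tfrac1n\1_n\1_r^\top$; both have all column sums equal to $1$.

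\textbf{Balanced case.} Take $F_{\mathrm{bal}}(U,V)=(U\1_r-a,\;V\1_r-b,\;U^\top\1_m-V^\top\1_n)\in\RR^{m}\times\RR^n\times\RR^r$.

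The main step is computing the rank of the linear part $L(\dot U,\dot V)=(\dot U\1_r,\dot V\1_r,\dot U^\top\1_m-\dot V^\top\1_n)$. There is exactly one redundancy, because $\1_m^\top(\dot U\1_r)-\1_n^\top(\dot V\1_r)=\1_r^\top(\dot U^\top\1_m-\dot V^\top\1_n)$ always holds. Consequently the image lies in the hyperplane $H=\{(x,y,z):\1^\top x-\1^\top y=\1^\top z\}$, which has dimension $m+n+r-1$.

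To show surjectivity onto $H$, fix $(x,y,z)\in H$ and build a preimage explicitly, using $r\ge2$:
\begin{enumerate}[nosep]
\item Set $\dot V=y e_1^\top$ (so $y$ sits in column $1$). Then $\dot V\1_r=y$ and $\dot V^\top\1_n=(\1^\top y)e_1$.
\item We now need $\dot U$ with row sums $x$ and column sums $w:=z+(\1^\top y)e_1$. This is solvable precisely when $\1^\top x=\1^\top w$, and that is exactly the condition defining $H$.
\item Solve it concretely: start from $\dot U=xe_1^\top$, then correct the column sums by adding matrices $e_1(e_j-e_1)^\top$. These have zero row sums, and they shift mass between columns $1$ and $j$.
\end{enumerate}

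The rank of $L$ is therefore $m+n+r-1$, and $\dim\Mbal=(m+n)r-(m+n+r-1)=(m+n-1)(r-1)$. The redundancy holds identically on all of $\mathcal{A}$ (using $\1^\top a=\1^\top b=1$), so the set is the preimage of a point under a constant-rank map, equivalently a submersion onto the image hyperplane. The constant-rank level set theorem then gives an embedded submanifold.

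Non-emptiness for $\Mbal$ is easy but must produce strictly positive entries. Take $U=\tfrac1r a\1_r^\top$ and $V=\tfrac1r b\1_r^\top$. Then $U\1_r=a$, $V\1_r=b$, and $U^\top\1=V^\top\1=\tfrac1r\1_r$; all entries are positive because $a,b$ are in the open simplex.

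The step requiring the most care is identifying the single redundancy and proving there are no others. This is where $r\ge2$ enters: for $r=1$ the set collapses to a point and the dimension formula correctly gives $0$, but the construction above uses two columns.
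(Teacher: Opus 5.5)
Your proposal is correct and follows the same route as the paper. Both arguments realize each set as a level set of the affine constraint map, isolate the single mass-conservation redundancy in the balanced case, and invoke the constant-rank (regular) level set theorem to get the dimensions $(m{+}n)r-(m{+}n{+}r{-}1)$ and $(m{+}n)r-r$, with the same positive witnesses $\tfrac1r a\1^\top,\tfrac1r b\1^\top$ and $\tfrac1m\1\1^\top,\tfrac1n\1\1^\top$; your explicit preimage construction goes further by actually proving there is no second redundancy and that $F_{\mathrm{ub}}$ is surjective, which the paper only asserts. One minor quibble: your construction never needs two columns (for $r=1$ no column corrections occur and the rank is still $m+n$), so $r\ge 2$ is not used anywhere in the argument and only ensures the manifold has positive dimension.
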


\paragraph{Tangent space.} The tangent spaces at a point $(U,V)$ are obtained by differentiating the respective constraint equations, i.e.,
% yielding the kernels of the constraint Jacobians:
\begin{equation}\label{eq:tangent_bal} %\label{eq:tangent_ub}
\begin{array}{ll}
  T_{(U,V)}\Mbal &= \{(\dot U, \dot V) \in \mathbb{R}^{m \times r} \times \mathbb{R}^{n \times r} \mid \dot U\1{=}0,\; \dot V\1{=}0,\; {\dot U}^\top\1{=}{\dot V}^\top\1 \}, \\
  T_{(U,V)}\Mub  &= \{(\dot U, \dot V) \in \mathbb{R}^{m \times r} \times \mathbb{R}^{n \times r} \mid {\dot U}^\top\1{=}{\dot V}^\top\1 \}. 
\end{array}
\end{equation}
We note that $T_{(U,V)}\Mbal \subset T_{(U,V)}\Mub$. 
% The unbalanced tangent space is strictly larger: it restricts only the column sums of the tangent vectors to match, placing no restrictions on their row sums.
%The unbalanced tangent space is strictly larger: it does not constrain the row sums of $\dot U$ or $\dot V$, only their column sums. The dimension gap $(m{+}n{-}1)$ between $\Mub$ and $\Mbal$ corresponds to the freed marginals.

%\label{sec:geometry}

%We equip both manifolds with the same Fisher--Rao product metric and derive the Riemannian ingredients, highlighting the dramatic simplifications in the unbalanced case.

\subsection{Fisher-Rao Product Metric}
To perform Riemannian optimization, we need to equip the ambient space $\mathcal{A}$ with a Riemannian metric. Given the positivity constraints on our factors, the Euclidean metric often leads to poor conditioning near the boundary. Instead, we adopt the Fisher-Rao metric, which naturally captures the information geometry of positive orthants~\cite{AmariNagaoka2000,Modin2017}. We define the product metric on the ambient space $\mathcal{A}$ at a point $(U,V)$ for tangent vectors $\xi = (\xi_U, \xi_V)$ and $\eta = (\eta_U, \eta_V)$ as:
%On $\RR_{++}^d$, the Fisher--Rao metric is $g_x(\xi,\eta) = \sum_i \xi_i \eta_i / x_i$. The product metric on $(U,V)$ is
\begin{equation}
\langle \xi, \eta \rangle_{(U,V)} := \sum_{i,k} \frac{(\xi_U)_{ik} (\eta_U)_{ik}}{U_{ik}} + \sum_{j,k} \frac{(\xi_V)_{jk} (\eta_V)_{jk}}{V_{jk}}.
  \label{eq:metric}
\end{equation}
We note that the Fisher-Rao metric corresponds to the Hessian of the Kullback-Leibler (KL) divergence and it yields highly tractable geometric operations for our problem setting, as studied in the next sections. 

\subsection{Orthogonal Tangent Projector and Retraction}
Riemannian optimization requires two fundamental operations: an orthogonal projection $\Pi Z$ to map ambient vectors $Z = (Z_U, Z_V)$ onto the tangent space, and a retraction $\Retr_{(U,V)} : T_{(U,V)}\mathcal{M} \to \mathcal{M}$ to pull tangent updates back onto the manifold. Both ingredients take qualitatively different forms depending on whether the outer marginal constraints are strictly enforced. Table~\ref{tab:geom_comparison} summarizes these operations, highlighting the computational simplifications achieved in the unbalanced setting.

\paragraph{Balanced projector.} Let $\odot$ and $\oslash$ denote the Hadamard (element-wise) product and division. Under the Fisher--Rao metric, the orthogonal projection $\Pi Z$ onto $T_{(U,V)}\mathcal{M}_{\mathrm{bal}}$ removes the normal component:
\begin{equation}\label{eq:projUV_bal}
\begin{array}{l}
  (\Pi Z)_U = Z_U - U \odot (\alpha\1^\top + \1\gamma^\top), \ \ \  (\Pi Z)_V = Z_V - V \odot (\beta\1^\top - \1\gamma^\top), 
\end{array}
\end{equation}
where the Lagrange multipliers $\alpha \in \mathbb{R}^m$, $\beta \in \mathbb{R}^n$, and $\gamma \in \mathbb{R}^r$ are found by solving the KKT system:
\begin{equation}
  \begin{bmatrix} \diag(a) & 0 & U \\ 0 & \diag(b) & -V \\ U^\top & -V^\top & 2\diag(s) \end{bmatrix}
  \begin{bmatrix} \alpha \\ \beta \\ \gamma \end{bmatrix}
  =
  \begin{bmatrix} Z_U \1 \\ Z_V \1 \\ \1^\top\! Z_U - \1^\top\! Z_V \end{bmatrix},
  \label{eq:KKT}
\end{equation}
with $s = U^\top\1 = V^\top\1$. While this system has a rank deficiency of one (as the global sum constraint is redundant), the projection $\Pi Z$ itself is unique. By fixing the gauge $\gamma^\top \mathbf{1} = 0$, the block system can be reduced to solving an $(r{-}1) \times (r{-}1)$ positive-definite Schur complement $S = 2\mathrm{diag}(s) - U^\top \mathrm{diag}(a)^{-1} U - V^\top \mathrm{diag}(b)^{-1} V$.

%The system has rank $m{+}n{+}r{-}1$, but $\Pi Z$ is unique (the null vector $(-\1_m, \1_n, \1_r)$ maps to zero correction). Fixing the gauge $\gamma^\top\!\1 = 0$ yields a reduced $(r{-}1) \times (r{-}1)$ positive-definite Schur complement $S = 2\diag(s) - U^\top \diag(a)^{-1} U - V^\top \diag(b)^{-1} V$.

\begin{remark}\label{rem:cg}
The Schur system can be solved iteratively via conjugate gradients at cost $O(k_{\mathrm{CG}}\,r(m{+}n))$ per projection.
Since the condition number $\kappa(S_{\mathrm{red}})$ remains small in practice (empirically 1.2--1.3; see Appendix~\ref{sec:app_retraction}), $k_{\mathrm{CG}}$ is small.
\end{remark}

\paragraph{Unbalanced projector.} We note that relaxing the outer marginals decouples the row constraints. Hence, the orthogonal projection onto $T_{(U,V)}\mathcal{M}_{\mathrm{ub}}$ simplifies to a pure closed-form evaluation: 
%The projection onto $T_{(U,V)}\Mub$~\eqref{eq:tangent_ub} is
\begin{equation}
  (\Pi Z)_U = Z_U - U \odot (\1\gamma^\top) \textup{\ \ and\ \ }
  (\Pi Z)_V = Z_V + V \odot (\1\gamma^\top), 
  \label{eq:proj_ub}
\end{equation}
where $\gamma = (Z_U^\top\1 - Z_V^\top\1)\oslash(2s)$. 
No Schur complement, gauge fixing, or iterative solver is needed, reducing the exact projection cost to $O(r(m{+}n))$.

\begin{table}[t]
\centering
\caption{Balanced vs. unbalanced manifold: constraints, dimension, tangent space, and Riemannian ingredients. $N = m{+}n$; $k_{\mathrm{CG}}$: Schur CG iterations; $L_{\mathrm{sink}}$: Bregman iterations; $\odot$ and $\oslash$ denote element-wise product and division.}
\label{tab:manifold_comparison}
\label{tab:geom_comparison}
\small
\begin{tabular}{@{}l|c|c@{}}
\toprule
& \textbf{Balanced $\Mbal$} & \textbf{Unbalanced $\Mub$} \\
\midrule
Eq. constraints & $U\1{=}a$, $V\1{=}b$, $U^\top \1{=}V^\top\1$ & $U^\top \1{=}V^\top\1$ \\
% \# scalar constraints\ \  & $m{+}n{+}r{-}1$ (one redundancy) & $r$ (no redundancy) \\
Dimension & $(m{+}n{-}1)(r{-}1)$ & $(m{+}n{-}1)\,r$ \\
% $r{=}1$ & Degenerate (single point $ab^\top$) & $(m{+}n{-}1)$-dimensional \\
% \midrule
Tangent space & $\{(\dot U,\dot V)\mid\dot U\1{=}0$, $\dot V\1{=}0$, ${\dot U}^\top\1{=}{\dot V}^\top\1\}$ & $\{(\dot U,\dot V)\mid{\dot U}^\top\1{=}{\dot V}^\top\1\}$ \\
\midrule
\textbf{Projector} &  \\
\multirow{2}{*}{Formula} & $(\Pi Z)_U = Z_U - U \odot (\alpha\1^\top {+} \1\gamma^\top)$ (\ref{eq:projUV_bal}) & $(\Pi Z)_U = Z_U - U \odot (\1\gamma^\top)$ (\ref{eq:proj_ub}) \\
& $(\Pi Z)_V = Z_V - V \odot (\beta\1^\top - \1\gamma^\top)$ (\ref{eq:projUV_bal})& $(\Pi Z)_V = Z_V + V \odot (\1\gamma^\top)$ (\ref{eq:proj_ub}) \\
\cmidrule{2-3}
\multirow{2}{*}{Solver} & CG solver for  the Schur system & $\gamma = (Z_U^\top\1 {-}  Z_V^\top\1)\oslash(2s)$,  \\
 & involving $\alpha,\beta,\gamma$ (\ref{eq:KKT}) & Closed-form expression \\
 \cmidrule{2-3}
 Complexity & $O(k_{\mathrm{CG}}\,rN)$, $k_{\mathrm{CG}}$ inner iterations    & $O(rN)$ \\
\midrule
\textbf{Retraction}  \\
 \multirow{2}{*}{Mirror step} & \multicolumn{2}{c}{$\bar{U} = U \odot \exp(t\,\Xi_U \oslash U)$, \quad $\bar{V} = V \odot \exp(t\,\Xi_V \oslash V)$} \\
 & \multicolumn{2}{c}{(shared formula for $\Mbal,\Mub$)} \\
\cmidrule{2-3}
 KL projection & Cyclic Bregman (Algorithm~\ref{alg:sinkhorn})  & Single column scaling~\eqref{eq:retr_ub} \\
\cmidrule{2-3}
 Complexity & $O(L_{\mathrm{sink}}\,rN)$, $L_{\mathrm{sink}}$ inner iterations   & $O(rN)$\\
\midrule
\textbf{Gradient} & \multicolumn{2}{c}{$\mathrm{grad}\,f = \Pi(U \odot \nabla_U f,\; V \odot \nabla_V f)$ (\ref{eq:rgrad}), (different $\Pi$ for $\Mbal,\Mub$)} \\
\midrule
\textbf{Hessian-vec} & \multicolumn{2}{c}{$\mathrm{Hess}\,f[\Xi] = \Pi(\tfrac{1}{2}\Xi \odot \nabla_E f + W \odot \nabla_E^2 f[\Xi])$ (\ref{eq:hess_impl}), (different $\Pi$)} \\
\midrule
\textbf{Overall per-} & \multirow{2}{*}{$O((k_{\mathrm{CG}}{+}L_{\mathrm{sink}})\,rN)$ }& \multirow{2}{*}{${O(rN)}$ (no inner iterations)} \\
\textbf{iteration cost} & \\
\bottomrule
\end{tabular}
\end{table}

\paragraph{Retraction.} 
A valid retraction must map a tangent vector $t\Xi$ back to the manifold while preserving the strictly positive structure of the factors. For both manifolds, we achieve this via a two-step procedure. First, we apply a Fisher-consistent exponential mirror step: 
\begin{equation}
  \bar{U} = U \odot \exp(t\,\Xi_U \oslash U), \quad \bar{V} = V \odot \exp(t\,\Xi_V \oslash V).
  \label{eq:mirror}
\end{equation}
Second, we project the intermediate factors $(\bar{U}, \bar{V})$ onto the respective manifold by minimizing the KL divergence:  ${\KL(U \| \bar{U}) + \KL(V \| \bar{V})}$. 
%preserves strict positivity and is first-order consistent with the Fisher geometry. The resulting $(\bar{U}, \bar{V})$ is then projected back to $\mathcal{M}$ by minimizing $\KL(U \| \bar{U}) + \KL(V \| \bar{V})$.

For $\Mbal$, this requires cyclic Bregman projections onto $\{U\1{=}a\}$, $\{V\1{=}b\}$, and  $\{s(U)=s(V)\}$, where $s(U) = U^\top\!\1$ and $s(V) = V^\top\!\1$. This is detailed in Algorithm~\ref{alg:sinkhorn}, which converges linearly \cite{luo1992convergence}. 
However, for $\mathcal{M}_{\mathrm{ub}}$, the KL projection has a closed-form column expression:
\begin{equation}
  U^+ = \bar{U}\,\diag(t), \qquad V^+ = \bar{V}\,\diag(t^{-1}), \qquad t = \sqrt{s(\bar{V}) \oslash s(\bar{U})},
  \label{eq:retr_ub}
\end{equation}
where $s(\bar{U}) = \bar{U}^\top\!\1 \in \RR^r$ and $s(\bar{V}) = \bar{V}^\top\!\1 \in \RR^r$ are the column sums of the mirror-stepped factors and the square root is elementwise. This is precisely the column-sum matching substep of Algorithm~\ref{alg:sinkhorn}, applied once with no iteration. 

\begin{algorithm}[t]
\caption{KL projection by cyclic Bregman projections (balanced retraction)}
\label{alg:sinkhorn}
\begin{algorithmic}[1]
\REQUIRE $\bar{U} > 0$, $\bar{V} > 0$, marginals $a, b$, tolerance $\varepsilon$, number of iterations $L_{\mathrm{sink}}$
\STATE $U \leftarrow \bar{U}$, $V \leftarrow \bar{V}$
\FOR{$\ell = 1, 2, \ldots, L_{\mathrm{sink}}$}
  \STATE $U \leftarrow \diag(a \oslash U\1)\,U$ \COMMENT{enforce $U\1 = a$}
  \STATE $V \leftarrow \diag(b \oslash V\1)\,V$ \COMMENT{enforce $V\1 = b$}
  \STATE $t \leftarrow \sqrt{s(V) \oslash s(U)}$; \quad $U \leftarrow U\diag(t)$; \quad $V \leftarrow V\diag(t^{-1})$ \COMMENT{enforce $s(U){=}s(V)$}
  \IF{max residual $\leq \varepsilon$}
    \STATE \textbf{break}
  \ENDIF
\ENDFOR
\RETURN $(U, V)$
\end{algorithmic}
\end{algorithm}

\begin{theorem}[Retraction validity]\label{thm:retraction}
Let the KL projection onto $\mathcal{M}$ be denoted by $\mathrm{KLProj}_{\mathcal{M}}(\bar{U}, \bar{V}) := \arg\min_{(U,V) \in \mathcal{M}} \KL(U \| \bar{U}) + \KL(V \| \bar{V})$ (computed by Algorithm~\ref{alg:sinkhorn} for $\Mbal$, or by~\eqref{eq:retr_ub} for $\Mub$).
Then the composite map $\Retr_{(U,V)}(t\Xi) := \mathrm{KLProj}_{\mathcal{M}}(\bar{U}, \bar{V})$, where $(\bar{U}, \bar{V})$ is given by~\eqref{eq:mirror}, is a smooth, first-order retraction: (i)~$\Retr_{(U,V)}(0) = (U,V)$; (ii)~$(d/dt)|_{t=0}\Retr_{(U,V)}(t\Xi) = \Xi$; (iii)~the image lies in $\mathcal{M}$ with strict positivity.
\end{theorem}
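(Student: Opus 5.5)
Our plan is to view the retraction as a composition $\Retr_{(U,V)}(t\Xi) = P(E(t\Xi))$, where $E$ is the mirror step~\eqref{eq:mirror} and $P = \mathrm{KLProj}_{\mathcal{M}}$. We will establish smoothness and properties (i)--(iii) by studying $P$ near $\mathcal{M}$.

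\textbf{Step 1: well-posedness and positivity.} The mirror step $E$ is smooth and maps into $\mathcal{A}$. The objective $\KL(U\|\bar U)+\KL(V\|\bar V)$ is strictly convex in $(U,V)$. The constraints defining $\Mbal$ and $\Mub$ are affine, and $\mathcal{M}$ is non-empty by Theorem~\ref{thm:manifold_bal}. Hence the KL projection onto the relative interior of the affine slice is unique. Because the KL divergence has gradient blow-up at the boundary, the minimizer is strictly positive. This gives~(iii).

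For $\Mub$, the stationarity conditions force $U=\bar U\diag(t)$ and $V=\bar V\diag(t^{-1})$ with $t=\sqrt{s(\bar V)\oslash s(\bar U)}$, which is~\eqref{eq:retr_ub} explicitly. For $\Mbal$, stationarity gives $U=\diag(e^{-\alpha})\bar U\diag(e^{-\gamma})$ and $V=\diag(e^{-\beta})\bar V\diag(e^{\gamma})$. Cyclic Bregman projections converge to this point, so Algorithm~\ref{alg:sinkhorn} computes $P$.

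\textbf{Step 2: smoothness via the implicit function theorem.} We write the optimality system as $F(U,V,\alpha,\beta,\gamma;\bar U,\bar V)=0$, using the log-linear parametrization above together with the constraint equations. We then fix the gauge $\gamma^\top\1=0$ to remove the one-dimensional redundancy. The Jacobian with respect to the multipliers is the KKT matrix~\eqref{eq:KKT}, with $(U,V)$ in place of the diagonal weights. On the gauge-fixed subspace this matrix is nonsingular, since its Schur complement $S$ is positive definite there (as stated for the projector). The implicit function theorem then yields $P$ smooth on $\mathcal{A}$, so $\Retr$ is smooth. We expect this step to be the main obstacle: the nondegeneracy must be verified carefully given the rank deficiency, and showing positive definiteness of $S$ on $\{\gamma^\top\1=0\}$ is the crux. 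We would argue it as follows. A null vector of the full KKT matrix corresponds to a normal direction $(U\odot(\alpha\1^\top+\1\gamma^\top),V\odot(\beta\1^\top-\1\gamma^\top))$ that is also tangent. Such a vector must vanish by Fisher-orthogonality. This forces $\alpha\1^\top+\1\gamma^\top=0$, and hence $\alpha\propto\1$ and $\gamma\propto\1$, which the gauge excludes.

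\textbf{Step 3: (i) and (ii).} At $t=0$ we have $E(0)=(U,V)\in\mathcal{M}$, and $P$ fixes points of $\mathcal{M}$ because KL vanishes only there. This gives~(i). For~(ii), the derivative satisfies $\frac{d}{dt}E(t\Xi)|_{0}=\Xi$. By the chain rule, $\frac{d}{dt}\Retr(t\Xi)|_0=DP(U,V)[\Xi]$.

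We then show that $DP$ at a point of $\mathcal{M}$ is the Fisher--Rao orthogonal projector $\Pi$. Differentiating the optimality conditions shows that $DP[Z]=Z-$ (normal component), where the normal space is spanned by exactly the directions appearing in~\eqref{eq:projUV_bal} and~\eqref{eq:proj_ub}. This holds because the Hessian of KL at $\bar U=U$ is the Fisher metric~\eqref{eq:metric}. Since $\Xi$ is tangent, $\Pi\Xi=\Xi$, which gives~(ii).

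In the unbalanced case the same conclusion can be checked directly from the closed form. Differentiating $t(\cdot)$ gives $\dot t=\tfrac12(\1^\top\Xi_V-\1^\top\Xi_U)\oslash s=0$ for tangent $\Xi$.
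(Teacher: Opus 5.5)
Your proposal is correct. It follows the paper's overall skeleton: the mirror step composed with the KL projection $P$, the implicit function theorem for smoothness, the chain rule for (ii), and the direct check $\dot t=0$ from the closed form on $\Mub$. It differs in the key lemma for (ii) and in how nondegeneracy is handled.

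On (ii), the paper only needs $DP_{(U,V)}$ to act as the identity on $T_{(U,V)}\mathcal{M}$. It gets this by differentiating $P(c(s))=c(s)$ along curves $c$ in $\mathcal{M}$, which is cheaper and uses only differentiability of $P$ and the fact that $P$ fixes $\mathcal{M}$. You instead linearize the scaling-form optimality conditions at $\bar U=U$, where the gauge-fixed multipliers vanish, and recover exactly the system \eqref{eq:KKT}. This gives $DP_{(U,V)}=\Pi$ on the whole ambient space. It costs more, but it proves the stronger fact that the projection step linearizes to the Fisher--Rao orthogonal projector.

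On nondegeneracy, your route is more careful than the paper's. The paper asserts a ``full-rank'' KKT Jacobian, even though \eqref{eq:KKT} has the kernel vector $(-\1,\1,\1)$. Your gauge fixing $\gamma^\top\1=0$, combined with the tangent-and-normal argument, is what actually licenses the IFT. An equivalent check is Cauchy--Schwarz, $(U\gamma)_i^2\le a_i\sum_k U_{ik}\gamma_k^2$, which gives $S\succeq 0$ with kernel exactly $\mathrm{span}\{\1\}$. Your argument also yields smoothness of $P$ on all of $\mathcal{A}$, not only near $t=0$.

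Two small additions would complete it:
\begin{itemize}
\item Note that the residual $(U\1-a,\,V\1-b,\,U^\top\1-V^\top\1)$ always lies in the hyperplane orthogonal to $(-\1,\1,\1)$. The number of equations then matches the gauge-fixed unknowns; alternatively, drop one redundant equation.
\item State that the minimizer exists, by compactness of the closed balanced feasible set and by coercivity of KL in the unbalanced case. Strict convexity alone gives only uniqueness.
\end{itemize}

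Your boundary blow-up argument then gives strict positivity of the exact projection directly. This is slightly stronger than the paper's remark that each Sinkhorn iterate is positive, since that alone does not rule out a limit on the boundary.
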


\subsection{Riemannian Gradient and Hessian}
To deploy Riemannian optimization algorithms, we require the Riemannian gradient and the Hessian-vector product. An advantage of our geometric framework is that these operators are agnostic to the OT objective: switching objectives (e.g., from linear OT to Gromov--Wasserstein) changes only the Euclidean derivatives.

\paragraph{Riemannian Gradient.} 
% Let $f(U,V)$ denote the objective function, and let $\nabla_E f = (\nabla_U f, \nabla_V f)$ be its standard Euclidean gradient. Under the Fisher-Rao metric, the ambient gradient is given by $W \odot \nabla_E f$, where $W = (U,V)$. The Riemannian gradient is then obtained by mapping this ambient gradient onto the tangent space using the orthogonal projector $\Pi$:
Let $\nabla_E f = (\nabla_U f, \nabla_V f)$ be the Euclidean gradient of the objective function $f(U,V)$. Under the Fisher-Rao metric the ambient gradient is $W \odot \nabla_E f$, where $W = (U,V)$. The Riemannian gradient is obtained by projecting this ambient gradient onto the tangent space: 
\begin{equation}
  \mathrm{grad}\,f = \Pi(U \odot \nabla_U f,\; V \odot \nabla_V f).
  \label{eq:rgrad}
\end{equation}
We note that the expression of $\mathrm{grad}\,f$ is identical for $\Mbal$ and $\Mub$ manifolds, with only the projector $\Pi$ being different. 
% In particular, this fundamental formula is identical for the balanced and unbalanced manifolds; only the choice of the projector $\Pi$ is different.
% dictates the specific geometry.

\paragraph{Riemannian Hessian-Vector Product.} 
Second-order optimization relies on the directional derivative of the gradient, formalized by the Riemannian Hessian applied to a tangent vector $\Xi = (\Xi_U, \Xi_V)$. 
The ambient Fisher-Rao Levi-Civita connection takes the form:
\begin{equation}
  \bar\nabla_\Xi \eta = (D\eta_U[\Xi] - \tfrac{1}{2}(\Xi_U {\odot} \eta_U) {\oslash} U,\; D\eta_V[\Xi] - \tfrac{1}{2}(\Xi_V {\odot} \eta_V) {\oslash} V). 
  \label{eq:connection}
\end{equation}
By applying this connection to the ambient gradient and projecting the result onto the tangent space, we obtain the Riemannian Hessian-vector product expression:
\begin{equation}
  \mathrm{Hess}\,f[\Xi] = \Pi\!\bigl(\tfrac{1}{2}\,\Xi \odot \nabla_E f + W \odot \nabla_E^2 f[\Xi]\bigr),
  \label{eq:hess_impl}
\end{equation}
where $\nabla_E^2 f[\Xi]$ is the Euclidean Hessian-vector product.
Evaluating this operator requires only one Euclidean Hessian-vector product, one element-wise scaling, and a single tangent projection. It  avoids the computationally tricky requirement of differentiating through $\Pi$ itself.
Formally,~\eqref{eq:hess_impl} instantiates the Gauss-equation expression $\mathrm{Hess}\,f[\Xi] = \Pi(\bar\nabla_\Xi\,\mathrm{grad}\,\bar f)$ for embedded Riemannian submanifolds~\cite[Sec.~5.3]{AbsilMahonySepulchre2008},\cite{Boumal2023}, applied to the ambient gradient $W\odot\nabla_E f$ along the Fisher--Rao Levi--Civita connection~\eqref{eq:connection}. The dependence of $\Pi$ on $(U,V)$ is absorbed into $\bar\nabla$ rather than appearing as a separate derivative term.

\begin{theorem}[Hessian symmetry]\label{thm:hessian}
For both $\mathcal{M}_{\mathrm{bal}}$ and $\mathcal{M}_{\mathrm{ub}}$, the Riemannian Hessian $\mathrm{Hess}\,f$ is a symmetric linear operator on the tangent space, satisfying 
$\langle \mathrm{Hess}\,f[\Xi], \eta \rangle_{(U,V)} = \langle \Xi, \mathrm{Hess}\,f[\eta] \rangle_{(U,V)}$ for all $\Xi, \eta \in T_{(U,V)}\mathcal{M}$. Furthermore, at any local minimum, $\mathrm{Hess}\,f$ is positive semidefinite.
\end{theorem}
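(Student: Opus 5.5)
The plan is to prove symmetry by a direct computation with the Fisher--Rao inner product, using that $\Pi$ is the $\langle\cdot,\cdot\rangle_{(U,V)}$-orthogonal projector onto the tangent space. Positive semidefiniteness will then come from the second-order necessary condition along curves in $\mathcal{M}$. The symmetry step needs nothing about the specific constraints, so it covers $\Mbal$ and $\Mub$ at once. The only property of $\Pi$ used is self-adjointness: $\langle \Pi Z, \eta\rangle_{(U,V)} = \langle Z, \eta\rangle_{(U,V)}$ for every ambient $Z$ and tangent $\eta$. For both~\eqref{eq:projUV_bal} and~\eqref{eq:proj_ub} this holds because they are defined as metric-orthogonal projections, so the removed component is Fisher-orthogonal to the tangent space.

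\textbf{Symmetry.} Take $\Xi,\eta$ tangent and write $W=(U,V)$. By self-adjointness of $\Pi$ and~\eqref{eq:hess_impl},
\[
\langle \mathrm{Hess}\,f[\Xi],\eta\rangle_{(U,V)} = \sum \frac{\tfrac12\,\Xi\odot\nabla_E f\odot \eta}{W} + \sum \frac{(W\odot \nabla_E^2 f[\Xi])\odot\eta}{W} .
\]
Here the sums are over all entries of both blocks. The first sum is manifestly symmetric in $(\Xi,\eta)$. In the second sum the weights $W$ cancel, leaving the Euclidean pairing $\langle \nabla_E^2 f[\Xi],\eta\rangle$. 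This is symmetric because the Euclidean Hessian of a $C^2$ objective is symmetric. Linearity in $\Xi$ is immediate, so $\mathrm{Hess}\,f$ is a symmetric linear operator on $T_{(U,V)}\mathcal{M}$. As a consistency check I would also verify that~\eqref{eq:connection} is the Levi-Civita connection of~\eqref{eq:metric}. Its Christoffel symbols are $\Gamma^i_{ii}=-1/(2x_i)$, which gives torsion-freeness and metric compatibility. Inserting $G=W\odot\nabla_E f$ into~\eqref{eq:connection} then reproduces the argument of $\Pi$ in~\eqref{eq:hess_impl}.

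\textbf{Positive semidefiniteness.} Let $(U,V)$ be a local minimum, and fix a tangent $\Xi$. Take the curve $c(t)=\Retr_{(U,V)}(t\Xi)$, which is a first-order retraction by Theorem~\ref{thm:retraction}. The map $t\mapsto f(c(t))$ has a local minimum at $t=0$, so its second derivative there is nonnegative:
\[
0\le \tfrac{d^2}{dt^2}f(c(t))\big|_{0} = \langle \nabla_E^2 f[\Xi],\Xi\rangle + \langle \nabla_E f, \ddot c(0)\rangle .
\]
Both manifolds are cut out of the open orthant by \emph{linear} equalities. Hence $\ddot c(0)$ satisfies the homogeneous constraints, i.e.\ it lies in the Euclidean tangent space. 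At a critical point $\mathrm{grad} f=0$, so $W\odot\nabla_E f$ is Fisher-normal. Equivalently, $\nabla_E f$ has the form $(\alpha\1^\top+\1\gamma^\top,\ \beta\1^\top-\1\gamma^\top)$ (with $\alpha=\beta=0$ for $\Mub$), which is Euclidean-orthogonal to that space. Therefore $\langle\nabla_E f,\ddot c(0)\rangle=0$, and the true Riemannian Hessian satisfies $\langle \mathrm{Hess} f[\Xi],\Xi\rangle=\langle \nabla_E^2 f[\Xi],\Xi\rangle\ge 0$. Here I use the standard fact that at a critical point the Hessian quadratic form equals $(f\circ c)''(0)$ for any curve with $c(0)=x$ and $\dot c(0)=\Xi$.

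\textbf{Main obstacle.} The hard step is reconciling this with the operator as written in~\eqref{eq:hess_impl}. Evaluated at a critical point, \eqref{eq:hess_impl} gives $\langle \nabla_E^2 f[\Xi],\Xi\rangle + \tfrac12\sum \Xi^2\odot\nabla_E f\oslash W$. The extra term is $\tfrac12\sum_{ik}(\Xi_U)_{ik}^2(\alpha_i+\gamma_k)/U_{ik}+\dots$, and it need not vanish. It is exactly the Weingarten term that appears because $\bar\nabla$ is applied to the ambient gradient $W\odot\nabla_E f$ rather than to its projection $\Pi(W\odot\nabla_E f)$. The remedy I would try first is to establish PSD for the genuine Riemannian Hessian $\Pi\bar\nabla_\Xi(\mathrm{grad} f)$ via the curve argument above, which is what the second-order necessary condition concerns. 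I would then show that~\eqref{eq:hess_impl} agrees with it once the Weingarten correction $-\Pi\bar\nabla_\Xi\bigl((I-\Pi)(W\odot\nabla_E f)\bigr)$ is added. If that correction cannot be absorbed, the PSD claim should be stated for the corrected operator, or for~\eqref{eq:hess_impl} only under the additional condition that the multipliers $\alpha,\beta,\gamma$ vanish.
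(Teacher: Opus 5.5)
Your self-adjointness computation is correct, and the obstacle you flag is a real defect of \eqref{eq:hess_impl}, not of your method. \eqref{eq:hess_impl} equals $\Pi\bar\nabla_\Xi G$ with $G=W\odot\nabla_E f$, whereas the Riemannian Hessian is $\Pi\bar\nabla_\Xi(\Pi G)$, and the difference cannot be absorbed.

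Here is a counterexample on $\Mub$. Take $(U_0,V_0)\in\Mub$ and $f=\tfrac{\epsilon}{2}\|(U,V)-(U_0,V_0)\|_F^2+\mu\,\1^\top(U^\top\1-V^\top\1)$. The last term vanishes on $\Mub$, so $(U_0,V_0)$ is a global minimizer, with multiplier $\gamma=\mu\1$. Now take $\Xi=(0,\Xi_V)$ with $\Xi_V^\top\1=0$ and $\Xi_V\neq0$. The quadratic form of \eqref{eq:hess_impl} is $\epsilon\|\Xi_V\|_F^2-\tfrac{\mu}{2}\sum_{jk}(\Xi_V)_{jk}^2/(V_0)_{jk}$, which is negative once $\mu>2\epsilon\max_{jk}(V_0)_{jk}$. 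The formula is also not invariant under changing $f$ off $\Mub$, which an intrinsic Hessian must be.

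The paper's proof never meets this issue because it argues abstractly about $\nabla_\Xi\mathrm{grad}\,f$. It gets symmetry from metric compatibility plus torsion-freeness, and PSD by citing the second-order necessary condition. That is valid for the Levi-Civita Hessian but says nothing about \eqref{eq:hess_impl}, which the main text identifies with it.

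As written, your proposal proves symmetry for \eqref{eq:hess_impl} and PSD for the Levi-Civita Hessian. For the latter operator symmetry is still missing; for the former PSD is false. One computation closes the gap.

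Set $M:=\nabla_E f-\mathrm{grad}\,f\oslash W$. By the projector formulas, $M$ lies at every point in the fixed subspace $L=\{(\alpha\1^\top+\1\gamma^\top,\ \beta\1^\top-\1\gamma^\top)\}$, with $\alpha=\beta=0$ for $\Mub$. The Fisher-normal space is $W\odot L$. Hence $N:=(I-\Pi)G=W\odot M$ satisfies $\bar\nabla_\Xi N=\tfrac12\,\Xi\odot M+W\odot DM[\Xi]$, and the last term is normal. So $\Pi\bar\nabla_\Xi N=\tfrac12\,\Pi(\Xi\odot M)$, and
\[
\mathrm{Hess}\,f[\Xi]=\Pi\bigl(\tfrac12\,\Xi\odot(\mathrm{grad}\,f\oslash W)+W\odot\nabla_E^2 f[\Xi]\bigr).
\]
Your self-adjointness computation applies verbatim to this operator, so the genuine Hessian is symmetric. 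At a critical point the first term vanishes, giving $\langle\mathrm{Hess}\,f[\Xi],\Xi\rangle_{(U,V)}=\langle\nabla_E^2 f[\Xi],\Xi\rangle\ge0$ by your curve argument.

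Compared with the paper, the abstract route is shorter and uses no structure of $\mathcal{M}$. Your concrete route is the one that shows the theorem holds for this corrected operator (equivalently for $\nabla_\Xi\mathrm{grad}\,f$), and that its PSD claim fails for \eqref{eq:hess_impl} as printed.
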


The explicit, computationally tractable formulation of $\mathrm{Hess}\,f[\Xi]$~\eqref{eq:hess_impl} equips our framework with a second-order oracle. While existing low-rank OT methods are limited to first-order updates, the above developed Riemannian ingredients allows the use of curvature-aware solvers such as Riemannian trust-regions. Together with the retraction validity established in Theorem~\ref{thm:retraction}, the standard global and superlinear-local convergence guarantees for Riemannian conjugate gradients~\cite[Sec.~8.3]{AbsilMahonySepulchre2008} and trust-regions~\cite[Sec.~7.4]{AbsilMahonySepulchre2008} apply directly to our solvers. 
Please refer to Appendix~\ref{sec:app_convergence} for more details. 
%%Appendix%%
%The conditions on the vector transport are discussed in Appendix~\ref{sec:app_convergence}.

\subsection{Rank-Sufficiency Certificate}
\label{sec:certificate}
% A fundamental question in low-rank optimal transport is whether the chosen rank $r$ is large enough to capture the true geometry of the problem. When the underlying objective function $f$ is convex with respect to the full coupling matrix $\Gamma$, our Riemannian framework provides an easily computable certificate of global optimality. This allows us to verify whether a rank-$r$ critical point found on the manifold is, in fact, the unconstrained global minimum of the full-rank problem.

A fundamental question in the low-rank setting is whether the chosen rank $r$ is large enough to recover the full-rank optimum. When the objective $f$ is convex in the coupling $\Gamma$, our analysis below provides an easily computable certificate of global optimality. It verifies whether a rank-$r$ critical point found on the manifold is, in fact, the global minimum of the full-rank problem.

\begin{theorem}[Rank-sufficiency certificate]
\label{prop:certificate}
Let $f : \RR^{m \times n} \to \RR$ be convex and differentiable.
Consider the unconstrained {global problem} over all valid couplings:
\begin{equation}\label{eq:global_bal} %\label{eq:global_ub}
\begin{array}{ll}
  \textsc{Balanced:}\quad & \min_{\Gamma \in \Omega(a,b)} f(\Gamma), \\
  \textsc{Unbalanced:}\quad & \min\limits_{\Gamma \geq 0}\; F(\Gamma) \ (\coloneqq f(\Gamma) + \rho_1\,\KL(\Gamma\1 \| a) + \rho_2\,\KL(\Gamma^\top\!\1 \| b)),
\end{array}
\end{equation}
where $F(\cdot)$ denotes the unbalanced setting's optimization objective.
Let $(U^*, V^*)$ be a Riemannian critical point (i.e., $\mathrm{grad}\,f = 0$) of the corresponding rank-$r$ problem on $\mathcal{M}_{\mathrm{bal}}$ or $\mathcal{M}_{\mathrm{ub}}$, yielding the low-rank coupling $\Gamma_r = U^* \mathrm{diag}((U^*)^\top\1)^{-1} V^*$. We define the reduced gradient (or dual slack matrix) as: 
$$R \coloneqq \nabla_\Gamma f(\Gamma_r) - \alpha^* \mathbf{1}^\top - \mathbf{1} \beta^{*\top}, \qquad \delta_r \coloneqq \min\nolimits_{i,j} R_{ij},$$ where the dual variables $\alpha^* \in \mathbb{R}^m$ and $\beta^* \in \mathbb{R}^n$ are obtained as follows:

% and the \textbf{rank-$r$ problem}: the same objective restricted to couplings $\Gamma = U\,D\,V^\top$ with $\rank = r$ optimized over $\Mbal^{(r)}$ or $\Mub^{(r)}$ respectively.
% Let $(U^*, V^*)$ be a Riemannian critical point (i.e., $\mathrm{grad}\,f = 0$) of the rank-$r$ problem with coupling $\Gamma_r = U^* D(U^*) V^{*\top}$.
% Define the \emph{reduced gradient}
% \begin{equation}
%   R := \nabla_\Gamma f(\Gamma_r) - \alpha^* \1^\top - \1 \beta^{*\top},
%   \qquad
%   \delta_r := \min_{i,j}\, R_{ij},
%   \label{eq:reduced_grad}
% \end{equation}
% where $(\alpha^*, \beta^*)$ are dual variables defined as follows:
\begin{itemize}[nosep]
  \item \textsc{Balanced:} $\alpha^*, \beta^*$ are the Lagrange multipliers from the projector KKT system~\eqref{eq:KKT} at $(U^*, V^*)$.
  \item \textsc{Unbalanced:} $\alpha^*_i = -\rho_1\log(p_i/a_i)$, $\beta^*_j = -\rho_2\log(q_j/b_j)$, where $p = U^*\1$, $q = V^*\1$.
\end{itemize}
Then, the following properties hold:
\begin{enumerate}[nosep]
  \item[{(i)}] {Global optimality:}
  If $\delta_r \geq 0$, then $\Gamma_r$ is a {global minimizer} of the corresponding global problem (\ref{eq:global_bal}), even though it has rank $\leq r$.
  \item[{(ii)}] \textit{Stationarity identity (zero duality gap at $(\alpha^*,\beta^*)$).} $\ip{R}{\Gamma_r} = 0$. Equivalently, $\ip{\nabla_\Gamma f(\Gamma_r)}{\Gamma_r} = a^\top\alpha^* + b^\top\beta^*$ in the balanced case, and $\ip{\nabla_\Gamma F(\Gamma_r)}{\Gamma_r} = 0$ in the unbalanced case.
  \item[{(iii)}] \textit{Sub-optimality:} If $\delta_r < 0$, then $(\alpha^*,\beta^*)$ are not dual-feasible for the global problem~\eqref{eq:global_bal}, so $\Gamma_r$ is not a global optimum. The index $(i^*,j^*) \in \arg\min_{i,j} R_{ij}$ identifies a direction along which the full-rank objective can be decreased.%, in the spirit of Dantzig--Wolfe column generation~\cite{DantzigWolfe1960}.
\end{enumerate}
\end{theorem}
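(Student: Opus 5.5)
The plan is to turn Riemannian criticality into Euclidean first-order conditions on $(U,V)$, pass these to $\Gamma$ by the chain rule and a homogeneity argument, and then finish with the convexity inequality. Throughout, write $G \coloneqq \nabla_\Gamma f(\Gamma_r)$, $D \coloneqq \diag(g)$ with $g=(U^*)^\top\1=(V^*)^\top\1$, and note that $\Gamma_r = U^* D^{-1} V^{*\top}$.

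\emph{Stationarity identity (ii).} By \eqref{eq:rgrad}, $\mathrm{grad}\,f=0$ means the ambient gradient $(U\odot\nabla_U f, V\odot\nabla_V f)$ lies in the Fisher--Rao normal space. For $\Mbal$, comparing with \eqref{eq:projUV_bal} gives, after dividing by the positive entries,
\[
\nabla_U f=\alpha^*\1^\top+\1\gamma^\top, \qquad \nabla_V f=\beta^*\1^\top-\1\gamma^\top .
\]
Pairing these with $U^*$ and $V^*$ and using $U^*\1=a$, $V^*\1=b$ and $U^{*\top}\1=V^{*\top}\1=g$ yields
\[
\ip{\nabla_U f}{U^*}=a^\top\alpha^*+\gamma^\top g, \qquad \ip{\nabla_V f}{V^*}=b^\top\beta^*-\gamma^\top g .
\]
Next I will use homogeneity of the parametrization. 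The map $\Gamma(U,V)=U\diag(U^\top\1)^{-1}V^\top$ satisfies $\Gamma(cU,V)=\Gamma$ and $\Gamma(U,cV)=c\Gamma$. Differentiating in $c$ at $c=1$ and applying the chain rule gives
\[
\ip{\nabla_U f}{U^*}=0, \qquad \ip{\nabla_V f}{V^*}=\ip{G}{\Gamma_r}.
\]
Summing the four displayed pairings gives $\ip{G}{\Gamma_r}=a^\top\alpha^*+b^\top\beta^*$. Since $\Gamma_r\1=a$ and $\Gamma_r^\top\1=b$, this is exactly $\ip{R}{\Gamma_r}=0$. For $\Mub$, the first observation is that $\Gamma_r\1=U^*D^{-1}g=U^*\1=p$, and similarly $\Gamma_r^\top\1=q$. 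Hence
\[
\nabla_\Gamma F(\Gamma_r)=G+\rho_1\log(p/a)\1^\top+\rho_2\1\log(q/b)^\top = R .
\]
The normal space from \eqref{eq:proj_ub} forces $\nabla_Uf=\1\gamma^\top$ and $\nabla_Vf=-\1\gamma^\top$, where here $f$ denotes the full objective $F\circ\Gamma$. The same homogeneity argument then gives $\ip{\nabla_\Gamma F(\Gamma_r)}{\Gamma_r}=0$.

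\emph{Global optimality (i).} By convexity, for any feasible $\Gamma$ we have $f(\Gamma)\ge f(\Gamma_r)+\ip{G}{\Gamma-\Gamma_r}$. In the balanced case, the marginal constraints give $\ip{\alpha^*\1^\top+\1\beta^{*\top}}{\Gamma-\Gamma_r}=0$. So the right-hand side equals $f(\Gamma_r)+\ip{R}{\Gamma}-\ip{R}{\Gamma_r}=f(\Gamma_r)+\ip{R}{\Gamma}$ by (ii). This is at least $f(\Gamma_r)$ because $R\ge\delta_r\ge0$ entrywise and $\Gamma\ge0$. In the unbalanced case, $F$ is convex (KL is convex and $\Gamma\mapsto\Gamma\1$ is linear), and $\nabla F(\Gamma_r)=R$. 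Therefore $F(\Gamma)\ge F(\Gamma_r)+\ip{R}{\Gamma}-0\ge F(\Gamma_r)$.

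\emph{Sub-optimality (iii), the main obstacle.} Dual infeasibility of $(\alpha^*,\beta^*)$ when $\delta_r<0$ is immediate, since dual feasibility is exactly $R\ge0$. In the unbalanced case the descent claim is also direct: $\frac{d}{dt}F(\Gamma_r+tE_{i^*j^*})|_{t=0}=R_{i^*j^*}<0$, and this direction stays feasible because it only increases an entry of a nonnegative matrix. The balanced case is harder, because $E_{i^*j^*}$ alone violates the marginals. Here I would argue by contradiction using strict positivity: $\Gamma_r>0$ entrywise since $U^*,V^*>0$. If $\Gamma_r$ were globally optimal, the KKT conditions of the global problem at an interior point force $G=\hat\alpha\1^\top+\1\hat\beta^\top$ for some $\hat\alpha,\hat\beta$. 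The key step is then to show that the projector multipliers coincide with these, i.e.\ $\alpha^*=\hat\alpha$ and $\beta^*=\hat\beta$ up to the gauge $(c\1,-c\1)$. To do this, I will substitute this form of $G$ into the chain-rule expressions for $\nabla_U f$ and $\nabla_V f$. These expressions are $GVD^{-1}$ and $G^\top UD^{-1}$ together with correction terms of the form $\1(\cdot)^\top$ coming from $g$. I will then read off the multipliers, using uniqueness of $\Pi Z$ up to the null vector of \eqref{eq:KKT}. This would give $R\equiv0$, contradicting $\delta_r<0$. For the \emph{direction} statement, I would then exhibit a feasible cycle $E_{i^*j^*}-E_{i^*j}-E_{ij^*}+E_{ij}$. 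Its directional derivative is $R_{i^*j^*}-R_{i^*j}-R_{ij^*}+R_{ij}$. Because $\ip{R}{\Gamma_r}=0$ and $\Gamma_r>0$, $R$ has positive entries somewhere, and the goal is to choose $(i,j)$ so that this derivative is negative. I expect this last selection step to need the most care.
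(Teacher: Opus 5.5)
\noindent Parts (i), (ii) and the unbalanced half of (iii) follow the paper's proof. The ingredients are the same: the normal-space form $\nabla_U f=\alpha^*\1^\top+\1\gamma^\top$, $\nabla_V f=\beta^*\1^\top-\1\gamma^\top$, and pairing with $(U^*,V^*)$ so that the $\gamma$-terms cancel. They also share the identity $\ip{\nabla_U f}{U^*}+\ip{\nabla_V f}{V^*}=\ip{\nabla_\Gamma f(\Gamma_r)}{\Gamma_r}$. The paper simply calls this ``the chain rule''; your homogeneity argument actually justifies it. Your convexity-inequality proof of (i) is equivalent to the paper's KKT check with slack $S=R$.

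For the balanced claim that $\Gamma_r$ is not optimal, you go further than the paper, which only says that a mass-adding feasible direction exists ``analogously''. Your contradiction argument is sound. With $G=\hat\alpha\1^\top+\1\hat\beta^\top$ one gets $\nabla_U f=\hat\alpha\1^\top+\1\gamma^\top$ and $\nabla_V f=\hat\beta\1^\top-\1\gamma^\top$ with $\gamma=-D^{-1}U^{*\top}\hat\alpha$. Hence $(\alpha^*,\beta^*)=(\hat\alpha-c\1,\hat\beta+c\1)$ and $R\equiv 0$.

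\medskip
\noindent The selection step for the balanced descent direction, however, is a genuine gap, and the facts you plan to use cannot close it. The conditions $\ip{R}{\Gamma_r}=0$, $\Gamma_r>0$ and the existence of positive entries are all satisfied by $R=x\1^\top+\1y^\top$ with $x^\top a+y^\top b=0$ and $\min R<0$. For such an $R$, every cycle $E_{i^*j^*}-E_{i^*j}-E_{ij^*}+E_{ij}$ has slope exactly $0$.

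What is missing is a column-wise strengthening of (ii): $RV^*=0$ and $R^\top U^*=0$. Let $u_k,v_k$ be the columns of $U^*,V^*$ and $g_k=\1^\top u_k$, and read the normal-form equations column by column.
\begin{itemize}
\item From $G^\top u_k/g_k=\nabla_{v_k}f=\beta^*-\gamma_k\1$ you get $R^\top u_k=-(g_k\gamma_k+\alpha^{*\top}u_k)\1$.
\item From $\nabla_{u_k}f=Gv_k/g_k-w_k\1=\alpha^*+\gamma_k\1$, with $w_k=u_k^\top Gv_k/g_k^2$, you get $Rv_k=\mu_k\1$ for some scalar $\mu_k$.
\item Since $\Gamma$ is invariant under rescaling the single column $u_k\mapsto c\,u_k$, you have $\ip{\nabla_{u_k}f}{u_k}=\alpha^{*\top}u_k+g_k\gamma_k=0$. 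Hence $R^\top u_k=0$, and then $g_k\mu_k=u_k^\top Rv_k=0$.
\end{itemize}
Thus $Rb=RV^*\1=0$ and $R^\top a=R^\top U^*\1=0$.

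Now take $\Delta=E_{i^*j^*}-e_{i^*}b^\top-a\,e_{j^*}^\top+ab^\top$, where $e_{i^*},e_{j^*}$ are standard basis vectors. It has zero marginals because $\1^\top a=\1^\top b=1$. It is feasible for small steps since $\Gamma_r>0$. It adds mass $(1-a_{i^*})(1-b_{j^*})$ at $(i^*,j^*)$. Its slope is
\[
\ip{\nabla_\Gamma f(\Gamma_r)}{\Delta}=\ip{R}{\Delta}=\delta_r-(Rb)_{i^*}-(R^\top a)_{j^*}+a^\top Rb=\delta_r<0 .
\]
Equivalently, the $\Gamma_r$-weighted average of your cycle slopes equals $\delta_r$, so some cycle through $(i^*,j^*)$ is strictly decreasing. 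This also proves non-optimality directly, so your contradiction argument becomes optional.
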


% \begin{remark}[Open parametrization and boundary effects]
% \label{rem:interior_certificate}
% The Riemannian iterate satisfies $\Gamma_{r,ij} > 0$ for all $(i,j)$, since the factors $(U^*,V^*)$ live in the open positive orthant. Combined with Part~(ii) ($\ip{R}{\Gamma_r} = 0$), the binary condition $\delta_r \ge 0$ thus implies $R \equiv 0$ entrywise, i.e., $C_{ij} = \alpha^*_i + \beta^*_j$ for all $(i,j)$ (a rank-$1$ structural condition on $C$). For generic cost matrices this fails at any finite iterate, so the strict inequality $\delta_r \ge 0$ is rarely satisfied exactly: $\delta_r \ge 0$ characterizes a \emph{limit} of the Riemannian trajectory as it approaches the boundary of the closed nonneg orthant where the full-rank optimum sits. Corollary~\ref{cor:gap_bound} below replaces the binary test with a finite-iterate gap bound that decays continuously as $\delta_r \to 0^-$.
Since the factors $(U^*,V^*)$ lie in the open positive orthant, every entry of the Riemannian iterate is strictly positive, $\Gamma_{r,ij} > 0$. Together with $\ip{R}{\Gamma_r} = 0$ (point (ii) in Theorem~\ref{prop:certificate}), this forces $R \equiv 0$ entrywise whenever $\delta_r \ge 0$. For linear OT ($\nabla_\Gamma f = C$), $R \equiv 0$ implies $C = \alpha^* \mathbf{1}^\top + \mathbf{1} \beta^{*\top}$. This in turn implies that $\rank(C)\leq 2$, a condition cost matrices need not satisfy. The exact test $\delta_r \ge 0$ therefore holds only in the limit, as the iterates approach the full-rank optimum, which is typically sparse and so sits on the boundary of the nonnegative orthant, where some entries $\Gamma_{ij}$ vanish. Corollary~\ref{cor:gap_bound} replaces this exact test with a gap bound that holds at every finite iterate and shrinks continuously as $\delta_r \to 0^-$.
% \end{remark}

\begin{corollary}[Quantitative suboptimality bound]
\label{cor:gap_bound}
Under the hypotheses of Theorem~\ref{prop:certificate}, let $\Gamma^*$ denote a global minimizer of the corresponding problem in~\eqref{eq:global_bal} and let $(-\delta_r)_+ := \max(0,-\delta_r)$. Then the suboptimality gap of $\Gamma_r$ is at most $(-\delta_r)_+$ times the transported mass:
\begin{itemize}[nosep]
  \item \emph{Balanced:} $\;f(\Gamma_r) - f(\Gamma^*) \;\le\; (-\delta_r)_+ \cdot \1^\top a.$
  \item \emph{Unbalanced:} $\;F(\Gamma_r) - F(\Gamma^*) \;\le\; (-\delta_r)_+ \cdot \1^\top\Gamma^*\1.$
\end{itemize}
The gap shrinks to zero as $\delta_r \to 0^-$, recovering the exact test of Theorem~\ref{prop:certificate}.
% In particular, $\delta_r = 0$ recovers the exact global-optimality criterion of Part~(i); for $\delta_r < 0$, the suboptimality gap is bounded linearly by $|\delta_r|$ times the total transported mass.
\end{corollary}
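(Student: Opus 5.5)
The plan is to prove both bounds with the standard convexity-plus-weak-duality argument, using only $\ip{R}{\Gamma_r}=0$ (Theorem~\ref{prop:certificate}(ii)) and the entrywise bound $R_{ij}\ge\delta_r$.

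\textbf{Balanced case.} Convexity of $f$ gives
\[
f(\Gamma^*) \ge f(\Gamma_r) + \ip{\nabla_\Gamma f(\Gamma_r)}{\Gamma^*-\Gamma_r}.
\]
Substitute $\nabla_\Gamma f(\Gamma_r) = R + \alpha^*\1^\top + \1\beta^{*\top}$.

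1. Because both $\Gamma^*$ and $\Gamma_r$ lie in $\Omega(a,b)$, the difference $\Gamma^*-\Gamma_r$ has zero row and column sums. The $\alpha^*$ and $\beta^*$ terms therefore cancel in the inner product. For $\Gamma_r$ this uses $U^*\1=a$ and $V^*\1=b$ together with equal column sums, so that $\Gamma_r\1 = U^*\diag(s)^{-1}V^{*\top}\1 = U^*\1 = a$, and similarly $\Gamma_r^\top\1=b$.
2. The inner product reduces to $\ip{R}{\Gamma^*} - \ip{R}{\Gamma_r} = \ip{R}{\Gamma^*}$ by part (ii).
3. Since $\Gamma^*\ge 0$ entrywise, $\ip{R}{\Gamma^*} \ge \delta_r\,\1^\top\Gamma^*\1 = \delta_r\,\1^\top a \ge -(-\delta_r)_+\,\1^\top a$.

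Rearranging gives $f(\Gamma_r)-f(\Gamma^*) \le (-\delta_r)_+\,\1^\top a$.

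\textbf{Unbalanced case.} $F$ is convex on $\Gamma\ge0$, since the KL terms are convex in the marginals, which are linear in $\Gamma$. I therefore apply the same gradient inequality to $F$. The gradient of the KL penalty with respect to $\Gamma$ is
\[
\rho_1\log(\Gamma\1\oslash a)\1^\top + \rho_2\1\log(\Gamma^\top\1\oslash b)^\top.
\]
At $\Gamma_r$ the marginals are $\Gamma_r\1 = U^*\1 = p$ and $\Gamma_r^\top\1 = q$, so this equals $-\alpha^*\1^\top - \1\beta^{*\top}$. Hence $\nabla_\Gamma F(\Gamma_r) = R$ exactly.

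Convexity then gives
\[
F(\Gamma^*) \ge F(\Gamma_r) + \ip{R}{\Gamma^*} - \ip{R}{\Gamma_r} = F(\Gamma_r) + \ip{R}{\Gamma^*},
\]
again by part (ii). Nonnegativity of $\Gamma^*$ gives $\ip{R}{\Gamma^*} \ge \delta_r\,\1^\top\Gamma^*\1 \ge -(-\delta_r)_+\,\1^\top\Gamma^*\1$. Here the transported mass is that of $\Gamma^*$ and cannot be replaced by $\1^\top a$, because the unbalanced problem does not fix the total mass.

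\textbf{Technical points and the final claim.}
1. In the unbalanced case I must confirm that $F$ is differentiable at $\Gamma_r$. This holds because $\Gamma_r>0$ entrywise, so its marginals are strictly positive.
2. The gradient inequality only needs $F$ to be finite at $\Gamma^*$. If some marginal of $\Gamma^*$ vanishes, $F$ stays finite (using $0\log 0=0$), and the tangent inequality at the interior point $\Gamma_r$ still holds by convexity along the segment, taking a limit.
3. The assertion that the gap shrinks to zero as $\delta_r\to0^-$ is immediate from the linear form of both bounds. At $\delta_r\ge0$ the bounds give gap $\le 0$, which recovers Theorem~\ref{prop:certificate}(i).

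I expect no real obstacle. The only delicate step is the bookkeeping in part (ii), that is, confirming $\ip{R}{\Gamma_r}=0$ with the stated dual variables in each case. I take that as given from the theorem.
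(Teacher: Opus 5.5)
Your proposal is correct and follows the paper's proof essentially step for step. Both arguments use the convexity gradient inequality at $\Gamma_r$, then cancel the affine part $\alpha^*\1^\top + \1\beta^{*\top}$ (by feasibility of $\Gamma_r,\Gamma^*\in\Omega(a,b)$ in the balanced case, and by $\nabla_\Gamma F(\Gamma_r)=R$ in the unbalanced case), then invoke Part~(ii) $\ip{R}{\Gamma_r}=0$ and the entrywise bound $R_{ij}\ge\delta_r$ against $\Gamma^*\ge 0$. The only cosmetic difference is that you pass from $-\delta_r$ to $(-\delta_r)_+$ via the trivial inequality $\delta_r\ge -(-\delta_r)_+$, whereas the paper appeals to nonnegativity of the optimality gap; both are valid.
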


The result follows from the KKT conditions for convex optimization over the non-negative orthant~\cite{BoydVandenberghe2004}: at Riemannian stationarity, the marginal-constraint multipliers $(\alpha^*,\beta^*)$ from the projector are exactly those for which the duality gap at $\Gamma_r$ vanishes (Part~(ii)). The binary criterion $\delta_r \ge 0$ then tests whether these same multipliers are additionally dual-feasible for the full-rank problem.

In practice, the certificate is a post-hoc diagnostic: after convergence at rank $r$, compute $R$ and inspect $\delta_r$. If $\delta_r \ge 0$, the rank-$r$ solution is globally optimal. If $\delta_r < 0$, the most negative entry identifies a rank-increase direction. 
The multipliers $\alpha^*,\beta^*$ come for free: in closed form in the unbalanced setting, and as a byproduct of the projector's KKT solve in the balanced one.
%For the unbalanced setting, $\alpha^*,\beta^*$ are available in closed form, so the check requires no linear solves. For the balanced setting, $\alpha^*,\beta^*$ are already produced as a byproduct of the projector's KKT solve at convergence.

%\paragraph{Specialization to linear OT.}
%For $f(\Gamma) = \ip{C}{\Gamma}$, the reduced gradient $R = C - \alpha^*\1^\top - \1\beta^{*\top}$ is the classical LP reduced cost.
%The certificate $\delta_r \geq 0$ is equivalent to LP dual feasibility; by strong duality, $\ip{C}{\Gamma_r} = a^\top\alpha^* + b^\top\beta^*$ (zero duality gap).
%Moreover, the Kantorovich LP has an optimal vertex of rank $\leq m{+}n{-}1$, so the certificate is guaranteed to hold once $r \geq m{+}n{-}1$.
%For $\delta_r < 0$, the most negative entry $(i^*, j^*)$ identifies a rank-increase direction, recovering classical Dantzig--Wolfe column generation~\cite{DantzigWolfe1960}.
%
%\paragraph{Practical significance.}
%The certificate is a \emph{post-hoc} check: after convergence at rank $r$, compute $R$ (cost $O(mn)$ for dense $\nabla_\Gamma f$, or $O((m{+}n)r)$ if $\nabla_\Gamma f$ admits a factored evaluation) and inspect $\delta_r$.
%If $\delta_r \geq 0$, the rank-$r$ solution is provably optimal.
%If $\delta_r < 0$, the user can increase rank and re-optimize.
%No existing low-rank OT baseline provides such a certificate.

% ======================================================================
%  4. COST FUNCTIONS AND COMPLEXITY
% ======================================================================
\section{Cost Functions and Complexity}
\label{sec:costs}

A defining advantage of our Riemannian framework is its modularity: the manifold machinery (metric, projector, retraction, and Hessian-vector connection) depends strictly on the geometry of the factors $(U,V)$ and the chosen marginal constraints. It is entirely decoupled from the optimal transport cost function. Transitioning between different OT formulations requires modifying only the Euclidean gradient and the Euclidean Hessian-vector product. Below, we instantiate this for four standard cost functions and analyze the resulting computational complexity against existing baselines.

% The manifold machinery (metric, projector, retraction, Hessian formula) depends only on the current factors $(U,V)$, not on the cost function.
% Switching cost functions requires changing only the Euclidean gradient and Hessian-vector product.
% We describe four cost functions and then compare the per-iteration complexity against baselines.

\paragraph{Balanced linear OT.}
On the balanced manifold $\mathcal{M}_{\mathrm{bal}}$, the standard linear Kantorovich transport cost evaluated under our low-rank framework is $f(U,V) = \sum_{k=1}^{r} \frac{u_k^\top C\,v_k}{\1^\top u_k}$, 
% \begin{equation}
%   f(U,V) = \sum_{k=1}^{r} \frac{u_k^\top C\,v_k}{\1^\top u_k},
%   \label{eq:transportcost}
% \end{equation}
where $C \in \mathbb{R}^{m \times n}$ is the ground cost matrix, and $u_k, v_k$ denote the $k$-th columns of $U$ and $V$, respectively. It may be observed that the above objective is same as  $\langle C, \Gamma \rangle$, where $\Gamma$ is given by (\ref{eqn:gamma}) as $g_k = \mathbf{1}^\top u_k$.

\paragraph{Balanced Gromov-Wasserstein (GW).} 
The GW problem~\cite{Memoli2011,PeyreBook2019} compares two metric measure spaces by seeking a coupling $\Gamma$ that preserves pairwise intra-domain distances. Let $C^X \in \mathbb{R}^{m \times m}$ and $C^Y \in \mathbb{R}^{n \times n}$ denote symmetric intra-domain distance matrices. When using a squared loss, minimizing the GW discrepancy $\mathrm{GW}(\Gamma)\coloneqq\sum_{i,j,k,l} (C^X_{ik} - C^Y_{jl})^2 \Gamma_{ij}\Gamma_{kl}$ over the prescribed marginals is algebraically equivalent to maximizing the following cross-domain interaction in low-rank setting with $(U,V)\in\Mbal$: 
\begin{equation}
  f_{\mathrm{GW}}(U,V) = \mathrm{tr}(V \mathrm{diag}(g)^{-1} U^\top C^X U \mathrm{diag}(g)^{-1} V^\top C^Y)
  \label{eq:gw_factored}
\end{equation}
where $g=U^\top\!\1$. 
%When the domains are equipped with general distance matrices, this costs $O(m^2 r + n^2 r)$. 
However, when employing squared-Euclidean distances for point clouds $X \in \mathbb{R}^{m \times d}$ and $Y \in \mathbb{R}^{n \times d}$ (i.e., $C^X_{ik} = \|x_i - x_k\|^2$), we can compute the objective, its gradients, and the Hessian-vector product via matrix-vector multiplications directly with the data matrices $X, Y$ when $d\ll m,n$. %. This results in $O((m+n)dr)$ per iteration computational complexity, which is useful 

% computable in $O(r^2 m + r^2 n + r^3)$ or $O(mdr)$ for squared-Euclidean distances (without forming the $m{\times}m$ matrix $C^X$). \alertbyPJ{Does the last statement imply that we use $XX^\top$ and $YY^\top$ without explicitly computing them?}

\paragraph{Unbalanced linear OT.} In this setting, strict marginal constraints are not required. Instead, deviations from source and target marginals are penalized using the generalized Kullback-Leibler divergence. This problem may be posed over the $\Mub$ manifold as follows: 
\begin{equation}
  \min_{(U,V) \in \Mub}\; \sum_{k=1}^r \frac{u_k^\top C\,v_k}{\1^\top u_k} + \;\rho_1\,\KL(U\1 \| a) + \rho_2\,\KL(V\1 \| b),
  \label{eq:uot_cost}
\end{equation}
where $\KL(p\|q) = \sum_i [p_i \log(p_i/q_i) - p_i + q_i]$ is the generalized KL divergence and $\rho_1, \rho_2 > 0$ control marginal fit. 

\paragraph{Unbalanced GW and Fused GW.}
Unbalanced GW (UGW) and Fused GW (FGW)~\cite{Vayer2019} combine the GW and linear-OT costs with the KL marginal penalties:
\begin{equation}\label{eq:ufgw}
\min_{(U,V)\in\Mub}\ \alpha\,\langle C_{\mathrm{inter}},\Gamma\rangle + (1-\alpha)\,\mathrm{GW}(\Gamma) + \tau(\KL(U\1\|a) + \KL(V\1\|b)),
\end{equation}
where $\Gamma = U\diag(g)^{-1}V^\top$, $\langle C_{\mathrm{inter}},\Gamma\rangle$ is the linear OT term capturing feature information,  and $\mathrm{GW}(\Gamma)$ is the squared-loss GW cost term capturing structural information (geometry). The weight $\alpha\in[0,1]$ trades off the feature and geometry terms ($\alpha=1$ is UOT while $\alpha=0$ is unbalanced GW), while $\tau>0$ controls the marginal relaxation through the KL penalties ($\rho_1=\rho_2=\tau$). These composite objectives are seamlessly handled by the proposed Riemannian framework.
% Unbalanced GW and Fused GW (FGW)~\cite{Vayer2019} extend the formulations above by combining the standard GW and linear OT costs with the KL marginal penalties of unbalanced OT. These composite objectives can be seamlessly handled via the proposed Riemannian framework.

% The unbalanced GW cost adds marginal-dependent quadratic terms $p^\top (C^X)^{\odot 2} p + q^\top (C^Y)^{\odot 2} q$ that are constant in the balanced case but contribute to the gradient here.
% The \emph{Fused Gromov--Wasserstein} (FGW)~\cite{Vayer2019} combines inter-domain features and intra-domain geometry:
% \begin{equation}
%   \min_{(U,V) \in \Mub}\; \alpha\,\ip{C_{\mathrm{inter}}}{\Gamma} + (1{-}\alpha)\,\mathrm{GW}(\Gamma) + \rho_1\,\KL(U\1 \| a) + \rho_2\,\KL(V\1 \| b),
%   \label{eq:ufgw}
% \end{equation}
% where $\alpha \in [0,1]$ interpolates between pure OT ($\alpha{=}1$) and pure GW ($\alpha{=}0$).
% The Euclidean gradient is $\alpha\,\nabla_E f_{\mathrm{OT}} + (1{-}\alpha)\,\nabla_E f_{\mathrm{GW}} + \nabla_E f_{\mathrm{penalty}}$; the Riemannian conversion uses~\eqref{eq:rgrad} unchanged.

\paragraph{Computational complexity.}
Let $N = m + n$. Following standard practice in low-rank OT~\cite{ScetbonCuturiPeyre2021,ScetbonPeyreCuturi2022}, the ground cost is assumed to admit a factored (low-rank) form, under which cost evaluations scale as $O(rN)$. The per-iteration computational complexity of our first- and second-order Riemannian algorithms is $O((k_{\mathrm{CG}} + L_{\mathrm{sink}})\,rN)$ for the balanced setting and strictly $O(rN)$ for the unbalanced setting. Asymptotically, these costs exactly match existing state-of-the-art first-order baselines, including LOT~\cite{ScetbonCuturiPeyre2021}, LR-GW~\cite{ScetbonPeyreCuturi2022}, and UB-LOT~\cite{ScetbonSejourne2023}, which require $O(L_{\mathrm{in}}\,rN)$ per mirror descent iteration to perform $L_{\mathrm{in}}$ inner sweeps of Dykstra or Bregman updates. Crucially, our framework achieves this while providing a full second-order oracle (Hessian-vector products). Furthermore, our unbalanced geometry structurally guarantees fast iterations. In UB-LOT~\cite{ScetbonSejourne2023}, the mirror descent subproblems couple row-wise marginal penalties with exact column-wise latent consistency constraints, forcing the use of iterative Dykstra sweeps. In contrast, our Riemannian approach isolates the marginal penalties strictly within the Euclidean gradient. The geometric operations are thus decoupled from the penalties, causing the inner loops to vanish entirely and dropping the per-iteration complexity to the base evaluation cost of $O(rN)$. All methods share the same $O(rN)$ memory footprint.

\section{Numerical Experiments}
\label{sec:experiments}
In this section, we benchmark our approach against state-of-the-art in various low-rank OT settings. We begin with balanced experiments in linear OT and GW setting (Sections~\ref{sec:exp_lot_bench}--\ref{sec:app_anisotropic_gw}).  We then turn to the unbalanced setting (Sections~\ref{sec:exp_outlier}--\ref{sec:exp_ub_gw_fgw}), where the closed-form projector and retraction on $\Mub$ eliminate all inner iterations, and the speed advantage of our framework is most pronounced.

\textbf{Baselines.} We compare our Riemannian solvers against existing low-rank OT solvers such as: (a) LOT~\cite{ScetbonCuturiPeyre2021} in linear OT setting, (b) LR-GW~\cite{ScetbonPeyreCuturi2022} in Gromov-Wasserstein (GW) OT setting, and (c) UB-LOT~\cite{ScetbonSejourne2023} in unbalanced OT setting. We also include comparisons with FactoredOT~\cite{Forrow2019} and FRLC~\cite{Halmos2024} in the appendix. 
The hyper-parameter details are as follows.

\textbf{Setup.} Our algorithms are implemented in PyManopt~\cite{TownsendPyManopt2016}. 
The steepest descent (SD) and conjugate gradients (CG) use Armijo line search, while our trust-regions (TR) uses $\ge 3$ inner truncated-CG steps. All remaining TR parameters (step-acceptance threshold $\rho'{=}0.1$, inner tCG tolerances $\kappa{=}0.1$ and $\theta{=}1$, initial radius $\Delta_0{=}\Delta_{\mathrm{bar}}/8$ with $\Delta_{\mathrm{bar}}{=}\sqrt{\dim\mathcal{M}}$) are kept at their PyManopt defaults. 
Our solvers require neither entropic regularization nor a step-size parameter. In contrast, LOT and LR-GW require an initial step-size~$\gamma_0$ (set as $10$) and entropic regularization~$\varepsilon$. While discussing the results, we report their unregularized cost. UB-LOT requires a step-size  parameter~$\gamma \in \{10, 70, 200\}$ (denoted UB-LOT$_\gamma$ in tables). Additional hyper-parameter tuning details are discussed in the appendix.

All the experiments are performed on an Intel i7-12700H with 64\,GB RAM and the code is available at \url{https://github.com/pratikjawanpuria/RiemannianLowRankOT}. Every method terminates when the relative  change in cost falls below $10^{-6}$ over a window of $5$ iterations, subject to a hard cap of $1000$ outer iterations in the balanced setting and $500$ in the unbalanced setting. In unbalanced setting, we also report  the total transported mass $\1^\top \Gamma \1$ (``Mass'') and the orphan fraction (``Orph''), i.e., the mass assigned to target clusters absent from the source. 

For GW (and FGW) experiments, the algorithms maximize the cross-domain interaction $f_{\mathrm{GW}}$ in~\eqref{eq:gw_factored}. Thus, we report the full squared-loss GW discrepancy, which decomposes as a marginal-dependent constant plus $-2\,f_{\mathrm{GW}}$, so that smaller values are better. The reported values may therefore be negative whenever $2\,f_{\mathrm{GW}}$ exceeds the marginal-dependent constant.

\subsection{Balanced Linear OT}
\label{sec:exp_lot_bench}
We follow the experimental setup of ~\cite{ScetbonCuturiPeyre2021}. 
%We replicate the benchmark from Figure~3 of~\cite{ScetbonCuturiPeyre2021}.
The source is a mixture of $3$ Gaussians in $\RR^2$ and the target is a mixture of $2$ Gaussians, with $n{=}m{=}10\,000$ uniform-weight points (exact OT cost $f^*{=}0.287$). 
We compare Riemannian (SD, CG, TR) and LOT with $\varepsilon \in \{0, 5{\times}10^{-3}, 0.05\}$ at ranks $r \in \{10, 50, 100, 500\}$. The Sinkhorn algorithm \cite{Cuturi2013} has $f/f^*{=}1.016$ and it serves as a full-rank reference. 
%For our approach, the CG-based projection (Remark~\ref{rem:cg}) is used for $r \ge 50$.  
Table~\ref{tab:lot_benchmark} reports results at $r=\{10,500\}$. Please refer to Appendix~\ref{sec:app_lot_full} for results at other ranks. %%Appendix%%
\begin{table}[t]
\caption{Cost ratio $f/f^*$ (lower is better), iteration count, and wall-clock time for balanced OT on the Gaussian-mixture benchmark of~\cite{ScetbonCuturiPeyre2021} ($n{=}m{=}10\text{k}$, $\RR^2$, exact cost $f^*{=}0.287$; Sinkhorn reference: $f/f^*{=}1.016$).  Riem~CG matches the exact LP at $r{=}500$; LOT with $\varepsilon{>}0$ suffers entropic bias.  \textbf{Bold}: best ratio per rank.}
\label{tab:lot_benchmark}
\small
\centering
\begin{minipage}[t]{0.48\textwidth}
\centering
\setlength{\tabcolsep}{2pt}
\begin{tabular}{@{}l ccc@{}}
\toprule
\multicolumn{4}{@{}l}{\textbf{Rank $r{=}10$}} \\
\midrule
Method & $f/f^*$ & Iters & Time (s) \\
\midrule
Riem~SD  & $1.268$ & $23$   & $93$ \\
Riem~CG  & $1.253$ & $8$    & $73$ \\
Riem~TR  & $1.280$ & $11$   & $67$ \\
LOT $\varepsilon{=}0$       & $1.261$ & $213$  & $52$ \\
LOT $\varepsilon{=}5{\times}10^{-3}$ & $\mathbf{1.247}$ & $1001$ & $246$ \\
LOT $\varepsilon{=}0.05$    & $2.009$ & $1001$ & $37$ \\
\bottomrule
\end{tabular}
\end{minipage}\hfill
\begin{minipage}[t]{0.48\textwidth}
\centering
\setlength{\tabcolsep}{2pt}
\begin{tabular}{@{}l ccc@{}}
\toprule
\multicolumn{4}{@{}l}{\textbf{Rank $r{=}500$}} \\
\midrule
Method & $f/f^*$ & Iters & Time (s) \\
\midrule
Riem~SD  & $1.041$ & $23$   & $1035$ \\
Riem~CG  & $\mathbf{1.000}$ & $10$   & $1764$ \\
Riem~TR  & $1.083$ & $16$   & $1008$ \\
LOT $\varepsilon{=}0$       & $1.063$ & $235$  & $1454$ \\
LOT $\varepsilon{=}5{\times}10^{-3}$ & $1.118$ & $1001$ & $7289$ \\
LOT $\varepsilon{=}0.05$    & $1.936$ & $555$  & $1824$ \\
\bottomrule
\end{tabular}
\end{minipage}
\end{table}
We note that while Riemannian solvers require far fewer outer iterations, LOT's wall-clock time can be comparable due to the cheaper per-iteration cost of mirror descent. We also observe that the proposed solvers achieve better objective values at $r=500$ (Figure~\ref{fig:conv_balanced}a). 
%The most striking result is at $r{=}500$: Riem~CG recovers the exact LP solution ($f/f^*{=}1.000$) in only $10$ outer iterations, while LOT ($\varepsilon{=}0$) plateaus at $1.063$ after $235$ iterations .
% This gap widens with rank (at $r{=}10$ the methods are nearly tied on cost), suggesting that the Riemannian gradient provides increasingly effective descent directions as the manifold dimension grows.
We also observe that the entropic regularization biases the solution as LOT's objective $\varepsilon{=}0.05$ is high.  
% Entropic regularization proves harmful: LOT $\varepsilon{=}0.05$ is stuck at ratio ${\sim}2.0$ regardless of rank, and even the mild $\varepsilon{=}5{\times}10^{-3}$ degrades quality at high rank ($1.118$ vs.\ $1.063$), confirming that our regularization-free formulation avoids an inherent bias.
% On the other hand, LOT's per-iteration cost is lower, so its wall-clock time is competitive at small ranks (e.g., $52$\,s vs.\ $67$\,s at $r{=}10$).

% the Riemannian advantage is primarily in solution quality and iteration count. 
% rather than raw speed for balanced OT.

\subsection{Balanced GW}
\label{sec:app_anisotropic_gw}

We next test on a large-scale, ill-conditioned GW problem with the source having $m{=}10{,}000$ points sampled from $\mathcal{N}(0, I_{50})$ in $\RR^{50}$ and the target having $n{=}20{,}000$ points sampled from $\mathcal{N}(0, \Sigma)$ in $\RR^{100}$ with $\kappa(\Sigma){=}10^6$.
All methods use factored (matrix-free) GW cost evaluation.
Table~\ref{tab:anisotropic_gw} reports results at $r{=}5$ and $r{=}20$, with  results at $r{=}10$ in Appendix~\ref{sec:app_anisotropic_full}. %%Appendix%%

\begin{table}[t]
\caption{GW cost, iteration count, and wall-clock time for balanced GW on anisotropic Gaussians ($m{=}10\text{k}$ in $\RR^{50}$, $n{=}20\text{k}$ in $\RR^{100}$, $\kappa(\Sigma){=}10^6$; factored evaluation).  Riemannian solvers converge in under $15$ iterations; LR-GW exhausts $1001$ but achieves slightly lower cost at $r{=}20$.  Regularization $\varepsilon{=}0.01$ collapses to the marginal-only term.  \textbf{Bold}: best cost per rank.}
\label{tab:anisotropic_gw}
\small
\centering
\begin{minipage}[t]{0.48\textwidth}
\centering
\setlength{\tabcolsep}{2pt}
\begin{tabular}{@{}l ccc@{}}
\toprule
\multicolumn{4}{@{}l}{\textbf{Rank $r{=}5$}} \\
\midrule
Method & Cost & Iters & Time (s) \\
\midrule
Riem~SD  & $7519$ & $9$    & $76$ \\
Riem~CG  & $\mathbf{7513}$ & $14$   & $16$ \\
Riem~TR  & $7530$ & $182$  & $12$ \\
LR-GW $\varepsilon{=}0$      & $\mathbf{7513}$ & $1001$ & $256$ \\
LR-GW $\varepsilon{=}.005$   & $7573$ & $409$  & $24$ \\
LR-GW $\varepsilon{=}.01$    & $7614$ & $26$   & $5$ \\
\bottomrule
\end{tabular}
\end{minipage}\hfill
\begin{minipage}[t]{0.48\textwidth}
\centering
\setlength{\tabcolsep}{2pt}
\begin{tabular}{@{}l ccc@{}}
\toprule
\multicolumn{4}{@{}l}{\textbf{Rank $r{=}20$}} \\
\midrule
Method & Cost & Iters & Time (s) \\
\midrule
Riem~SD  & $7509$ & $7$    & $48$ \\
Riem~CG  & $7508$ & $9$    & $56$ \\
Riem~TR  & $7515$ & $160$  & $42$ \\
LR-GW $\varepsilon{=}0$      & $\mathbf{7506}$ & $1001$ & $314$ \\
LR-GW $\varepsilon{=}.005$   & $7576$ & $300$  & $36$ \\
LR-GW $\varepsilon{=}.01$    & $7614$ & $34$   & $5$ \\
\bottomrule
\end{tabular}
\end{minipage}
\end{table}

Unlike the convex linear-OT setting, the GW objective is non-convex and poses a more challenging optimization landscape. We observe that the proposed Riemannian approaches closely match the LR-GW ($\varepsilon{=}0$) objective but at much lower wall-clock time and iterations. 
% Interestingly, Riem~CG matches LR-GW ($\varepsilon{=}0$) exactly at $r{=}5$ (cost $7513$), while at $r{=}20$ LR-GW reaches a marginally lower cost ($7506$ vs.\ $7508$) but requires all $1001$ allocated iterations ($314$\,s). On the other hand, CG converges to within $0.03\%$ in $9$ iterations ($56$\,s). SD and CG consistently converge in under $15$ outer iterations across all tested ranks.
We again observe that the entropic regularization biases the solution of LR-GW. In particular, at $\varepsilon{=}0.01$, we analyzed that the LR-GW's solution degenerates to the marginal-only term at every rank, meaning the coupling ignores pairwise geometry entirely. %This is absent from the regularization-free Riemannian approach.

Figure~\ref{fig:conv_balanced}(b) shows convergence at $r{=}20$, where Riemannian solver converge to good values faster. While Riem SD and Riem CG are the fastest, Riem TR starts converging efficiently after finding the right basin of attraction. We further observe in Appendix~\ref{sec:exp_verylargescale} that large scale experiments on both balanced OT and GW have similar qualitative trends discussed above.
%retain these trends up to $n{=}50{,}000$. %%Appendix%%

\begin{figure}[t]
\centering
\begin{subfigure}[b]{0.32\textwidth}
  \centering
  \includegraphics[width=\linewidth]{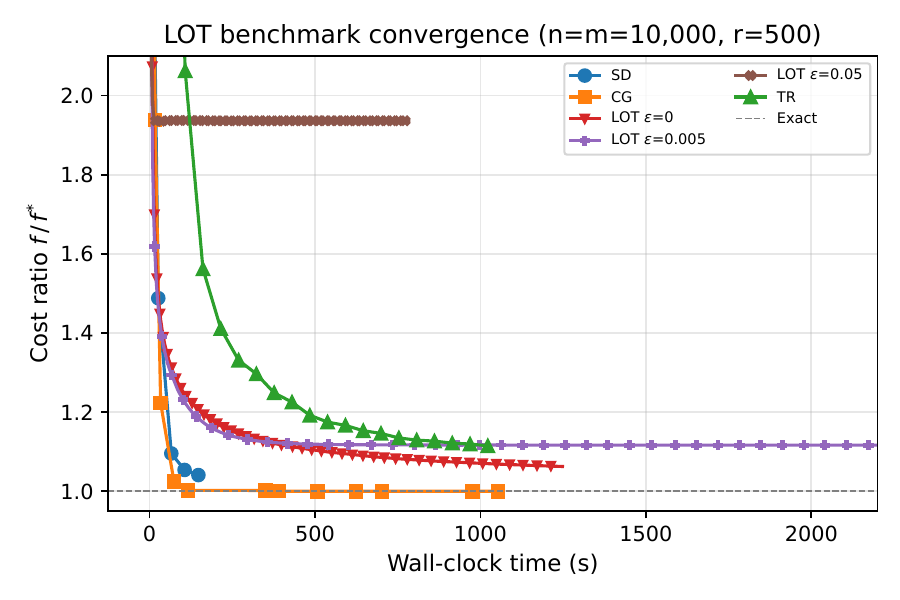}
  \caption{Linear OT ($r{=}500$)}
  \label{fig:conv_lot_bench}
\end{subfigure}
\hfill
\begin{subfigure}[b]{0.32\textwidth}
  \centering
  \includegraphics[width=\linewidth]{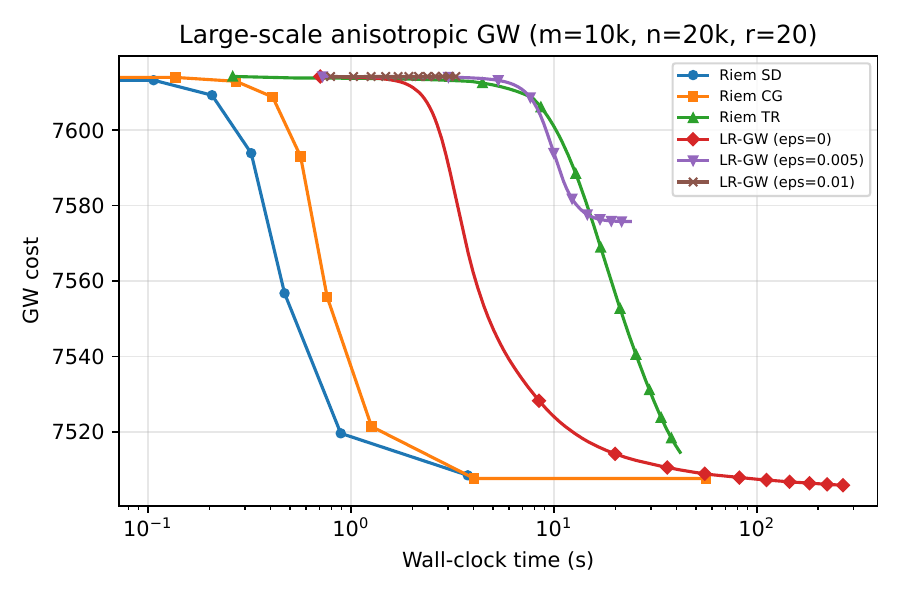}
  \caption{GW ($r{=}20$)}
  \label{fig:conv_anisotropic_gw}
\end{subfigure}
\hfill
\begin{subfigure}[b]{0.32\textwidth}
  \centering
  \includegraphics[width=\linewidth]{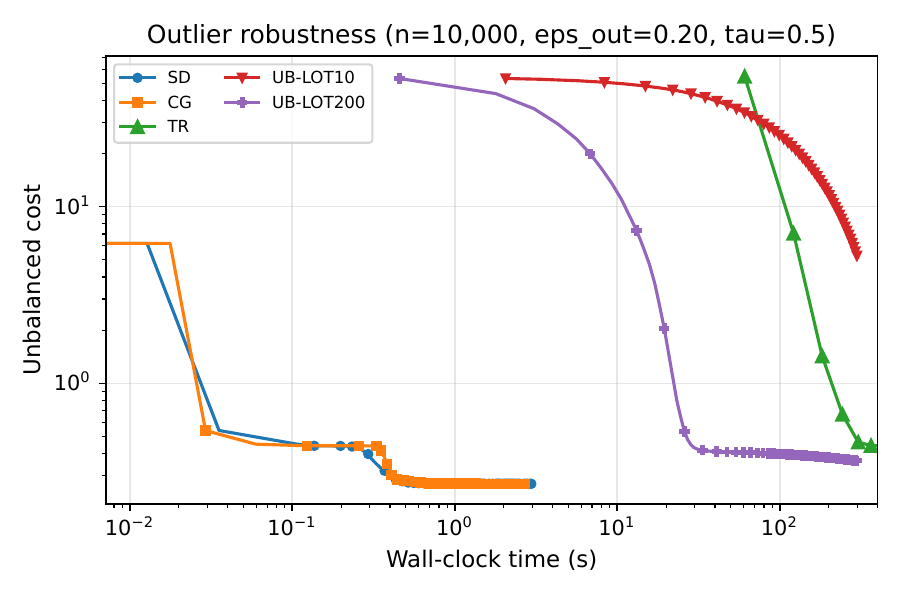}
  \caption{Unbal. OT ($\varepsilon_{\mathrm{out}}=0.20$)}
  \label{fig:conv_outlier}
\end{subfigure}
\caption{Cost vs.\ wall-clock time.  (a)~Balanced OT, $r{=}500$: CG reaches the exact LP cost.  (b)~Balanced GW, $r{=}20$: Riemannian methods converge in seconds; LR-GW decreases slowly over $1001$ iterations.  (c)~Unbalanced OT, $20\%$ outliers: SD/CG converge in seconds; UB-LOT times out at $300$\,s.}
\label{fig:conv_balanced}
\end{figure}

\subsection{Unbalanced OT: Outlier Robustness}
\label{sec:exp_outlier}

In this setting, source and target are $d{=}3$ Gaussian clusters contaminated with a fraction $\varepsilon_{\mathrm{out}} \in \{0.05, 0.10, 0.20\}$ of points replaced by uniform noise. We consider sizes $n \in \{5000, 10000\}$, rank $r{=}5$, and sweep $\tau \in \{0.5, 1, 5\}$. Table~\ref{tab:outlier} reports two representative configurations. The full $18$-configuration grid is in Appendix~\ref{sec:app_outlier_full}.%%Appendix%%

\begin{table}[t]
\caption{Unbalanced OT cost and transported mass on Gaussian clusters contaminated with outlier noise in two representative configurations. Riemannian solvers find $40$--$60\%$ lower cost than UB-LOT without parameter tuning; UB-LOT$_{10}$ diverges at high contamination.  \textbf{Bold}: lowest cost.}
\label{tab:outlier}
\small
\centering
\begin{minipage}[t]{0.48\textwidth}
\centering
\setlength{\tabcolsep}{2pt}
\begin{tabular}{@{}l cc@{}}
\toprule
\multicolumn{3}{@{}l}{\textbf{$n{=}5000$, $\varepsilon_{\mathrm{out}}{=}0.05$}} \\
\midrule
Method & Cost & Mass \\
\midrule
Riem~SD          & $\mathbf{0.201}$ & $0.80$ \\
Riem~CG          & $0.204$ & $0.79$ \\
Riem~TR          & $0.204$ & $0.79$ \\
UB-LOT$_{10}$    & $0.517$ & $0.97$ \\
UB-LOT$_{200}$   & $0.416$ & $0.84$ \\
\bottomrule
\end{tabular}
\end{minipage}\hfill
\begin{minipage}[t]{0.48\textwidth}
\centering
\setlength{\tabcolsep}{2pt}
\begin{tabular}{@{}l cc@{}}
\toprule
\multicolumn{3}{@{}l}{\textbf{$n{=}10000$, $\varepsilon_{\mathrm{out}}{=}0.20$}} \\
\midrule
Method & Cost & Mass \\
\midrule
Riem~SD          & $0.268$ & $0.73$ \\
Riem~CG          & $\mathbf{0.267}$ & $0.73$ \\
Riem~TR          & $\mathbf{0.267}$ & $0.73$ \\
UB-LOT$_{10}$    & $7.620$ & $0.83$ \\
UB-LOT$_{200}$   & $0.488$ & $0.75$ \\
\bottomrule
\end{tabular}
\end{minipage}
\end{table}

Our Riemannian solver achieves the lowest cost in every one of the $18$ configurations, typically $40$--$60\%$ below the best UB-LOT variant. 
This is because UB-LOT's parameter $\gamma$ simultaneously controls step size and marginal relaxation, making it impossible to tune one independently of the other. Small $\gamma$ ($=10$) under-relaxes the marginals, retaining too much outlier mass. Large $\gamma$ ($=200$) converges too slowly (Figure~\ref{fig:conv_balanced}c).
At $n{=}10{,}000$, $\varepsilon_{\mathrm{out}}{=}0.20$, UB-LOT$_{10}$ diverges entirely (cost $7.62$, nearly $28{\times}$ the Riemannian cost).
On the other hand, SD and CG require no parameter tuning and converge in $1$--$10$\,s. TR is also parameter-free but can be slower on individual configurations. It should be noted that this speed advantage is a direct consequence of the closed-form projector and retraction on $\Mub$ (Section~\ref{sec:manifold}): each Riemannian iteration costs $O(rN)$ with no inner loops, whereas UB-LOT runs Dykstra sweeps at every outer step.

\paragraph{Partial transport ($m$ up to $15{,}000$).} We next map $3$ Gaussian clusters to $2$ in $\RR^5$ at sizes up to $m{=}15{,}000$. Our Riemannian solvers attain the lowest objective in $5$ of $9$ configurations and the fastest wall-clock time in all $9$, finishing in seconds where UB-LOT needs minutes. 
%UB-LOT is competitive on objective primarily under strong marginal relaxation ($\tau{=}0.5$), where its inner Dykstra sweeps help discard mass; for $\tau{\ge}1$, the high-mass-retention regime common in practice, our solvers match or beat it. 
Full results are in Appendix~\ref{sec:app_partial_full}. %%Appendix%%

% \paragraph{Partial transport ($m$ up to $15{,}000$).} We also evaluate partial transport (source has $3$ Gaussian clusters in $\RR^5$, target has $2$ Gaussian clusters in $\RR^5$, with sizes up to $m{=}15{,}000$). Our Riemannian solvers obtain better objective in $5$ out of $9$ configurations and better wall-clock time in all $9$ configurations, finishing in seconds where UB-LOT requires minutes.
% UB-LOT achieves lower cost at the lowest penalty ($\tau{=}0.5$, where marginal relaxation matters most), but times out at larger sizes.
% Full results are in Appendix~\ref{sec:app_partial_full}.

\subsection{Unbalanced FGW}
\label{sec:exp_ub_gw_fgw}

\paragraph{Fused GW with partial overlap.}
% We transport $3$ Gaussian clusters ($m{=}300$, feature dimension $d_{\mathrm{feat}}{=}5$, $d_{\mathrm{geom}}{=}3$) to $2$ of $3$ clusters ($n{=}200$), rank $r{=}5$, sweeping the OT/GW trade-off $\alpha \in \{0.25, 0.5\}$ at $\tau{=}5$.
We transport a source of $3$ Gaussian clusters ($m{=}300$ points, with $d_{\mathrm{feat}}{=}5$ feature dimensions and $d_{\mathrm{geom}}{=}3$ geometry dimensions) to a target containing only $2$ of these $3$ clusters ($n{=}200$). The unmatched source cluster has no counterpart and should be left untransported (reflected in the orphan fraction). We use rank $r{=}5$ and sweep the feature/geometry trade-off $\alpha \in \{0.25, 0.5\}$ at marginal penalty $\tau{=}5$~\eqref{eq:ufgw}. 
While Table~\ref{tab:fgw_main} reports selected results, more results across all $(\alpha, \tau)$ are discussed in Appendix~\ref{sec:app_fgw_full}. %%Appendix%%

\begin{table}[t]
\caption{FGW cost, transported mass, orphan fraction, and wall-clock time for the partial-overlap experiment ($3{\to}2$ Gaussian clusters, $m{=}300$, $n{=}200$, $r{=}5$, $\tau{=}5$).  Riem~TR achieves the lowest cost at both $\alpha$ values and completes in under $2.2$\,s; all UB-LOT $\gamma$ variants produce similar costs but require $25$--$65$\,s.  \textbf{Bold}: lowest cost. }
\label{tab:fgw_main}
\small
\centering
\begin{minipage}[t]{0.48\textwidth}
\centering
\setlength{\tabcolsep}{2pt}
\begin{tabular}{@{}l cccc@{}}
\toprule
\multicolumn{5}{@{}l}{\textbf{$\alpha{=}0.25$}} \\
\midrule
Solver & Cost & Mass & Orph & Time (s) \\
\midrule
Riem~SD            & $.095$ & $.98$ & $.33$ & $0.9$ \\
Riem~CG            & $.064$ & $.99$ & $.33$ & $0.9$ \\
Riem~TR            & $\mathbf{.062}$ & $.99$ & $.33$ & $2.2$ \\
UB-LOT$_{10}$  & $.079$ & $.98$ & $.33$ & $64$ \\
UB-LOT$_{70}$  & $.079$ & $.98$ & $.33$ & $65$ \\
UB-LOT$_{200}$ & $.079$ & $.98$ & $.33$ & $40$ \\
\bottomrule
\end{tabular}
\end{minipage}\hfill
\begin{minipage}[t]{0.48\textwidth}
\centering
\setlength{\tabcolsep}{2pt}
\begin{tabular}{@{}l cccc@{}}
\toprule
\multicolumn{5}{@{}l}{\textbf{$\alpha{=}0.5$}} \\
\midrule
Solver & Cost & Mass & Orph & Time (s) \\
\midrule
Riem~SD            & $.110$ & $.99$ & $.32$ & $0.9$ \\
Riem~CG            & $.104$ & $.99$ & $.33$ & $0.6$ \\
Riem~TR            & $\mathbf{.096}$ & $.99$ & $.32$ & $1.9$ \\
UB-LOT$_{10}$  & $.099$ & $.98$ & $.32$ & $64$ \\
UB-LOT$_{70}$  & $.099$ & $.98$ & $.32$ & $57$ \\
UB-LOT$_{200}$ & $.105$ & $.98$ & $.32$ & $25$ \\
\bottomrule
\end{tabular}
\end{minipage}
\end{table}

TR attains the lowest FGW cost at both $\alpha$ values ($\sim22\%$ below the best UB-LOT variant at $\alpha{=}0.25$) in at most $2.2$\,s, versus $25$--$65$\,s for UB-LOT. The three UB-LOT $\gamma$ variants yield nearly identical costs, indicating limited expressiveness on this non-convex landscape. 
An additional unbalanced GW experiment (sphere $\to$ hemisphere, Appendix~\ref{sec:app_ugw_shape}) further highlights the role of curvature, where TR attains $20$--$31\%$ lower cost than SD/CG by using the Hessian to escape shallow local minima. In general, the second-order information is more valuable on the non-convex objectives (GW, FGW, UGW), where the optimization landscape is richer and first-order methods are more prone to stall. %%Appendix%%

% TR achieves the lowest FGW cost at both $\alpha$ values ({$\sim$}$22\%$ below UB-LOT at $\alpha{=}0.25$), completing in under $2.2$\,s while UB-LOT needs $25$--$65$\,s.
% All UB-LOT $\gamma$ variants produce nearly identical costs here, suggesting that the UB-LOT parameterization has limited expressiveness on this non-convex landscape.
% An additional unbalanced GW experiment (sphere $\to$ hemisphere, Appendix~\ref{sec:app_ugw_shape}) further highlights the role of curvature: TR finds $20$--$31\%$ lower cost than SD/CG, since the Hessian helps escape shallow local minima that first-order methods cannot distinguish from the global basin.
% Across all experiments, access to second-order information proves most valuable on non-convex objectives (GW, FGW, UGW), where the optimization landscape is richer and first-order methods are more prone to stalling.
%We further verify in Appendix~\ref{sec:exp_verylargescale} that both balanced OT and GW scale to $n{=}50{,}000$ with the same qualitative trends observed above.

% ======================================================================
%  7. CONCLUSION
% ======================================================================
\section{Conclusion}
\label{sec:conclusion}
We present a unified, cost-agnostic Riemannian framework for low-rank optimal transport, enabling first- and second-order solvers for balanced and unbalanced linear OT, GW, and fused GW under a single Fisher–Rao geometry. By reducing unbalanced operations to closed form and eliminating inner loops, our tuning-free approach achieves order-of-magnitude speedups over mirror-descent baselines on large-scale experiments. For convex objectives, we also provide a post-hoc rank-sufficiency certificate with a computable duality-gap bound. Extending the framework to structured factor constraints (e.g., sparsity) is a promising future direction. \newline

\noindent
\textbf{Acknowledgements:} PJ acknowledges the support of IIT Bombay seed grant and Anusandhan National
Research Foundation grant.
\newpage

% We introduced a Riemannian framework for low-rank OT that unifies balanced and unbalanced linear OT, GW, and fused GW under a single Fisher-Rao geometry.
% The construction is cost-agnostic and, for the first time, enables second-order (trust-region) solvers for low-rank transport.
% On the unbalanced manifold, projectors and retractions reduce to closed form, eliminating all inner iterations; a rank-sufficiency certificate further provides post-hoc global-optimality guarantees for convex objectives.
% Experiments up to $n{=}50{,}000$ confirm consistent iteration-count advantages over mirror-descent baselines, with the unbalanced regime yielding order-of-magnitude wall-clock speedups, all without requiring entropic regularization or step-size tuning.
% Extending the framework to real-world tasks and to structured factor constraints (e.g., sparsity) are natural next steps.

% ======================================================================
%  REFERENCES
% ======================================================================
\bibliographystyle{splncs04}
\bibliography{ecml_references}

% ======================================================================
%  APPENDIX
% ======================================================================
\newpage
\appendix

\begin{center}
{\huge Appendix}
\end{center}

\bigskip

\BMmod{%
% The appendix is organized as follows.
\begingroup
\setlength{\parskip}{0.25ex}
\begin{itemize}[leftmargin=1.4em, label={}, itemsep=0.25ex, topsep=0.5ex]
  \item \textbf{Appendix~\ref{sec:app_proofs}: Detailed Proofs.} Collects all deferred proofs (manifold structure, projectors, retraction, Hessian, rank-sufficiency certificate).
  \item \textbf{Appendix~\ref{sec:app_solver_params}: Solver Parameters and Implementation Details.} Documents solver parameters, initialization, and convergence criteria for reproducibility.
    \begin{itemize}[leftmargin=2em, label={\textendash}, itemsep=0.1ex, topsep=0.1ex]
      \item \ref{sec:app_init}~~Initialization.
      \item \ref{sec:app_convergence}~~Convergence Criteria.
      \item \ref{sec:app_baseline_params}~~Baseline-Specific Parameters.
      \item \ref{sec:app_gw_eval}~~GW and FGW Cost Evaluation.
      \item \ref{sec:app_seeds}~~Random Seeds and Reproducibility.
    \end{itemize}
  \item \textbf{Appendix~\ref{sec:app_main_full}: Full Results for Main-Text Experiments.} Reports full results at intermediate ranks.
    \begin{itemize}[leftmargin=2em, label={\textendash}, itemsep=0.1ex, topsep=0.1ex]
      \item \ref{sec:app_lot_full}~~LOT Benchmark: Full Results at $r \in \{50, 100\}$.
      \item \ref{sec:app_anisotropic_full}~~Anisotropic GW: Full Results at $r{=}10$.
    \end{itemize}
  \item \textbf{Appendix~\ref{sec:app_bal_small}: Additional Balanced Benchmarks.} Small- and large-scale balanced OT and GW benchmarks.
    \begin{itemize}[leftmargin=2em, label={\textendash}, itemsep=0.1ex, topsep=0.1ex]
      \item \ref{sec:app_bal_lot_small}~~Linear OT: Cost vs.\ Rank ($m{=}50$, $n{=}60$).
      \item \ref{sec:app_bal_gw_small}~~GW: Cost vs.\ Rank ($m{=}30$, $n{=}25$).
      \item \ref{sec:app_gaussian}~~Additional Balanced OT Benchmark: Gaussian Convergence ($n{=}m{=}5{,}000$).
      \item \ref{sec:exp_verylargescale}~~Additional Large-Scale Balanced OT and GW Experiments ($n{=}10{,}000$--$50{,}000$).
    \end{itemize}
  \item \textbf{Appendix~\ref{sec:app_ub}: Extended Unbalanced Results.} Extends the unbalanced experiments (outlier robustness, partial transport, MNIST, UGW, FGW).
    \begin{itemize}[leftmargin=2em, label={\textendash}, itemsep=0.1ex, topsep=0.1ex]
      \item \ref{sec:app_outlier_full}~~Full Outlier Robustness Results.
      \item \ref{sec:app_partial_full}~~Full Partial Transport Results.
      \item \ref{sec:app_mnist_ub}~~MNIST Digit Matching.
      \item \ref{sec:app_ugw_shape}~~Extended Unbalanced GW: Partial Shape Matching.
      \item \ref{sec:app_fgw_full}~~Extended Unbalanced FGW.
    \end{itemize}
  \item \textbf{Appendix~\ref{sec:app_sensitivity}: Hyperparameter Sensitivity.} Hyperparameter and retraction-tolerance sensitivity studies.
    \begin{itemize}[leftmargin=2em, label={\textendash}, itemsep=0.1ex, topsep=0.1ex]
      \item \ref{sec:app_gamma}~~Step-Size Sensitivity ($\gamma_0$).
      \item \ref{sec:app_reg}~~Entropic Regularization Bias ($\varepsilon$).
      \item \ref{sec:app_retraction}~~Retraction Tolerance and Schur Conditioning.
    \end{itemize}
\end{itemize}
\endgroup
}

\section{Detailed Proofs}
\label{sec:app_proofs}

\begin{proof}[Proof of Theorem~\ref{thm:manifold_bal} (Manifold structure)]
\emph{Balanced case.}
Consider the affine constraint map $F_{\mathrm{bal}}: \mathcal{A} \to \mathbb{R}^m \times \mathbb{R}^n \times \mathbb{R}^r$ defined by $F_{\mathrm{bal}}(U,V) = (U\mathbf{1} - a,\; V\mathbf{1} - b,\; U^\top \mathbf{1} - V^\top \mathbf{1})$.
This map outputs $m + n + r$ scalars.
However, because the total mass is conserved ($\mathbf{1}^\top a = \mathbf{1}^\top b$), the constraint equations are linearly dependent: the global sum constraint $\mathbf{1}^\top (U\mathbf{1} - a) = \mathbf{1}^\top (U^\top \mathbf{1}) - \mathbf{1}^\top a$ is entirely redundant given the latent consistency constraint.
This single redundancy implies the Jacobian $DF_{\mathrm{bal}}$ has a constant rank of $m + n + r - 1$ everywhere on $\mathcal{A}$.
By the constant-rank level set theorem, the pre-image $\mathcal{M}_{\mathrm{bal}} = F_{\mathrm{bal}}^{-1}(0)$ is a smooth embedded submanifold.
Its dimension is the dimension of the ambient space minus the rank of the constraint map: $(m{+}n)r - (m{+}n{+}r{-}1) = (m{+}n{-}1)(r{-}1)$.
The manifold is trivially non-empty; for instance, the independent coupling defined by $U_{ik} = a_i / r$ and $V_{jk} = b_j / r$ lies in $\mathcal{M}_{\mathrm{bal}}$.

\emph{Unbalanced case.}
The constraint map $F_{\mathrm{ub}}(U,V) = U^\top \mathbf{1} - V^\top \mathbf{1}$ is linear with $r$ scalar outputs.
Its Jacobian has full row rank $r$ because there are no global mass conservation constraints redundantly binding the latent modes.
Consequently, $\mathcal{M}_{\mathrm{ub}} = F_{\mathrm{ub}}^{-1}(0)$ forms a smooth embedded submanifold of dimension $(m{+}n)r - r = (m{+}n{-}1)r$.
It is non-empty, containing trivially uniform scaled factors such as $U_{ik} = 1/m$ and $V_{jk} = 1/n$.
\end{proof}

\begin{proof}[Proof of the balanced projector well-posedness]
We must show that the orthogonal projection $\Pi Z$ onto $T_{(U,V)}\Mbal$ under the Fisher--Rao metric is well-defined despite the rank deficiency of the KKT system~\eqref{eq:KKT}.

\emph{Step 1: Schur complement and its kernel.}
Eliminating $\alpha$ and $\beta$ from~\eqref{eq:KKT} via $\alpha = \diag(a)^{-1}(Z_U\1 - U\gamma)$ and $\beta = \diag(b)^{-1}(Z_V\1 + V\gamma)$ yields the Schur complement system $S\gamma = b_S$, where
$S = 2\diag(s) - U^\top\diag(a)^{-1}U - V^\top\diag(b)^{-1}V \in \RR^{r \times r}$.
We verify $S\1 = 0$: the $k$-th entry of $S\1$ is $2s_k - \sum_i (U_{ik}/a_i)(U\1)_i - \sum_j (V_{jk}/b_j)(V\1)_j = 2s_k - \sum_i U_{ik} - \sum_j V_{jk} = 2s_k - s_k - s_k = 0$,
using the marginal constraints $U\1 = a$ and $V\1 = b$.
Hence $\ker(S) \supseteq \mathrm{span}\{\1\}$, confirming the single rank deficiency.

\emph{Step 2: Uniqueness of the projection.}
The full KKT matrix $K$ has null vector $(\alpha,\beta,\gamma) = c(-\1_m, \1_n, \1_r)$ for any scalar $c$.
Substituting into the correction formula~\eqref{eq:projUV_bal}: the $U$-correction is $U \odot ((-c\1)\1^\top + \1(c\1)^\top) = U \odot \mathbf{0} = 0$, and similarly for $V$.
Therefore $\Pi Z$ is independent of which particular solution $(\alpha,\beta,\gamma)$ is chosen.

\emph{Step 3: Consistency.}
The RHS of~\eqref{eq:KKT} must be orthogonal to the null vector $(-\1_m, \1_n, \1_r)$:
$-\1^\top(Z_U\1) + \1^\top(Z_V\1) + \1^\top(\1^\top Z_U - \1^\top Z_V) = -\1^\top Z_U\1 + \1^\top Z_V\1 + \1^\top Z_U\1 - \1^\top Z_V\1 = 0$.
Hence the system is always consistent.

\emph{Step 4: Projection properties.}
Idempotency: for $\eta \in T_{(U,V)}\Mbal$, the RHS of~\eqref{eq:KKT} evaluates to zero (since $\eta_U\1 = 0$, $\eta_V\1 = 0$, and $\1^\top\eta_U = \1^\top\eta_V$), giving multipliers $(\alpha,\beta,\gamma) = 0$ and thus $\Pi\eta = \eta$.
Self-adjointness follows from the general theory of metric-orthogonal projections onto submanifolds~\cite{AbsilMahonySepulchre2008,Boumal2023}.
\end{proof}

\begin{proof}[Proof of Sinkhorn convergence (Theorem~\ref{thm:retraction}, balanced case)]
Algorithm~\ref{alg:sinkhorn} performs cyclic Bregman projections onto three affine constraint sets:
$\mathcal{C}_1 = \{(U,V) : U\1 = a\}$,
$\mathcal{C}_2 = \{(U,V) : V\1 = b\}$, and
$\mathcal{C}_3 = \{(U,V) : U^\top\!\1 = V^\top\!\1\}$.
Each projection minimizes the KL divergence to the current iterate subject to the respective affine constraint, and admits a closed-form diagonal or column scaling (lines 3--5 of Algorithm~\ref{alg:sinkhorn}).
The intersection $\mathcal{C}_1 \cap \mathcal{C}_2 \cap \mathcal{C}_3 = \Mbal$ is non-empty (Theorem~\ref{thm:manifold_bal}).
The iterates therefore converge to the unique point in $\Mbal$ that minimizes $\KL(U \| \bar{U}) + \KL(V \| \bar{V})$, at a linear rate~\cite{BauschkeLewis2000,luo1992convergence}.
\end{proof}

\begin{proof}[Proof of the unbalanced projector~\eqref{eq:proj_ub}]
The tangent space $T_{(U,V)}\Mub = \{(\dot U, \dot V) : \dot U^\top\!\1 = \dot V^\top\!\1\}$ has codimension~$r$ in the ambient space.
Under the Fisher--Rao metric~\eqref{eq:metric}, the normal space is the orthogonal complement of $T_{(U,V)}\Mub$.
A vector $\eta = (\eta_U, \eta_V)$ lies in the normal space if and only if $\langle \eta, \xi \rangle_{(U,V)} = 0$ for all $\xi \in T_{(U,V)}\Mub$.
Writing the inner product condition: $\sum_{ik} (\eta_U)_{ik} (\xi_U)_{ik} / U_{ik} + \sum_{jk} (\eta_V)_{jk} (\xi_V)_{jk} / V_{jk} = 0$ for all $(\xi_U, \xi_V)$ with $\xi_U^\top\!\1 = \xi_V^\top\!\1$. 
Restricting first to perturbations $(\xi_U, 0)$ with $\xi_U^\top\!\1 = 0$ (which automatically satisfy the tangent constraint) forces $(\eta_U)_{ik}/U_{ik}$ to lie in the orthogonal complement of $\{\xi_U : \xi_U^\top\!\1 = 0\}$, i.e., to be constant in~$i$ for each~$k$: $(\eta_U)_{ik} = U_{ik}\,\gamma_k$ for some $\gamma \in \RR^r$.
The symmetric argument with $(0, \xi_V)$ and $\xi_V^\top\!\1 = 0$ yields $(\eta_V)_{jk} = V_{jk}\,\tilde\gamma_k$ for some $\tilde\gamma \in \RR^r$.
Finally, testing against any $(\xi_U, \xi_V)$ with a common nonzero column-sum vector $c \in \RR^r$ ($\xi_U^\top\!\1 = \xi_V^\top\!\1 = c$) reduces the inner product condition to $\sum_k (\gamma_k + \tilde\gamma_k) c_k = 0$ for all $c$, hence $\tilde\gamma = -\gamma$. 
Thus $\mathcal{N}_{(U,V)} = \{(U \odot \1\gamma^\top,\; -V \odot \1\gamma^\top) : \gamma \in \RR^r\}$, which is $r$-dimensional as expected.

The projection $\Pi Z = Z - \eta$ must satisfy $\1^\top(\Pi Z)_U = \1^\top(\Pi Z)_V$, i.e.,
$\1^\top Z_U - s \odot \gamma = \1^\top Z_V + s \odot \gamma$,
where $s = U^\top\!\1 = V^\top\!\1$.
Solving for $\gamma$ gives $\gamma = (\1^\top Z_U - \1^\top Z_V) \oslash (2s)$, completing the proof.
\end{proof}

\begin{proof}[Proof of the ambient Fisher--Rao connection~\eqref{eq:connection}]
On $\RR_{++}^d$ with metric $g_{ij}(x) = \delta_{ij}/x_i$, the inverse metric is $g^{ij} = \delta_{ij}\,x_i$.
The Christoffel symbols of the Levi-Civita connection are
$\Gamma^k_{ij} = \tfrac{1}{2}g^{kl}(\partial_i g_{jl} + \partial_j g_{il} - \partial_l g_{ij})$.
Since $g_{jl} = \delta_{jl}/x_j$, we have $\partial_i g_{jl} = -\delta_{ij}\delta_{jl}/x_j^2$.
The only non-vanishing symbol is $\Gamma^i_{ii} = \tfrac{1}{2}x_i \cdot (-1/x_i^2) = -1/(2x_i)$; all mixed symbols vanish.
The covariant derivative of a vector field $\eta$ along $\xi$ is therefore
$(\bar\nabla_\xi \eta)_i = D\eta_i[\xi] + \Gamma^i_{ii}\,\xi_i\,\eta_i = D\eta_i[\xi] - \tfrac{1}{2}\xi_i\eta_i/x_i$.
Extending to the product space $\RR_{++}^{m \times r} \times \RR_{++}^{n \times r}$ by applying this formula entry-wise to $U$ and $V$ yields~\eqref{eq:connection}.
The induced connection on the submanifold follows from the Gauss equation: $\nabla_\xi \eta = \Pi(\bar\nabla_\xi \eta)$~\cite{AbsilMahonySepulchre2008}.
\end{proof}

\begin{proof}[Proof of Theorem~\ref{thm:hessian} (Hessian symmetry)]
The Riemannian Hessian is defined as $\mathrm{Hess}\,f[\Xi] = \nabla_\Xi\,\mathrm{grad}\,f$, where $\nabla$ is the Levi-Civita connection on $(\mathcal{M}, \langle \cdot,\cdot\rangle)$.
Symmetry of the Hessian, i.e., $\langle \mathrm{Hess}\,f[\Xi], \eta \rangle = \langle \Xi, \mathrm{Hess}\,f[\eta] \rangle$ for all $\Xi, \eta \in T_{(U,V)}\mathcal{M}$, follows from two properties of the Levi-Civita connection: (i)~metric compatibility ($\Xi\langle \eta, \zeta \rangle = \langle \nabla_\Xi \eta, \zeta \rangle + \langle \eta, \nabla_\Xi \zeta \rangle$) and (ii)~torsion-freeness ($\nabla_\Xi \eta - \nabla_\eta \Xi = [\Xi, \eta]$).
Combining these, $\langle \nabla_\Xi\,\mathrm{grad}\,f, \eta \rangle = \Xi(\eta(f)) - (\nabla_\Xi \eta)(f) = \eta(\Xi(f)) - (\nabla_\eta \Xi)(f) = \langle \nabla_\eta\,\mathrm{grad}\,f, \Xi \rangle$, using the Lie bracket identity.
Positive semidefiniteness at a local minimum is a standard second-order necessary optimality condition on Riemannian manifolds~\cite[Prop.~5.5.3]{AbsilMahonySepulchre2008}.
\end{proof}

\begin{proof}[Proof of Theorem~\ref{thm:retraction} (retraction validity)]
We verify the three retraction axioms.

\emph{(i) Centering: $\Retr_{(U,V)}(0) = (U,V)$.}
At $t=0$, the mirror step~\eqref{eq:mirror} gives $(\bar U, \bar V) = (U \odot \exp(0), V \odot \exp(0)) = (U,V)$.
Since $(U,V) \in \mathcal{M}$, the KL projection $\mathrm{KLProj}_{\mathcal{M}}(U,V) = (U,V)$.

\emph{(ii) First-order consistency: $(d/dt)|_{t=0}\Retr_{(U,V)}(t\Xi) = \Xi$.}
By Lemma~\ref{lem:mirror}, $(d/dt)|_{t=0}(\bar U, \bar V) = (\Xi_U, \Xi_V)$.
For the balanced case, and for $(\bar U,\bar V)$ in a neighborhood of $(U,V) \in \Mbal$ (automatic for small $t$ by~\eqref{eq:mirror}), the KL projection is defined implicitly by the KKT conditions of the constrained minimization $\min_{(U,V) \in \Mbal} \KL(U \| \bar U) + \KL(V \| \bar V)$.
Applying the implicit function theorem to this KKT system (which has a smooth, full-rank Jacobian at $t = 0$) shows that the projection is a smooth function of $(\bar U, \bar V)$, and its differential at $t = 0$ is the identity on $T_{(U,V)}\Mbal$. This follows from idempotency of the projection on $\Mbal$, i.e., differentiating $\mathrm{KLProj}_{\Mbal}(p) = p$ along a curve in $\Mbal$ at $p = (U,V)$, a standard property of Bregman projections onto smooth submanifolds~\cite[Sec.~4.1.2]{AbsilMahonySepulchre2008}. Here idempotency refers to the limit map fixing points already on $\Mbal$ (each Bregman projection trivially fixes any point in its constraint set, so the converged cyclic limit fixes any $p \in \Mbal$), and not to per-step idempotency of a single Bregman sweep.
For the unbalanced case, the scaling factors are $t_k = \sqrt{s_k(\bar V)/s_k(\bar U)}$.
Differentiating: $(d/dt)|_{t=0}\log t_k = \tfrac{1}{2}((\1^\top\Xi_V)_k/s_k - (\1^\top\Xi_U)_k/s_k)$.
Since $\Xi \in T_{(U,V)}\Mub $ implies $\Xi_U^\top\!\1 = \Xi_V^\top\!\1$, we get $\dot{t}_k(0) = 0$.
Hence $(d/dt)|_{t=0} U^+_{ik} = (d/dt)|_{t=0}[\bar U_{ik} \cdot t_k] = \Xi_{U,ik} \cdot 1 + U_{ik} \cdot 0 = \Xi_{U,ik}$, and similarly for $V$.

\emph{(iii) Feasibility: $\Retr_{(U,V)}(t\Xi) \in \mathcal{M}$ with strict positivity.}
Strict positivity of $(\bar U, \bar V)$ follows from the exponential in~\eqref{eq:mirror}.
For $\Mbal$, Algorithm~\ref{alg:sinkhorn} preserves positivity (it applies only positive scalings) and converges to $\Mbal$ (by the Sinkhorn convergence proof above).
For $\Mub$, the scaling~\eqref{eq:retr_ub} preserves positivity ($t_k > 0$) and enforces $s(U^+) = s(V^+)$ by construction.
\end{proof}

\begin{lemma}[Mirror step properties]\label{lem:mirror}
For any $(U,V) \in \mathcal{M}$, $\Xi \in T_{(U,V)}\mathcal{M}$, and $t \geq 0$, the mirror step~\eqref{eq:mirror} satisfies:
\begin{enumerate}[nosep]
  \item \emph{Strict positivity:} $\bar{U}_{ik} = U_{ik}\exp(t\,\Xi_{U,ik}/U_{ik}) > 0$ since $U_{ik} > 0$ and $\exp(\cdot) > 0$.
  \item \emph{Centering:} At $t = 0$, $\bar{U}_{ik} = U_{ik}\exp(0) = U_{ik}$, and similarly $\bar{V}_{jk} = V_{jk}$.
  \item \emph{First-order consistency:} $(d/dt)|_{t=0}\bar{U}_{ik} = U_{ik} \cdot (\Xi_{U,ik}/U_{ik}) \cdot \exp(0) = \Xi_{U,ik}$.
\end{enumerate}
\end{lemma}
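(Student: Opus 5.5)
The statement to be proved is Lemma~\ref{lem:mirror}, and each of its three items is a direct entrywise calculation from the mirror step~\eqref{eq:mirror}. My plan is to fix an index $(i,k)$ for the $U$-block and work with the scalar function $\phi(t) = U_{ik}\exp(t\,\Xi_{U,ik}/U_{ik})$. The $V$-block then follows verbatim by replacing $U,\Xi_U,(i,k)$ with $V,\Xi_V,(j,k)$. Division by $U_{ik}$ is legitimate because $(U,V)\in\mathcal{M}\subset\mathcal{A}=\RR_{++}^{m\times r}\times\RR_{++}^{n\times r}$, so $U_{ik}>0$; this is the only place membership in $\mathcal{M}$ is used.

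\emph{Strict positivity.} $\phi(t)$ is the product of the positive number $U_{ik}$ with a value of $\exp$, which is positive for any real argument. Hence $\phi(t)>0$ for every $t$, not only $t\ge 0$, and no smallness of $t$ is needed.

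\emph{Centering.} At $t=0$ the exponent vanishes, so $\phi(0)=U_{ik}e^{0}=U_{ik}$.

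\emph{First-order consistency.} $\phi$ is smooth in $t$ as a composition of $\exp$ with an affine map. The chain rule gives
$\phi'(t)=U_{ik}\,(\Xi_{U,ik}/U_{ik})\exp(t\,\Xi_{U,ik}/U_{ik})=\Xi_{U,ik}\exp(t\,\Xi_{U,ik}/U_{ik})$,
which at $t=0$ equals $\Xi_{U,ik}$.

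Collecting entries yields $(d/dt)|_{t=0}(\bar U,\bar V)=(\Xi_U,\Xi_V)$. The tangency hypothesis $\Xi\in T_{(U,V)}\mathcal{M}$ is not needed for any of the three items; it matters only later, when the lemma is fed into the KL-projection step of Theorem~\ref{thm:retraction}. There is no real obstacle here. The only care required is to state that $t$ is one-sided ($t\ge 0$), so the derivative at $0$ is understood as the one-sided derivative, or equivalently as the two-sided derivative of the smooth extension $\phi$ to all $t\in\RR$.
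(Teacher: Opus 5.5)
Your proposal is correct and matches the paper's argument, which consists of exactly these entrywise computations: positivity of $U_{ik}$ times an exponential, $\exp(0)=1$ at $t=0$, and the chain rule giving $\Xi_{U,ik}$, with the same computations applied to the $V$ block. Your extra remarks, that the tangency hypothesis is not used and that positivity holds for all real $t$, are accurate refinements.
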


% \section{Proof of the Rank-Sufficiency Certificate (Proposition~\ref{prop:certificate})}

\label{sec:app_certificate}

\begin{proof}[Proof of the Rank-Sufficiency Certificate (Theorem~\ref{prop:certificate})]
We prove each part.

\emph{Part 1 (Balanced, global optimality and stationarity identity).}
The full (unrestricted-rank) balanced problem is $\min_{\Gamma \in \Omega(a,b),\, \Gamma \geq 0} f(\Gamma)$.
The KKT conditions are~\cite{BoydVandenberghe2004}:
$\nabla_\Gamma f(\Gamma^*) = \alpha\1^\top + \1\beta^\top + S$,
$S \geq 0$, $S \odot \Gamma^* = 0$, $\Gamma^* \in \Omega(a,b)$,
where $S$ is the slack matrix for non-negativity.
Since $f$ is convex, these are sufficient.
At a Riemannian critical point on $\Mbal^{(r)}$, the projector KKT system~\eqref{eq:KKT} yields multipliers $(\alpha^*, \beta^*, \gamma^*)$ such that
$\nabla_U f = \alpha^*\1^\top + \1\gamma^{*\top}$ and $\nabla_V f = \beta^*\1^\top - \1\gamma^{*\top}$. 
Taking Frobenius inner products with $U^*$ and $V^*$ and using $U^*\1 = a$, $V^*\1 = b$, and the column-sum-matching $(U^*)^\top\1 = (V^*)^\top\1 = s$ (as in~\eqref{eq:KKT}), the $\gamma^*$ terms cancel and we obtain
\begin{equation}
  \ip{\nabla_U f}{U^*} + \ip{\nabla_V f}{V^*} \;=\; a^\top\alpha^* + b^\top\beta^*.
  \label{eq:stationarity_id_bal}
\end{equation}
Applying the chain rule to $\Gamma = U\,\diag(s)^{-1}V^\top$ gives $\ip{\nabla_\Gamma f(\Gamma_r)}{\Gamma_r} = \ip{\nabla_U f}{U^*} + \ip{\nabla_V f}{V^*}$, hence $\ip{R}{\Gamma_r} = \ip{\nabla_\Gamma f(\Gamma_r)}{\Gamma_r} - a^\top\alpha^* - b^\top\beta^* = 0$, proving Part~(ii) in the balanced case. 
Now suppose $\delta_r \geq 0$, so that $R \geq 0$ entrywise. Set $S = R$. Then $S \geq 0$ and $\Gamma_r \in \Omega(a,b)$ are immediate, and complementary slackness follows because $\Gamma_{r,ij} \geq 0$, $R_{ij} \geq 0$, and $\sum_{ij}\Gamma_{r,ij}R_{ij} = 0$ together force $\Gamma_{r,ij}R_{ij} = 0$ entrywise. The KKT conditions are thus satisfied at $\Gamma_r$, and by convexity $\Gamma_r$ is globally optimal. 

\emph{Part 2 (Unbalanced, global optimality and stationarity identity).}
The full unbalanced problem is $\min_{\Gamma \geq 0} F(\Gamma) = f(\Gamma) + \rho_1\KL(\Gamma\1 \| a) + \rho_2\KL(\Gamma^\top\!\1 \| b)$.
There are no equality constraints. The KKT conditions reduce to:
$\nabla_\Gamma F(\Gamma^*) = S \geq 0$, $S \odot \Gamma^* = 0$.
Computing: $\nabla_\Gamma F(\Gamma) = \nabla_\Gamma f(\Gamma) + \rho_1 \log(\Gamma\1 \oslash a)\,\1^\top + \1\,[\rho_2 \log(\Gamma^\top\!\1 \oslash b)]^\top$.
Defining $\alpha^*_i = -\rho_1\log(p_i/a_i)$ and $\beta^*_j = -\rho_2\log(q_j/b_j)$, we get $\nabla_\Gamma F(\Gamma_r) = R = \nabla_\Gamma f(\Gamma_r) - \alpha^*\1^\top - \1\beta^{*\top}$. 
At a Riemannian critical point on $\Mub^{(r)}$, the projector~\eqref{eq:proj_ub} yields $\gamma^*$ such that $\nabla_U F = \1\gamma^{*\top}$ and $\nabla_V F = -\1\gamma^{*\top}$ (there are no marginal multipliers in the unbalanced manifold). Inner products with $U^*,V^*$ give $\ip{\nabla_U F}{U^*} + \ip{\nabla_V F}{V^*} = s^\top\gamma^* - s^\top\gamma^* = 0$, and the chain rule yields $\ip{R}{\Gamma_r} = 0$, proving Part~(ii) in the unbalanced case. 
If $\delta_r \geq 0$, then $R \geq 0$, and $\Gamma_{r,ij} \geq 0$ together with $\sum\Gamma_r R = 0$ forces $S \odot \Gamma_r = 0$ entrywise. Since $F$ is convex ($f$ convex plus strictly convex KL terms), the KKT conditions are sufficient and $\Gamma_r$ is globally optimal.

\emph{Part 3 (Sub-optimality).}
If $\delta_r = R_{i^*j^*} < 0$, then $(\alpha^*,\beta^*)$ fail dual feasibility for the LP-style dual of the full-rank problem, so they cannot certify global optimality of $\Gamma_r$. Concretely, for the unbalanced case $(\nabla_\Gamma F)_{i^*j^*} = R_{i^*j^*} < 0$, so moving mass into entry $(i^*,j^*)$ strictly decreases $F$. Convexity then implies $\Gamma_r$ is not a global minimizer. The balanced case is analogous via a feasible direction in $\Omega(a,b)$ that places additional mass on $(i^*,j^*)$ (a column-generation step).
\end{proof}

\begin{proof}[Proof of Corollary~\ref{cor:gap_bound}]
\emph{Balanced.} By convexity of $f$,
\begin{equation*}
  f(\Gamma_r) - f(\Gamma^*) \;\le\; \ip{\nabla_\Gamma f(\Gamma_r)}{\Gamma_r - \Gamma^*}.
\end{equation*}
Write $\nabla_\Gamma f(\Gamma_r) = \alpha^*\1^\top + \1\beta^{*\top} + R$. For any $\Gamma' \in \Omega(a,b)$, $\ip{\alpha^*\1^\top + \1\beta^{*\top}}{\Gamma'} = a^\top\alpha^* + b^\top\beta^*$, so the affine part contributes the same amount to $\ip{\cdot}{\Gamma_r}$ and $\ip{\cdot}{\Gamma^*}$ and cancels. Combined with Part~(ii) ($\ip{R}{\Gamma_r}=0$) we obtain
\begin{equation*}
  \ip{\nabla_\Gamma f(\Gamma_r)}{\Gamma_r - \Gamma^*} \;=\; -\ip{R}{\Gamma^*}.
\end{equation*}
Since $\Gamma^* \ge 0$ and $R_{ij} \ge \delta_r$ for all $(i,j)$,
\begin{equation*}
  -\ip{R}{\Gamma^*} \;\le\; -\delta_r\cdot \1^\top\Gamma^*\1 \;=\; -\delta_r \cdot \1^\top a,
\end{equation*}
using $\Gamma^* \in \Omega(a,b)$ in the last step. Since the left-hand side is non-negative (as $\Gamma^*$ is a global minimizer of the full-rank problem and $\Gamma_r$ is feasible for it), $-\delta_r$ can be replaced by $(-\delta_r)_+$, proving the claim.

\emph{Unbalanced.} By convexity of $F$ and Part~(ii) ($\ip{R}{\Gamma_r} = 0$),
\begin{equation*}
  F(\Gamma_r) - F(\Gamma^*) \;\le\; \ip{\nabla_\Gamma F(\Gamma_r)}{\Gamma_r - \Gamma^*} \;=\; \ip{R}{\Gamma_r} - \ip{R}{\Gamma^*} \;=\; -\ip{R}{\Gamma^*}.
\end{equation*}
Since $\Gamma^* \ge 0$ and $R_{ij} \ge \delta_r$, $-\ip{R}{\Gamma^*} \le -\delta_r \cdot \1^\top\Gamma^*\1$. The left-hand side is non-negative since $\Gamma_r$ is feasible for the full unbalanced problem and $\Gamma^*$ minimizes it. Hence $-\delta_r$ may be replaced by $(-\delta_r)_+$.
\end{proof}

% ======================================================================
\section{Solver Parameters and Implementation Details}
\label{sec:app_solver_params}

This section documents all solver parameters, initialization strategies, and implementation details used in the experiments.

\subsection{Initialization}
\label{sec:app_init}

\paragraph{Riemannian solvers (SD, CG, TR).}
The factors $(U,V)$ are initialized as follows:
\begin{itemize}[nosep]
  \item \emph{Balanced (default):} For each source point $i$, we sample a non-negative weight vector across the $r$ components from a symmetric Dirichlet distribution with all concentration parameters equal to~$1$: $w_i \sim \mathrm{Dir}(\1_r)$ (i.e., uniform on the $(r{-}1)$-simplex, implemented via \texttt{numpy.random.dirichlet(np.ones(r))}).
  Each $w_i$ is a random partition of unit mass into $r$ non-negative parts summing to~$1$.
  We then scale by the prescribed marginal to obtain $U_{i,:} = a_i \cdot w_i$, so that the row sum $\sum_k U_{ik} = a_i$ holds before projection.
  The target factor $V$ is initialized analogously: $V_{j,:} = b_j \cdot w_j'$ with independent $w_j' \sim \mathrm{Dir}(\1_r)$.
  This produces strictly positive factors with approximately correct marginals but mismatched column sums.
  The pair $(U,V)$ is then KL-projected onto $\Mbal$ via cyclic Sinkhorn (Algorithm~\ref{alg:sinkhorn}) to enforce exact feasibility: $U\1 = a$, $V\1 = b$, and $s(U) = s(V)$.
  This is the initialization used in the LOT benchmark (Sect.~\ref{sec:exp_lot_bench}), the Gaussian experiment (Appendix~\ref{sec:app_gaussian}), and the anisotropic GW experiment (Sect.~\ref{sec:app_anisotropic_gw}).

  \item \emph{Balanced ($k$-means, large-scale only):} For the very large-scale experiments (Sect.~\ref{sec:exp_verylargescale}), we use a $k$-means + Sinkhorn initialization inspired by LR-GW~\cite{ScetbonPeyreCuturi2022}: \texttt{sklearn.cluster.KMeans} with $r$ clusters is run on the source and target point clouds; $U_{ik}$ is proportional to the Sinkhorn transport mass from source point~$i$ to cluster center~$k$ (with regularization $\varepsilon_{\mathrm{init}} = 0.1$), and similarly for~$V$.
  The result is KL-projected onto the manifold.
  This geometry-aware initialization is used for both OT and GW at $n \ge 10{,}000$.

  \item \emph{Unbalanced:} Same Dirichlet initialization, scaled to roughly match the reference marginals: each row of $U$ is drawn from $\mathrm{Dir}(\1_r)$ and then multiplied by $a_i / (\text{row sum})$ to match the reference marginal (via \texttt{\_unbalanced\_marginal\_init} in the code).
  The column-sum matching $s(U) = s(V)$ is enforced via the closed-form geometric-mean scaling~\eqref{eq:retr_ub}: $t_k = \sqrt{s_k(V)/s_k(U)}$, $U \leftarrow U\,\diag(t)$, $V \leftarrow V\,\diag(t^{-1})$.
  No iterative Sinkhorn is needed.
  This is the initialization used in all unbalanced experiments (Sects.~\ref{sec:exp_outlier}--\ref{sec:exp_ub_gw_fgw}).
\end{itemize}

\paragraph{LOT and LR-GW~\cite{ScetbonCuturiPeyre2021,ScetbonPeyreCuturi2022}.}
We use the default initialization from the authors' code: $Q$ (source factor) is initialized as $Q_{ik} = a_i$ for all~$k$, and $R$ (target factor) as $R_{jk} = b_j$; the hub $g$ is initialized uniformly ($g_k = 1/r$). Random seed: $49$.

\paragraph{UB-LOT~\cite{ScetbonSejourne2023}.}
Same as LOT, with the hub $g$ initialized as $g_k = 1/r$.

\paragraph{FRLC and FRLC-GW~\cite{Halmos2024}.}
We use the default initialization from the authors' code, which initializes the latent coupling as a uniform matrix and runs a few warm-up Sinkhorn iterations.

\subsection{Convergence Criteria}
\label{sec:app_convergence}

All methods use the following convergence criteria unless otherwise noted:
\begin{itemize}[nosep]
  \item \emph{Relative cost change:} Terminate when $|f_t - f_{t-w}| / \max(|f_{t-w}|, 10^{-15}) < 10^{-6}$ over a window of $w = 5$ iterations, with a minimum of $25$ iterations before checking.
  \item \emph{Maximum outer iterations:} $1000$ for balanced experiments; $500$ for unbalanced experiments.
  \item \emph{Timeout:} $60$\,s default; $300$\,s for unbalanced experiments; $120$--$180$\,s for large-scale balanced experiments.
\end{itemize}

\paragraph{Riemannian-specific parameters.}
\begin{itemize}[nosep]
  \item \emph{Line search (SD, CG):} Armijo backtracking with sufficient decrease $\alpha = 10^{-4}$, contraction factor $\beta = 0.5$, initial step $t_0 = 1.0$, maximum $40$ backtracks.
  \item \emph{Trust-region (TR):} PyManopt defaults---minimum $3$ inner truncated-CG steps, acceptance threshold $\rho' = 0.1$, with Steihaug--Toint stopping when the trust-region boundary is hit or negative curvature is detected. Maximum $200$ outer iterations.
  \item \emph{Sinkhorn retraction tolerance (balanced):} $10^{-12}$ on marginal residual; maximum $1000$ sweeps.
  \item \emph{CG projection tolerance (balanced):} $10^{-12}$ on the Schur residual; maximum $50$ CG iterations. In the LOT benchmark (Sect.~\ref{sec:exp_lot_bench}), CG projection is enabled for $r \ge 50$; for smaller ranks the $r \times r$ Schur complement is solved by direct Cholesky. In the Gaussian (Appendix~\ref{sec:app_gaussian}) experiments, CG projection is used at all ranks. In the very large-scale experiments (Sect.~\ref{sec:exp_verylargescale}, $r{=}5$), the default Cholesky solve is used.
\end{itemize}

\subsection{Baseline-Specific Parameters}
\label{sec:app_baseline_params}

\paragraph{LOT and LR-GW.}
\begin{itemize}[nosep]
  \item \emph{Step-size parameter $\gamma_0$:} Swept over $\{0.1, 1, 5, 10, 100, 500\}$ in sensitivity experiments; default $\gamma_0 = 10$ in benchmark comparisons.
  \item \emph{Entropic regularization $\varepsilon$:} Swept over $\{0, 0.001, 0.005, 0.01, 0.02, 0.05\}$; default $\varepsilon = 0$ (unregularized).
  \item \emph{Inner Dykstra iterations (IBP):} Maximum $1000$ per outer iteration.
  \item \emph{Inner IBP tolerance:} $10^{-7}$ on the constraint residual.
\end{itemize}

\paragraph{UB-LOT.}
\begin{itemize}[nosep]
  \item \emph{Parameter $\gamma$:} Swept over $\{10, 70, 200\}$; this parameter jointly controls step size and effective marginal relaxation.
  \item \emph{Inner Dykstra iterations:} Maximum $1000$ per outer iteration.
\end{itemize}

\paragraph{FRLC and FRLC-GW.}
\begin{itemize}[nosep]
  \item \emph{Regularization $\gamma$:} Swept over $\{30, 70, 150, 300\}$ in sensitivity experiments; default $\gamma = 70$ or $90$.
  \item \emph{Inner Sinkhorn iterations:} Maximum $300$ per balanced sub-problem, $50$ per relaxed sub-problem.
  \item \emph{Maximum outer iterations:} $200$ (limited by memory).
\end{itemize}

\subsection{GW and FGW Cost Evaluation}
\label{sec:app_gw_eval}

\paragraph{Dense evaluation.}
For small problems ($m,n \le 500$), we form the $m \times m$ and $n \times n$ intra-cost matrices $C^X$, $C^Y$ explicitly and compute the GW cost $\tr(\Gamma^\top C^X \Gamma C^Y)$ via dense matrix products.

\paragraph{Factored (matrix-free) evaluation.}
For $m,n \ge 1000$, we use the factored squared-Euclidean formulation: if $C^X_{ik} = \|x_i - x_k\|^2$, then $C^X = \|X\|^2 \1^\top + \1 \|X\|^{2\top} - 2XX^\top$, and the quadratic form $\tr(\Gamma^\top C^X \Gamma C^Y)$ decomposes into $O(dr^2)$ inner products without materializing the $m \times m$ matrix.
This reduces memory from $O(m^2 + n^2)$ to $O((m+n)d)$ and time from $O(m^2 n + mn^2)$ to $O((m+n)dr^2)$ per iteration.

\subsection{Random Seeds and Reproducibility}
\label{sec:app_seeds}

All experiments use fixed random seeds for reproducibility:
\begin{itemize}[nosep]
  \item \emph{Data generation:} Seeds $42$, $43$, \ldots\ for multiple trials.
  \item \emph{Initialization:} Same seed as data generation.
  \item \emph{Averaging:} Where noted, results are averaged over $5$ seeds; standard errors are $< 3\%$ in all cases.
\end{itemize}
NumPy version 1.26, SciPy version 1.12, Python 3.11.

% ======================================================================
\section{Full Results for Main-Text Experiments}
\label{sec:app_main_full}

\subsection{LOT Benchmark: Full Results at $r \in \{50, 100\}$}
\label{sec:app_lot_full}

Table~\ref{tab:lot_benchmark_full} reports the LOT benchmark results at intermediate ranks omitted from Table~\ref{tab:lot_benchmark} in the main text.

\begin{table}[t]
\centering
\caption{Complement to Table~\ref{tab:lot_benchmark}: cost ratio $f/f^*$, marginal error, iterations, and time at intermediate ranks $r \in \{50, 100\}$ ($n{=}m{=}10\text{k}$, $f^*{=}0.287$).  As rank grows, Riemannian solvers increasingly outperform LOT on cost; LOT $\varepsilon{=}0.05$ remains stuck near ratio $2.0$.  \textbf{Bold}: best ratio per rank.}
\label{tab:lot_benchmark_full}
\small
\setlength{\tabcolsep}{2.5pt}
\begin{tabular}{@{}l cccc@{}}
\toprule
Method & Ratio $f/f^*$ & Marg.\ err & Iters & Time (s) \\
\midrule
\multicolumn{5}{@{}l}{\textbf{Rank $r{=}50$}} \\
Riem~SD  & $1.107$ & $2.8 \times 10^{-9}$  & $11$   & $142$ \\
Riem~CG  & $\mathbf{1.066}$ & $1.0 \times 10^{-12}$ & $86$   & $260$ \\
Riem~TR  & $1.121$ & $9.9 \times 10^{-13}$ & $26$   & $171$ \\
LOT $\varepsilon{=}0$       & $1.092$ & $2.2 \times 10^{-11}$ & $222$  & $251$ \\
LOT $\varepsilon{=}5{\times}10^{-3}$ & $1.127$ & $8.4 \times 10^{-12}$ & $1001$ & $1138$ \\
LOT $\varepsilon{=}0.05$    & $1.935$ & $6.6 \times 10^{-12}$ & $261$  & $55$ \\
\midrule
\multicolumn{5}{@{}l}{\textbf{Rank $r{=}100$}} \\
Riem~SD  & $1.051$ & $1.1 \times 10^{-8}$  & $161$  & $1001$ \\
Riem~CG  & $1.050$ & $5.8 \times 10^{-7}$  & $10$   & $208$ \\
Riem~TR  & $\mathbf{1.049}$ & $1.8 \times 10^{-9}$  & $78$   & $899$ \\
LOT $\varepsilon{=}0$       & $1.080$ & $8.5 \times 10^{-12}$ & $232$  & $290$ \\
LOT $\varepsilon{=}5{\times}10^{-3}$ & $1.121$ & $8.6 \times 10^{-12}$ & $1001$ & $1416$ \\
LOT $\varepsilon{=}0.05$    & $1.938$ & $6.8 \times 10^{-12}$ & $610$  & $192$ \\
\bottomrule
\end{tabular}
\end{table}

The intermediate ranks reveal a monotone trend.
At $r{=}50$, Riem~CG already leads ($1.066$ vs.\ LOT's $1.092$), and at $r{=}100$ Riem~TR takes over ($1.049$).
Combined with the main-text endpoints ($r{=}10$: near-parity, $r{=}500$: exact optimality for CG), the pattern is consistent: as rank grows, the Riemannian gradient on $\Mbal$ appears to provide increasingly effective descent directions compared to mirror descent as the tangent space grows.
The trade-off is wall-clock time, which grows with the $O(k_{\mathrm{CG}}\,rN)$ projection cost.

\subsection{Anisotropic GW: Full Results at $r{=}10$}
\label{sec:app_anisotropic_full}

Table~\ref{tab:anisotropic_gw_full} reports the anisotropic GW results at $r{=}10$ omitted from Table~\ref{tab:anisotropic_gw} in the main text.

\begin{table}[t]
\centering
\caption{Complement to Table~\ref{tab:anisotropic_gw}: GW cost, marginal error, iterations, and time at $r{=}10$ ($m{=}10\text{k}$, $n{=}20\text{k}$).  LR-GW ($\varepsilon{=}0$) achieves slightly lower cost but requires $277$\,s ($1001$ iters); Riem~SD reaches a comparable cost in $13$\,s ($14$ iters).  \textbf{Bold}: lowest cost.}
\label{tab:anisotropic_gw_full}
\small
\setlength{\tabcolsep}{2.5pt}
\begin{tabular}{@{}l cccc@{}}
\toprule
Method & GW cost & Marg.\ err & Iters & Time (s) \\
\midrule
Riem~SD  & $7510.5$ & $1.0 \times 10^{-12}$ & $14$   & $13$ \\
Riem~CG  & $7510.6$ & $9.9 \times 10^{-13}$ & $10$   & $10$ \\
Riem~TR  & $7515.2$ & $9.4 \times 10^{-13}$ & $187$  & $27$ \\
LR-GW $\varepsilon{=}0$      & $\mathbf{7507.1}$ & $1.9 \times 10^{-11}$ & $1001$ & $277$ \\
LR-GW $\varepsilon{=}0.005$  & $7575.2$ & $1.9 \times 10^{-11}$ & $269$  & $22$ \\
LR-GW $\varepsilon{=}0.01$   & $7614.3$ & $3.3 \times 10^{-12}$ & $27$   & $3$ \\
\bottomrule
\end{tabular}
\end{table}

At $r{=}10$, the cost gap between LR-GW and the best Riemannian solver is $0.045\%$ ($7507.1$ vs.\ $7510.5$). Comparing with the $r{=}20$ result in Table~\ref{tab:anisotropic_gw} ($0.027\%$), the gap narrows as rank grows.
This is consistent with the main-text observation that our Riemannian gradient becomes increasingly effective at higher ranks, where the tangent space better captures the coupling structure.
The wall-clock contrast remains stark at every rank: Riem~SD reaches its solution in $14$ iterations ($13$\,s), while LR-GW requires all $1001$ allocated iterations ($277$\,s).

% ======================================================================
\section{Additional Balanced Benchmarks}
\label{sec:app_bal_small}

\subsection{Linear OT: Cost vs.\ Rank ($m{=}50$, $n{=}60$)}
\label{sec:app_bal_lot_small}

Table~\ref{tab:lot_cost} reports the transport cost and marginal error for $(m,n) = (50,60)$ with Dirichlet marginals and squared-Euclidean cost (exact OT: $9.49 \times 10^{-4}$).
All methods use their default initialization; Riemannian results report the mean over 5 seeds.

\begin{table}[t]
\centering
\caption{Transport cost (left) and marginal error (right) vs.\ rank for small-scale balanced OT ($(m,n){=}(50,60)$, Dirichlet marginals, squared-Euclidean cost, exact OT $= 9.49{\times}10^{-4}$).  LOT leads at low rank; FRLC at high rank; Riem~CG is competitive throughout.  LOT achieves the tightest marginals; TR is next best.  \textbf{Bold}: best value per rank.  Riemannian entries are means over $5$ seeds.}
\label{tab:lot_cost}
\small
\begin{tabular}{@{}r ccccc | ccccc@{}}
\toprule
& \multicolumn{5}{c|}{Transport cost} & \multicolumn{5}{c}{Marginal error} \\
$r$ & SD & CG & TR & LOT & FRLC & SD & CG & TR & LOT & FRLC \\
\midrule
 2 & .0416 & .0416 & .0416 & $\mathbf{.0416}$ & .0416 & $7{\times}10^{-4}$ & $3{\times}10^{-4}$ & $\mathbf{1{\times}10^{-9}}$ & $3{\times}10^{-9}$ & $2{\times}10^{-5}$ \\
 3 & .0215 & .0214 & .0216 & $\mathbf{.0203}$ & .0212 & $1{\times}10^{-4}$ & $6{\times}10^{-4}$ & $\mathbf{2{\times}10^{-8}}$ & $1{\times}10^{-6}$ & $8{\times}10^{-8}$ \\
 5 & .0086 & .0082 & .0077 & $\mathbf{.0072}$ & .0081 & $6{\times}10^{-4}$ & $4{\times}10^{-4}$ & $2{\times}10^{-7}$ & $\mathbf{8{\times}10^{-9}}$ & $6{\times}10^{-5}$ \\
 8 & $\mathbf{.0040}$ & .0042 & .0043 & $\mathbf{.0040}$ & .0042 & $1{\times}10^{-3}$ & $9{\times}10^{-4}$ & $2{\times}10^{-6}$ & $\mathbf{5{\times}10^{-9}}$ & $3{\times}10^{-6}$ \\
10 & .0030 & .0033 & .0038 & $\mathbf{.0029}$ & .0032 & $3{\times}10^{-4}$ & $5{\times}10^{-4}$ & $2{\times}10^{-6}$ & $\mathbf{1{\times}10^{-8}}$ & $5{\times}10^{-7}$ \\
15 & .0022 & .0022 & .0037 & .0023 & $\mathbf{.0019}$ & $3{\times}10^{-4}$ & $5{\times}10^{-4}$ & $1{\times}10^{-7}$ & $\mathbf{2{\times}10^{-8}}$ & $7{\times}10^{-6}$ \\
20 & .0019 & .0021 & .0024 & .0023 & $\mathbf{.0016}$ & $1{\times}10^{-3}$ & $4{\times}10^{-4}$ & $7{\times}10^{-7}$ & $\mathbf{4{\times}10^{-9}}$ & $5{\times}10^{-5}$ \\
30 & .0017 & $\mathbf{.0013}$ & .0025 & .0023 & $\mathbf{.0013}$ & $4{\times}10^{-4}$ & $4{\times}10^{-4}$ & $2{\times}10^{-7}$ & $\mathbf{4{\times}10^{-9}}$ & $1{\times}10^{-6}$ \\
\bottomrule
\end{tabular}
\end{table}

At low ranks ($r \le 10$), LOT's many mirror-descent steps explore the small feasible set more thoroughly, giving it the edge on cost.
As rank grows, FRLC ($r \ge 15$) and Riem~CG ($r{=}30$) overtake LOT, reinforcing the main-text finding that our Riemannian formulation benefits disproportionately from higher-dimensional tangent spaces.
Marginal accuracy tells a complementary story: LOT's Dykstra projection enforces marginals to $10^{-8}$--$10^{-9}$, while TR achieves $10^{-7}$--$10^{-9}$. SD/CG marginals are looser ($10^{-3}$--$10^{-4}$) because the Sinkhorn retraction tolerance directly limits marginal precision.
This suggests that marginal accuracy in our framework is a tunable parameter (via retraction tolerance) rather than a fundamental limitation.

\subsection{GW: Cost vs.\ Rank ($m{=}30$, $n{=}25$)}
\label{sec:app_bal_gw_small}

Table~\ref{tab:gw_cost} reports GW cost and marginal error for $(m,n){=}(30,25)$, $d{=}3$, $k$-means initialization (exact GW: $10.82$).
FRLC-GW returns cost ${\approx}25$ at every rank (the marginal-only term), indicating convergence to a trivial coupling.

\begin{table}[t]
\centering
\caption{GW cost (left) and marginal error (right) vs.\ rank for small-scale balanced GW ($(m,n){=}(30,25)$, $d{=}3$, exact GW $= 10.82$).  Riem~TR wins at $4$ of $6$ ranks, demonstrating the value of second-order updates for this non-convex objective.  $^\dagger$FRLC-GW collapses to the marginal-only term (trivial coupling) at every rank.  \textbf{Bold}: lowest cost per rank (excluding FRLC$^\dagger$).}
\label{tab:gw_cost}
\small
\begin{tabular}{@{}r ccccc | cccc c@{}}
\toprule
& \multicolumn{5}{c|}{GW cost} & \multicolumn{5}{c}{Marginal error} \\
$r$ & SD & CG & TR & LR-GW & FRLC$^\dagger$ & SD & CG & TR & LR-GW & FRLC \\
\midrule
 2 & $\mathbf{20.95}$ & $\mathbf{20.95}$ & $20.96$ & 21.05 & 25.00 & $6{\times}10^{-4}$ & $6{\times}10^{-4}$ & $5{\times}10^{-9}$ & $6{\times}10^{-9}$ & $3{\times}10^{-16}$ \\
 3 & 18.81 & 18.81 & $\mathbf{18.69}$ & 19.07 & 25.00 & $3{\times}10^{-3}$ & $3{\times}10^{-3}$ & $1{\times}10^{-8}$ & $8{\times}10^{-7}$ & $4{\times}10^{-16}$ \\
 5 & 15.74 & 15.74 & $\mathbf{15.67}$ & 15.72 & 25.00 & $8{\times}10^{-6}$ & $8{\times}10^{-6}$ & $1{\times}10^{-7}$ & $9{\times}10^{-7}$ & $1{\times}10^{-16}$ \\
 8 & 14.46 & 14.37 & 13.91 & $\mathbf{12.74}$ & 25.00 & $8{\times}10^{-5}$ & $7{\times}10^{-5}$ & $9{\times}10^{-8}$ & $6{\times}10^{-8}$ & $1{\times}10^{-16}$ \\
10 & 13.31 & 13.44 & 13.58 & $\mathbf{13.29}$ & 25.00 & $5{\times}10^{-8}$ & $5{\times}10^{-8}$ & $4{\times}10^{-8}$ & $5{\times}10^{-7}$ & $7{\times}10^{-17}$ \\
15 & 13.15 & 13.16 & $\mathbf{12.37}$ & 12.46 & 25.00 & $2{\times}10^{-4}$ & $2{\times}10^{-4}$ & $4{\times}10^{-8}$ & $2{\times}10^{-6}$ & $5{\times}10^{-16}$ \\
\bottomrule
\end{tabular}
\end{table}

TR wins four of six ranks, with the advantage most pronounced at $r{=}15$ (cost $12.37$ vs.\ LR-GW's $12.46$, both $14$--$15\%$ above exact).
LR-GW dominates at $r{=}8$ and $r{=}10$, where its many mirror-descent steps explore a wider basin; this is consistent with the well-known trade-off that second-order methods converge faster near a local minimum but may not escape shallow attractors as readily as first-order iteration over many steps.
The overall pattern reinforces the main-text conclusion: curvature information is most valuable on non-convex objectives.

% ======================================================================
\subsection{Additional Balanced OT Benchmark: Gaussian Convergence ($n{=}m{=}5{,}000$)}
\label{sec:app_gaussian}

Source: $\mathcal{N}((1,1), I_2)$; target: $\mathcal{N}((0,0), 0.1 I_2)$; $n{=}5000$ samples; exact OT cost $f^* = 2.921$.
We also include FactoredOT~\cite{Forrow2019}, the original factored-coupling solver, at $r{=}10$ (it does not scale to higher ranks on this problem size).
Table~\ref{tab:gaussian} reports cost ratio $f/f^*$.

\begin{table}[t]
\centering
\caption{Cost ratio $f/f^*$, marginal error, iterations, and time for balanced OT on separated Gaussians ($\mathcal{N}((1,1), I_2) \to \mathcal{N}(0, 0.1 I_2)$, $n{=}m{=}5\text{k}$, $f^*{=}2.921$).  Riem~CG achieves the best ratio at $r \ge 100$; LOT exhausts its iteration budget at every rank.  FactoredOT~\cite{Forrow2019} appears only at $r{=}10$.  \textbf{Bold}: best ratio per rank.}
\label{tab:gaussian}
\small
\setlength{\tabcolsep}{2.5pt}
\begin{tabular}{@{}l cccc@{}}
\toprule
Method & Ratio $f/f^*$ & Marg.\ err & Iters & Time (s) \\
\midrule
\multicolumn{5}{@{}l}{\textbf{Rank $r{=}10$}} \\
Riem~SD  & $1.079$ & $4.4 \times 10^{-10}$ & $29$   & $51$ \\
Riem~CG  & $1.084$ & $3.9 \times 10^{-7}$  & $10$   & $34$ \\
Riem~TR  & $1.093$ & $5.3 \times 10^{-11}$ & $14$   & $23$ \\
LOT $\varepsilon{=}0$       & $1.079$ & $2.1 \times 10^{-11}$ & $1001$ & $52$ \\
FactoredOT~\cite{Forrow2019}     & $\mathbf{1.071}$ & $2.3 \times 10^{-10}$ & $31$  & $246$ \\
FRLC           & $1.346$ & $3.5 \times 10^{-11}$ & $200$ & $375$ \\
\midrule
\multicolumn{5}{@{}l}{\textbf{Rank $r{=}100$}} \\
Riem~SD  & $1.022$ & $2.1 \times 10^{-8}$  & $22$   & $130$ \\
Riem~CG  & $\mathbf{1.017}$ & $5.2 \times 10^{-7}$  & $51$   & $243$ \\
Riem~TR  & $1.029$ & $4.2 \times 10^{-12}$ & $27$   & $119$ \\
LOT $\varepsilon{=}0$       & $1.023$ & $2.7 \times 10^{-11}$ & $1001$ & $417$ \\
\midrule
\multicolumn{5}{@{}l}{\textbf{Rank $r{=}500$}} \\
Riem~SD  & $1.010$ & $4.3 \times 10^{-8}$  & $29$   & $1008$ \\
Riem~CG  & $\mathbf{1.008}$ & $2.9 \times 10^{-6}$  & $9$    & $820$ \\
Riem~TR  & $1.014$ & $6.3 \times 10^{-11}$ & $28$   & $1029$ \\
LOT $\varepsilon{=}0$       & $1.017$ & $3.0 \times 10^{-11}$ & $944$  & $1163$ \\
\bottomrule
\end{tabular}
\end{table}

This benchmark separates two well-concentrated Gaussians, so the optimal coupling is nearly deterministic and low-rank.
At $r{=}500$, CG closes to within $0.8\%$ of the LP optimum in just $9$ iterations, while LOT stalls at $1.7\%$ after $944$ iterations, illustrating how the Riemannian gradient exploits the concentrated structure more effectively than alternating mirror-descent updates.
At $r{=}10$, FactoredOT~\cite{Forrow2019} achieves the lowest ratio ($1.071$) but at $5{\times}$ the wall-clock cost of TR, reflecting a less scalable inner solver.
FRLC's ratio of $1.346$ at $r{=}10$ indicates that its entropy-regularized surrogate misses the sharp coupling structure.
Figure~\ref{fig:conv_gaussian} shows convergence at $r{=}500$; the curves confirm the pattern from the main-text LOT benchmark.
LOT with $\varepsilon{=}0.05$ is stuck at a high ratio due to entropic bias; lower $\varepsilon$ values converge toward the Riemannian level but more slowly.

\begin{figure}[t]
\centering
\includegraphics[width=0.85\linewidth]{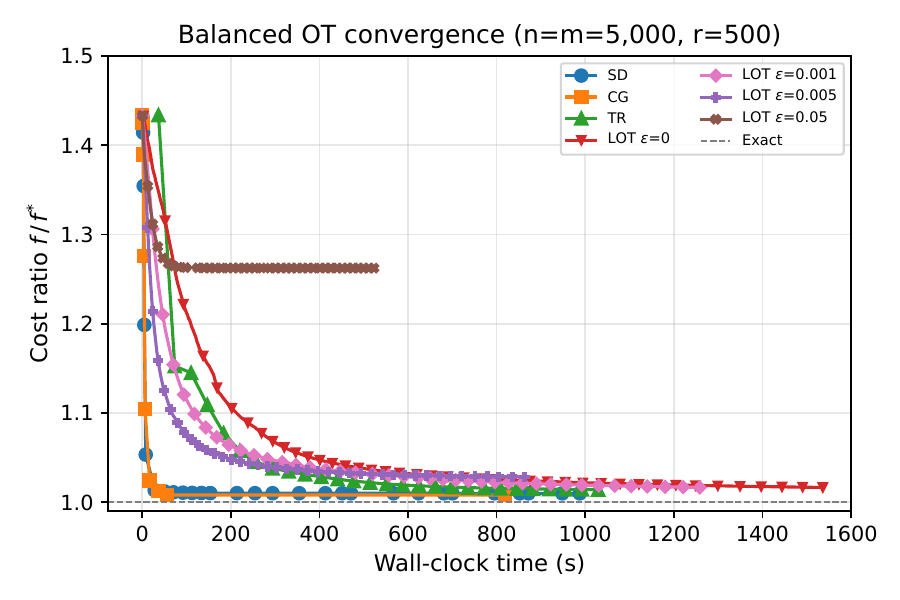}
\caption{Cost ratio $f/f^*$ vs.\ wall-clock time for balanced OT on separated Gaussians ($n{=}m{=}5\text{k}$, $r{=}500$).  All Riemannian solvers converge within $\sim\!20$ iterations; LOT ($\varepsilon{=}0$) decreases slowly over $944$ iterations and reaches a higher final ratio.  LOT $\varepsilon{=}0.05$ plateaus well above $1.0$ (entropic bias).  Dashed line: exact LP ($f/f^*{=}1$).}
\label{fig:conv_gaussian}
\end{figure}

\subsection{Additional Large-Scale Balanced OT and GW Experiments ($n{=}10{,}000$--$50{,}000$)}
\label{sec:exp_verylargescale}

We test all solvers at $n \in \{10\text{k}, 20\text{k}, 50\text{k}\}$ with $r{=}5$, $d{=}3$ on random Gaussian point clouds.
Our factored cost formulation avoids the dense $n{\times}n$ cost matrix entirely.
Table~\ref{tab:vlscale} reports results.

\begin{table}[t]
\centering
\caption{Cost and wall-clock time (s) for balanced OT and GW on random Gaussian point clouds at $n$ up to $50\text{k}$ ($r{=}5$, $d{=}3$).  For OT, LOT and Riem~TR reach similar cost but LOT is often faster in wall-clock time; FRLC yields much higher cost and runs out of memory at $50\text{k}$.  For GW, LR-GW achieves slightly better cost at each size but runs for $1001$ iterations.  \textbf{Bold}: best cost per size.  ``---'': unsupported; ``OOM'': out of memory.}
\label{tab:vlscale}
\small
\begin{tabular}{@{}l ccc | ccc@{}}
\toprule
& \multicolumn{3}{c|}{\textbf{Linear OT cost / time}} & \multicolumn{3}{c}{\textbf{GW cost / time}} \\
Method & $10$k & $20$k & $50$k & $10$k & $20$k & $50$k \\
\midrule
Riem~SD  & $3.16$ / $59$ & $3.19$ / $536$ & $3.20$ / $415$ & $-58.4$ / $72$ & $-59.0$ / $100$ & $-57.2$ / $42$ \\
Riem~CG  & $3.16$ / $64$ & $3.22$ / $103$ & $3.20$ / $316$ & $-58.0$ / $53$ & $-59.0$ / $19$ & $-57.4$ / $246$ \\
Riem~TR  & $3.13$ / $76$ & $\mathbf{3.17}$ / $222$ & $\mathbf{3.12}$ / $189$ & $-59.2$ / $173$ & $-60.4$ / $369$ & $-59.1$ / $745$ \\
LOT      & $\mathbf{3.12}$ / $29$ & $\mathbf{3.17}$ / $71$ & $3.13$ / $158$ & --- & --- & --- \\
FRLC     & $4.64$ / $65$ & $4.88$ / $198$ & OOM & --- & --- & --- \\
LR-GW    & --- & --- & --- & $\mathbf{-60.7}$ / $315$ & $\mathbf{-60.5}$ / $559$ & $\mathbf{-60.3}$ / $1206$ \\
\bottomrule
\end{tabular}

\medskip
{\footnotesize Each cell shows cost / time~(s). LR-GW uses factored squared-Euclidean cost evaluation (no dense $n{\times}n$ matrices).}
\end{table}

For linear OT, Riem~TR matches or slightly improves upon LOT's cost at $n{=}20\text{k}$ and $50\text{k}$.
However, LOT's wall-clock time is often lower at these low ranks ($r{=}5$), since its mirror-descent updates avoid the Schur-complement solve.
This highlights a practical trade-off specific to the balanced setting: our framework's per-iteration overhead (the tangent projection) is most justified when the rank is large enough for the Riemannian gradient to provide a meaningful quality advantage, as seen in the main-text experiments at $r \ge 50$.
FRLC's cost is $48$--$56\%$ higher across sizes and it runs out of memory at $n{=}50\text{k}$, confirming that its three-factor Sinkhorn sub-problems scale poorly.

For GW, the cost-agnostic design is validated: the same manifold code used for the linear-OT experiments above produces competitive GW results at all three scales without any modification beyond swapping the Euclidean gradient.
LR-GW achieves $0.2$--$2.5\%$ better cost at each size, but exhausts its full $1001$-iteration budget and takes roughly twice the wall-clock time.

% ======================================================================
\section{Extended Unbalanced Results}
\label{sec:app_ub}

\subsection{Full Outlier Robustness Results}
\label{sec:app_outlier_full}

Table~\ref{tab:outlier_full} reports all $18$ configurations for the outlier robustness experiment (Sect.~\ref{sec:exp_outlier}).
For each configuration, we show the best Riemannian solver (SD, CG, or TR, selected by lowest cost) along with UB-LOT at $\gamma \in \{10, 200\}$.

\begin{table}[t]
\centering
\caption{Full outlier robustness grid ($2$ sizes $\times$ $3$ contamination fractions $\times$ $3$ KL penalties $\tau$): unbalanced OT cost for each solver.  A Riemannian solver wins every configuration; UB-LOT$_{10}$ degrades sharply at high contamination (e.g., cost $7.62$ at $n{=}10\text{k}$, $\varepsilon_{\mathrm{out}}{=}0.20$).  \textbf{Bold}: lowest cost per configuration.}
\label{tab:outlier_full}
\small
\setlength{\tabcolsep}{2pt}
\begin{tabular}{@{}l l ccccc ccccc ccccc@{}}
\toprule
& & \multicolumn{5}{c}{$\tau{=}0.5$} & \multicolumn{5}{c}{$\tau{=}1$} & \multicolumn{5}{c}{$\tau{=}5$} \\
$n$ & $\varepsilon_\mathrm{out}$ & SD & CG & TR & LOT$_{10}$ & LOT$_{200}$ & SD & CG & TR & LOT$_{10}$ & LOT$_{200}$ & SD & CG & TR & LOT$_{10}$ & LOT$_{200}$ \\
\midrule
$5$k & $0.05$ & $\mathbf{.201}$ & $.201$ & $.202$ & $.517$ & $.416$ & $.241$ & $.242$ & $\mathbf{.239}$ & $.542$ & $.461$ & $.461$ & $\mathbf{.458}$ & $.472$ & $.746$ & $.703$ \\
$5$k & $0.10$ & $.223$ & $\mathbf{.222}$ & $.229$ & $.532$ & $.433$ & $.290$ & $\mathbf{.289}$ & $.292$ & $.584$ & $.503$ & $.715$ & $\mathbf{.712}$ & $.739$ & $.999$ & $.952$ \\
$5$k & $0.20$ & $.268$ & $.268$ & $\mathbf{.267}$ & $.531$ & $.454$ & $.389$ & $.392$ & $\mathbf{.388}$ & $.659$ & $.587$ & $\mathbf{1.24}$ & $1.25$ & $1.27$ & $1.65$ & $1.47$ \\
$10$k & $0.05$ & $.201$ & $\mathbf{.200}$ & $.215$ & $.515$ & $.443$ & $.243$ & $\mathbf{.240}$ & $.256$ & $.677$ & $.501$ & $\mathbf{.459}$ & $.460$ & $.489$ & $.813$ & $.717$ \\
$10$k & $0.10$ & $.224$ & $\mathbf{.223}$ & $.250$ & $.751$ & $.488$ & $\mathbf{.288}$ & $.292$ & $.297$ & $.851$ & $.541$ & $.715$ & $\mathbf{.716}$ & $.731$ & $1.32$ & $.966$ \\
$10$k & $0.20$ & $\mathbf{.267}$ & $.267$ & $.268$ & $7.62$ & $.488$ & $\mathbf{.386}$ & $.387$ & $.386$ & $3.95$ & $.614$ & $1.50$ & $\mathbf{1.25}$ & $1.26$ & $5.22$ & $1.48$ \\
\bottomrule
\end{tabular}
\end{table}

The full grid reveals several noteworthy patterns.
First, SD and CG perform nearly identically across all $18$ cells (within $1\%$), confirming that the closed-form projector and retraction on $\Mub$ make each Riemannian iteration so cheap that even the simplest first-order solver (SD) is highly effective.
TR occasionally lags at $n{=}10{,}000$ because its trust-region subproblem overhead is not amortized when SD/CG already converge in a handful of iterations.
Second, UB-LOT$_{10}$'s sensitivity to $\varepsilon_{\mathrm{out}}$ grows super-linearly: its cost degrades by $3\%$ at $5\%$ contamination but by $28{\times}$ at $20\%$ contamination ($n{=}10\text{k}$), illustrating the practical cost of having to tune the coupled $\gamma$ parameter that our parameter-free approach avoids entirely.
Riemannian runtimes are $0.7$--$10$\,s throughout (except TR at $n{=}5000$, high contamination: $303$--$1312$\,s), while UB-LOT$_{200}$ times out at $300$\,s for $n{=}10{,}000$.

\subsection{Full Partial Transport Results}
\label{sec:app_partial_full}

Table~\ref{tab:partial_full} reports all $9$ configurations for the partial transport experiment (Sect.~\ref{sec:exp_outlier}).

\begin{table}[t]
\centering
\caption{Partial transport ($3{\to}2$ Gaussian clusters in $\RR^5$, $r{=}5$): unbalanced cost across $3$ sizes and $3$ KL penalties.  UB-LOT$_{200}$ achieves lower cost at $\tau{=}0.5$ but times out at $m \ge 6000$; Riemannian solvers dominate at $\tau \ge 1$ and finish in under $10$\,s.  \textbf{Bold}: lowest cost per configuration.}
\label{tab:partial_full}
\small
\setlength{\tabcolsep}{2pt}
\begin{tabular}{@{}l cccc cccc cccc@{}}
\toprule
& \multicolumn{4}{c}{$\tau{=}0.5$} & \multicolumn{4}{c}{$\tau{=}1$} & \multicolumn{4}{c}{$\tau{=}5$} \\
$(m,n)$ & SD & CG & TR & LOT$_{200}$ & SD & CG & TR & LOT$_{200}$ & SD & CG & TR & LOT$_{200}$ \\
\midrule
$(3\text{k}, 2\text{k})$ & $.881$ & $.879$ & $.881$ & $\mathbf{.816}$ & $1.32$ & $1.32$ & $1.32$ & $\mathbf{1.31}$ & $\mathbf{2.72}$ & $2.72$ & $2.73$ & $2.72$ \\
$(6\text{k}, 4\text{k})$ & $.881$ & $.881$ & $.879$ & $\mathbf{.824}$ & $1.32$ & $\mathbf{1.32}$ & $1.32$ & $1.32$ & $\mathbf{2.71}$ & $2.76$ & $2.71$ & $2.74$ \\
$(15\text{k}, 10\text{k})$ & $.882$ & $.881$ & $.881$ & $\mathbf{.832}$ & $\mathbf{1.32}$ & $1.59$ & $1.32$ & $1.34$ & $\mathbf{2.74}$ & $2.74$ & $2.78$ & $2.77$ \\
\bottomrule
\end{tabular}

\medskip
{\footnotesize Riemannian runtime: $0.7$--$9$\,s; UB-LOT$_{200}$: $189$--$301$\,s (hits timeout at $m \ge 6000$).}
\end{table}

The partial-transport setting exposes an interesting regime where UB-LOT's iterative Dykstra sweeps are genuinely helpful.
At $\tau{=}0.5$ (strong marginal relaxation), the optimal coupling discards a large fraction of mass, and UB-LOT$_{200}$'s many inner iterations let it find a $5$--$7\%$ better solution across all sizes.
However, this advantage disappears as $\tau$ increases: at $\tau \ge 1$ the optimal coupling retains most mass, the landscape simplifies, and SD or TR match or beat UB-LOT while finishing in under $10$\,s (vs.\ $189$--$301$\,s for UB-LOT, which times out at $m \ge 6000$).
This transition illustrates that our closed-form retraction on $\Mub$ is most advantageous precisely when the coupling is dense (high mass retention), which is the common regime in practice.
CG's anomalous result at $(15\text{k}, 10\text{k})$, $\tau{=}1$ ($1.59$ vs.\ SD's $1.32$) likely reflects an unlucky CG restart direction; this is the only cell where CG deviates significantly.

\subsection{MNIST Digit Matching}
\label{sec:app_mnist_ub}

Digit~3 ($m{=}183$) vs.\ digit~8, PCA to $d{=}20$, rank $r{=}5$.
We test with the target subsampled at a $2{:}1$ ratio ($n{=}91$); uniform marginals.
Table~\ref{tab:mnist_ub} reports results.

\begin{table}[t]
\centering
\caption{Unbalanced OT on MNIST digit~3 ($m{=}183$) vs.\ digit~8 ($n{=}91$, $2{:}1$ subsample); PCA to $d{=}20$, $r{=}5$.  Riem~TR matches or beats UB-LOT on cost at every $\tau$ and runs in $0.4$\,s vs.\ $45$--$48$\,s.  FRLC keeps mass$\equiv 1.0$ (balanced method, does not adapt to imbalance).  \textbf{Bold}: lowest cost per $\tau$.}
\label{tab:mnist_ub}
\small
\begin{tabular}{@{}l ccc | ccc | ccc@{}}
\toprule
& \multicolumn{3}{c|}{$\tau{=}0.5$} & \multicolumn{3}{c|}{$\tau{=}1$} & \multicolumn{3}{c}{$\tau{=}5$} \\
Method & Cost & Mass & $t$(s) & Cost & Mass & $t$(s) & Cost & Mass & $t$(s) \\
\midrule
Riem~SD      & $.341$ & .66 & 0.4 & $.384$ & .81 & 0.4 & .428 & .96 & 0.4 \\
Riem~CG      & $\mathbf{.339}$ & .66 & 0.4 & $.380$ & .81 & 0.4 & .423 & .96 & 0.4 \\
Riem~TR      & $\mathbf{.339}$ & .66 & 0.4 & $\mathbf{.379}$ & .81 & 0.4 & .422 & .96 & 0.4 \\
UB-LOT$_{70}$ & .341 & .66 & 47.5 & .382 & .81 & 46.3 & $\mathbf{.421}$ & .96 & 45.2 \\
FRLC$_{70}$   & .424 & 1.00 & 99.2 & .430 & 1.00 & 91.7 & .436 & 1.00 & 87.4 \\
\bottomrule
\end{tabular}
\end{table}

On this small real-data problem ($m{=}183$, $n{=}91$), TR matches or beats UB-LOT at every KL penalty with no parameter to tune, while running over $100{\times}$ faster ($0.4$\,s vs.\ $45$--$48$\,s).
The transported mass column confirms that all methods agree on how much mass to retain at each $\tau$, so the cost difference reflects genuinely better coupling quality rather than a different mass regime.
This experiment also illustrates the cost-agnostic design: the same unbalanced manifold code used for the synthetic outlier and partial-transport experiments above handles real data without modification.
FRLC's mass stays fixed at $1.0$ regardless of $\tau$, since its Sinkhorn sub-problems enforce full marginals by construction, making it unsuitable for unbalanced problems ($4$--$24\%$ cost penalty).

\subsection{Extended Unbalanced GW: Partial Shape Matching}
\label{sec:app_ugw_shape}

Table~\ref{tab:ugw_shape_full} reports selected configurations for the sphere-to-hemisphere problem (4 representative settings from the $4$ sizes $\times$ $3$ penalties grid).

\begin{table}[t]
\centering
\caption{Partial shape matching via unbalanced GW: sphere ($m$ points in $\RR^3$) mapped to its upper hemisphere ($n$ points, scaled $\times 1.1$), $r{=}5$.  ``LHM'' denotes the fraction of transported mass landing on the lower hemisphere (ideally~$0.5$ for a uniform sphere).  No existing baselines support unbalanced GW at these sizes.  At $m{=}2000$, TR leverages curvature to find $20$--$31\%$ lower cost than SD/CG; at $m{=}5000$ all solvers converge to similar solutions.  \textbf{Bold}: lowest cost per configuration.}
\label{tab:ugw_shape_full}
\small
\setlength{\tabcolsep}{2.5pt}
\begin{tabular}{@{}l ccccc@{}}
\toprule
Solver & Cost & Mass & LHM & Iters & Time (s) \\
\midrule
\multicolumn{6}{@{}l}{\textbf{$m{=}200$, $n{=}100$ (dense), $\tau{=}5$}} \\
SD & $1.376$ & $0.757$ & $0.489$ & $500$ & $0.4$ \\
CG & $\mathbf{1.297}$ & $0.774$ & $0.460$ & $119$ & $0.1$ \\
TR & $1.316$ & $0.773$ & $0.469$ & $194$ & $0.4$ \\
\midrule
\multicolumn{6}{@{}l}{\textbf{$m{=}2000$, $n{=}1000$ (factored), $\tau{=}1$}} \\
SD & $0.923$ & $0.356$ & $0.475$ & $11$ & $5.9$ \\
CG & $0.923$ & $0.356$ & $0.475$ & $14$ & $7.5$ \\
TR & $\mathbf{0.734}$ & $0.465$ & $0.454$ & $190$ & $150$ \\
\midrule
\multicolumn{6}{@{}l}{\textbf{$m{=}2000$, $n{=}1000$ (factored), $\tau{=}20$}} \\
SD & $2.596$ & $0.878$ & $0.480$ & $9$ & $4.9$ \\
CG & $2.596$ & $0.878$ & $0.480$ & $14$ & $9.3$ \\
TR & $\mathbf{1.791}$ & $0.916$ & $0.478$ & $195$ & $225$ \\
\midrule
\multicolumn{6}{@{}l}{\textbf{$m{=}5000$, $n{=}2500$ (factored), $\tau{=}5$}} \\
SD & $\mathbf{1.943}$ & $0.673$ & $0.500$ & $7$ & $16.0$ \\
CG & $\mathbf{1.943}$ & $0.669$ & $0.500$ & $29$ & $57.5$ \\
TR & $\mathbf{1.943}$ & $0.673$ & $0.500$ & $9$ & $74.1$ \\
\bottomrule
\end{tabular}
\end{table}

This experiment isolates the value of second-order information on a non-convex, unbalanced objective where no baselines exist.
At $m{=}2000$, TR finds $20$--$31\%$ lower cost than SD/CG: the non-convex GW landscape likely contains multiple local basins, and the Hessian appears to help TR select more productive descent directions.
At $m{=}5000$, all three solvers converge to effectively the same cost, suggesting that at sufficient scale the random initialization already lands within the attraction basin of the global minimum, and the first-order gradient suffices to reach it.

\subsection{Extended Unbalanced FGW}
\label{sec:app_fgw_full}

Table~\ref{tab:fgw_full} reports all $\alpha \times \tau$ combinations for $(m,n){=}(300,200)$.

\begin{table}[t]
\centering
\caption{Full $(\alpha, \tau)$ grid for unbalanced FGW ($3{\to}2$ Gaussian clusters, $m{=}300$, $n{=}200$, $r{=}5$).  TR achieves the lowest or tied-lowest cost at every configuration, with the largest gains at low $\tau$ and high GW weight.  UB-LOT $\gamma$ variants give nearly identical costs but run in $37$--$65$\,s vs.\ $0.6$--$2.3$\,s for Riemannian.  Times in seconds.  \textbf{Bold}: lowest cost per $(\alpha, \tau)$.}
\label{tab:fgw_full}
\small
\setlength{\tabcolsep}{3pt}
\begin{tabular}{@{}l | cccc | cccc | cccc@{}}
\toprule
& \multicolumn{4}{c|}{$\tau{=}1$} & \multicolumn{4}{c|}{$\tau{=}5$} & \multicolumn{4}{c}{$\tau{=}20$} \\
Method & Cost & Orph & Mass & Time & Cost & Orph & Mass & Time & Cost & Orph & Mass & Time \\
\midrule
\multicolumn{13}{@{}l}{\emph{$\alpha{=}0.25$ (75\% GW, 25\% OT)}} \\
Riem~SD       & $\mathbf{.058}$ & .30 & .96 & 0.7 & $.095$ & .33 & .98 & 0.9 & $.100$ & .33 & 1.00 & 0.6 \\
Riem~CG       & $\mathbf{.058}$ & .30 & .97 & 0.6 & $.064$ & .33 & .99 & 0.9 & $.067$ & .33 & 1.00 & 1.1 \\
Riem~TR       & $\mathbf{.058}$ & .30 & .97 & 1.4 & $\mathbf{.062}$ & .33 & .99 & 2.2 & $\mathbf{.063}$ & .33 & 1.00 & 2.3 \\
UB-LOT$_{10}$ & $.076$ & .31 & .92 & 56.3 & $.079$ & .33 & .98 & 63.6 & $.079$ & .33 & 1.00 & 55.4 \\
UB-LOT$_{70}$ & $.076$ & .31 & .92 & 37.8 & $.079$ & .33 & .98 & 64.6 & $.079$ & .33 & 1.00 & 41.7 \\
\midrule
\multicolumn{13}{@{}l}{\emph{$\alpha{=}0.5$ (50\% GW, 50\% OT)}} \\
Riem~SD       & $.091$ & .28 & .94 & 0.6 & $.110$ & .32 & .99 & 0.9 & $.135$ & .33 & 1.00 & 0.4 \\
Riem~CG       & $.090$ & .28 & .94 & 0.7 & $.104$ & .33 & .99 & 0.6 & $.107$ & .33 & 1.00 & 0.7 \\
Riem~TR       & $\mathbf{.088}$ & .28 & .95 & 1.4 & $\mathbf{.096}$ & .32 & .99 & 1.9 & $\mathbf{.098}$ & .33 & 1.00 & 1.5 \\
UB-LOT$_{10}$ & $.091$ & .28 & .93 & 60.1 & $.099$ & .32 & .98 & 64.2 & $.101$ & .33 & 1.00 & 51.8 \\
UB-LOT$_{70}$ & $.091$ & .28 & .93 & 63.8 & $.099$ & .32 & .98 & 56.5 & $.101$ & .33 & 1.00 & 49.4 \\
\bottomrule
\end{tabular}
\end{table}

The full grid reveals how the interaction between $\alpha$ (OT/GW weight) and $\tau$ (marginal penalty) affects solver performance.
TR's advantage is largest at low $\tau$ and high GW weight ($\alpha{=}0.25$): strong marginal relaxation enlarges the feasible set, making the non-convex GW component dominant, and curvature information becomes critical for navigating the resulting landscape.
As $\tau$ grows toward $20$, the marginal penalty stiffens, the coupling approaches balanced OT, and all Riemannian solvers converge to similar costs; TR's edge narrows but persists.
UB-LOT $\gamma$ variants give nearly identical costs in every cell ($< 1\%$ difference across $\gamma$), indicating that UB-LOT's solution quality has saturated across the $\gamma$ values tested.  Wall-clock times remain $30$--$70{\times}$ slower than Riemannian ($30$--$70$\,s vs.\ $0.6$--$2.3$\,s).

% ======================================================================
\section{Hyperparameter Sensitivity}
\label{sec:app_sensitivity}

\subsection{Step-Size Sensitivity ($\gamma_0$)}
\label{sec:app_gamma}

A key practical advantage of Riemannian solvers is that they have no step-size or regularization parameter to tune.
To quantify the cost of hyperparameter selection for baseline methods, we fix a moderate-sized problem $(m,n,r,d) = (100,80,5,3)$ and sweep the step-size parameter $\gamma_0$ for LOT/LR-GW and the regularization parameter $\gamma$ for FRLC.

\paragraph{Linear OT.}
Table~\ref{tab:gamma_ot} shows that LOT's final cost swings from $2.90$ ($\gamma_0{=}10$) to $3.82$ ($\gamma_0{=}0.1$), a $32\%$ range.
FRLC similarly varies by $19\%$ across $\gamma$.
Neither method has a principled rule for choosing these parameters in advance; a practitioner must either perform a grid search or accept a potentially suboptimal setting.
The Riemannian solvers, which are parameter-free, achieve $2.88$--$2.99$ without any tuning.

\begin{table}[t]
\centering
\caption{Effect of step-size parameter on balanced linear OT cost ($(m,n,r,d){=}(100,80,5,3)$).  LOT cost varies by $32\%$ across $\gamma_0$; FRLC by $19\%$ across $\gamma$.  Riemannian solvers need no such parameter and achieve the lowest cost overall.  \textbf{Bold}: best cost per method family.}
\label{tab:gamma_ot}
\small
\begin{tabular}{@{}l c c@{}}
\toprule
\textbf{Method} & \textbf{Parameter value} & \textbf{OT cost} \\
\midrule
LOT ($\gamma_0{=}0.1$)  & $0.1$  & $3.82$ \\
LOT ($\gamma_0{=}1$)    & $1$    & $3.00$ \\
LOT ($\gamma_0{=}5$)    & $5$    & $\mathbf{2.90}$ \\
LOT ($\gamma_0{=}10$)   & $10$   & $\mathbf{2.90}$ \\
LOT ($\gamma_0{=}100$)  & $100$  & $3.03$ \\
LOT ($\gamma_0{=}500$)  & $500$  & $3.10$ \\
\midrule
FRLC ($\gamma{=}30$)    & $30$   & $3.99$ \\
FRLC ($\gamma{=}70$)    & $70$   & $3.60$ \\
FRLC ($\gamma{=}150$)   & $150$  & $\mathbf{3.37}$ \\
FRLC ($\gamma{=}300$)   & $300$  & $3.53$ \\
\midrule
Riem~SD  & --- & $2.89$ \\
Riem~CG  & --- & $2.99$ \\
Riem~TR  & --- & $\mathbf{2.88}$ \\
\bottomrule
\end{tabular}
\end{table}

\paragraph{Gromov--Wasserstein.}
Table~\ref{tab:gamma_gw} reports the GW cost vs.\ $\gamma_0$ for LR-GW.
The cost varies from $28.8$ ($\gamma_0{=}5$) to $32.5$ ($\gamma_0{=}0.1$), a $13\%$ range, and LR-GW crashes at $\gamma_0{=}500$ (NaN in the Dykstra projection).

\begin{table}[t]
\centering
\caption{Effect of step-size $\gamma_0$ on balanced GW cost ($(m,n,r,d){=}(100,80,5,3)$).  LR-GW cost varies by $13\%$ and crashes at $\gamma_0{=}500$ (NaN in Dykstra projection).  Riemannian solvers are parameter-free.  \textbf{Bold}: best cost per method family.}
\label{tab:gamma_gw}
\small
\begin{tabular}{@{}l c c@{}}
\toprule
\textbf{Method} & \textbf{$\gamma_0$} & \textbf{GW cost} \\
\midrule
LR-GW  & $0.1$  & $32.5$ \\
LR-GW  & $1$    & $29.1$ \\
LR-GW  & $5$    & $\mathbf{28.8}$ \\
LR-GW  & $10$   & $28.8$ \\
LR-GW  & $100$  & $29.5$ \\
LR-GW  & $500$  & NaN \\
\midrule
Riem~SD  & --- & $30.2$ \\
Riem~CG  & --- & $30.4$ \\
Riem~TR  & --- & $\mathbf{29.4}$ \\
\bottomrule
\end{tabular}
\end{table}

\subsection{Entropic Regularization Bias ($\varepsilon$)}
\label{sec:app_reg}

Entropic regularization smooths the optimization landscape but biases the solution away from the unregularized optimum.
To quantify this trade-off, we sweep $\varepsilon \in \{0, 0.001, 0.005, 0.01, 0.02, 0.05, 0.08, 0.095\}$ and report the \emph{unregularized} cost everywhere.

\paragraph{Linear OT.}
Table~\ref{tab:reg_ot} reveals a non-monotone relationship: a moderate $\varepsilon{=}0.02$ actually \emph{helps} LOT reach cost $2.77$ (below its $\varepsilon{=}0$ result of $2.90$), likely because the smoothing helps mirror descent escape a flat region.
However, this benefit is fragile: at $\varepsilon{\ge}0.08$ the entropic bias dominates and cost degrades to $4.85$ ($67\%$ above $\varepsilon{=}0$).
The Riemannian solvers sidestep this dilemma entirely, achieving $2.88$--$2.99$ with no regularization.

\begin{table}[t]
\centering
\caption{Effect of entropic regularization on unregularized balanced OT cost ($(m,n,r,d){=}(100,80,5,3)$).  A moderate $\varepsilon=0.02$ helps LOT ($2.77$, below $\varepsilon=0$), but $\varepsilon\ge0.08$ degrades cost by $67\%$.  Riemannian solvers, which use no regularization, achieve $2.88$--$2.99$.  \textbf{Bold}: best cost per family.}
\label{tab:reg_ot}
\small
\begin{tabular}{@{}l c c@{}}
\toprule
\textbf{Method} & $\boldsymbol{\varepsilon}$ & \textbf{OT cost} \\
\midrule
LOT  & $0$     & $2.90$ \\
LOT  & $0.005$ & $2.89$ \\
LOT  & $0.02$  & $\mathbf{2.77}$ \\
LOT  & $0.05$  & $3.18$ \\
LOT  & $0.08$  & $4.85$ \\
\midrule
Riem~SD  & --- & $2.98$ \\
Riem~CG  & --- & $3.16$ \\
Riem~TR  & --- & $\mathbf{2.92}$ \\
\bottomrule
\end{tabular}
\end{table}

\paragraph{Gromov--Wasserstein.}
Table~\ref{tab:reg_gw} shows a more dramatic effect: at $\varepsilon{\ge}0.02$, LR-GW saturates at cost $43.6$ (the marginal-only term), meaning the coupling ignores the cross-term entirely (trivial coupling).
FRLC-GW returns cost $43.6$ at every~$\gamma$.

\begin{table}[t]
\centering
\caption{Effect of entropic regularization on unregularized balanced GW cost ($(m,n,r,d){=}(100,80,5,3)$).  Even mild regularization ($\varepsilon = 0.005$) raises LR-GW's cost by $17\%$; at $\varepsilon \ge 0.02$ it collapses to the marginal-only term ($43.6$, trivial coupling).  FRLC-GW returns this trivial solution at every $\gamma$.  \textbf{Bold}: best cost per family.}
\label{tab:reg_gw}
\small
\begin{tabular}{@{}l c c@{}}
\toprule
\textbf{Method} & $\boldsymbol{\varepsilon}$ & \textbf{GW cost} \\
\midrule
LR-GW  & $0$     & $\mathbf{28.8}$ \\
LR-GW  & $0.005$ & $33.6$ \\
LR-GW  & $0.01$  & $34.4$ \\
LR-GW  & $0.02$  & $43.8$ \\
LR-GW  & $0.05$  & $43.6$ \\
\midrule
FRLC-GW  & all $\gamma$ & $43.6$ (trivial) \\
\midrule
Riem~SD  & --- & $\mathbf{30.1}$ \\
Riem~CG  & --- & $30.2$ \\
Riem~TR  & --- & $30.7$ \\
\bottomrule
\end{tabular}
\end{table}

\subsection{Retraction Tolerance and Schur Conditioning}
\label{sec:app_retraction}

The balanced projector requires solving an $(r{-}1) \times (r{-}1)$ Schur-complement system (Remark~\ref{rem:cg} in the main text).
A natural concern is whether this system becomes ill-conditioned as the problem grows.
The data below show that $\kappa(S_\mathrm{red})$ stays between $1.2$ and $1.3$ across a $13{\times}$ range in $m$ and a $5{\times}$ range in $r$, confirming that the CG solve converges in very few iterations and does not become a bottleneck.

\begin{center}
\begin{tabular}{@{}r r r c c@{}}
\toprule
$m$ & $n$ & $r$ & $\kappa(S_\mathrm{red})$ & $\lambda_\mathrm{min}$ \\
\midrule
 15 &  12 &  3 & $1.3$ & $4.1 \times 10^{-1}$ \\
 50 &  60 &  5 & $1.2$ & $3.0 \times 10^{-1}$ \\
100 &  80 & 10 & $1.3$ & $1.6 \times 10^{-1}$ \\
200 & 150 & 15 & $1.2$ & $1.1 \times 10^{-1}$ \\
\bottomrule
\end{tabular}
\end{center}

Varying the Sinkhorn retraction tolerance from $10^{-6}$ to $10^{-15}$ at $(m,n,r) = (50,60,5)$: final cost is stable ($0.0071$--$0.0072$) across all tolerances.

\begin{center}
\begin{tabular}{@{}c cccc@{}}
\toprule
Retr.\ tol & Cost & Marg.\ err & Time (s) & Iters \\
\midrule
$10^{-6}$  & 0.0071 & $1.0 \times 10^{-6}$ & 0.34 & 14 \\
$10^{-9}$  & 0.0072 & $3.6 \times 10^{-8}$ & 1.01 & 17 \\
$10^{-12}$ & 0.0072 & $8.3 \times 10^{-7}$ & 1.03 & 15 \\
$10^{-15}$ & 0.0072 & $2.9 \times 10^{-8}$ & 1.06 & 16 \\
\bottomrule
\end{tabular}
\end{center}

\end{document}